\documentclass[twoside,11pt]{article}

\usepackage[preprint]{jmlr2e}
\usepackage{amsmath}
\usepackage{tikz}
\usepackage{dsfont}
\usepackage{enumitem} 
\allowdisplaybreaks

\newcommand{\N}{\mathbb{N}}
\newcommand{\R}{\mathbb{R}}
\newcommand{\myE}[2]{\mathbb{E}_{#1}\left[ #2 \right]}
\newcommand{\myCond}[3]{\mathbb{E}_{#1}\left[\left. #2 \right\vert #3 \right]}
\newcommand*{\lmulti}{\{\mskip-5mu\{}
\newcommand*{\rmulti}{\}\mskip-5mu\}}
\newcommand*{\concat}{|\mskip-2mu|}
\newcommand{\circc}{\,\circ\,}
\newcommand{\RN}[1]{%
  \textup{\uppercase\expandafter{\romannumeral#1}}%
}

\usepackage{lastpage}
\jmlrheading{XX}{2026}{1-\pageref{LastPage}}{07/26}{X/XX}{XX-XXX}{Lukas Gonon, Thilo Meyer-Brandis, and Niklas Weber}

\ShortHeadings{Universality and Rates of GNNs}{Gonon, Meyer-Brandis, and Weber}
\firstpageno{1}

\begin{document}

\title{Universality and Approximation Rates of Graph Neural Networks with Random Features}

\author{\name Lukas Gonon \email lukas.gonon@unisg.ch \\
       \addr School of Computer Science\\
       University of St.\ Gallen\\
       St. Gallen, 9000, Switzerland
       \AND
       \name Thilo Meyer-Brandis \email meyerbra@math.lmu.de \\
       \addr Department of Mathematics\\
       LMU Munich\\
       Munich, 80333, Germany
       \AND
       \name Niklas Weber \email weber@math.lmu.de \\
       \addr Department of Mathematics\\
       LMU Munich\\
       Munich, 80333, Germany
       }

\editor{My editor}

\maketitle

\begin{abstract}
We investigate message-passing graph neural networks with random node features.
Random node features are known to enhance the expressiveness of graph neural networks (GNNs) both theoretically and empirically. Here, we establish a novel universality result focusing on permutation-equivariant neural networks (PENNs), a class of GNNs built from feedforward neural network components that subsumes many prominent GNN architectures. We show that PENNs, combined with partially random node features, can approximate arbitrarily well in probability any measurable permutation-invariant or permutation-equivariant function on directed graphs of fixed size with multidimensional node and edge features. 
For $k$-times continuously differentiable functions, $k\geq 2$, we also derive upper bounds on the approximation rates, relating the complexity of the feedforward components of a PENN in terms of layer depth and number of nonzero weights to the desired approximation accuracy.
\end{abstract}

\begin{keywords}
  graph neural network, edge feature, universal approximation, approximation rate, random feature
\end{keywords}

\section{Introduction}

Graph neural networks (GNNs), and in particular message-passing GNNs, have received significant attention in recent years. 
Despite a limited theoretical understanding of the functions GNNs can approximate, they have been successfully applied across many domains where data naturally take the form of graphs, including (bio)chemistry \citep{duvenaud2015convolutional,gilmer2017neural}, social sciences \citep{wu2020graph}, and financial applications \citep{gonon2024computing,wang2021review,zhang2025forecasting,sukharev2020ews,zheng2021heterogeneous,wang2021temporal}, among others. 
For broader surveys, see, for example, \cite{zhou2020graph} or \cite{corso2024graph} and the references therein.

The origins of message-passing GNNs lie in the aggregate-and-combine (AC) framework \citep{barcelo2020logical, battaglia2018relational, gilmer2017neural}. 
AC-GNNs consist of message-passing layers that, at each step and for each node $i$, aggregate information from a node's neighborhood $\mathcal{N}(i)$ in a permutation-invariant way and then combine it with the node's current representation to produce an updated one. 
Let $N \in \mathbb{N}$ be the number of nodes, and let $\lmulti \ldots \rmulti$ denote a multiset. 
In the $L$-th message-passing round, an AC-GNN computes the new representation $h^{(L)}_i \in \mathbb{R}^{d_L}$ of node $i \in \{1,\ldots,N\}$ as
\begin{align}\label{eq:AC_GNN}
    h_i^{(L)} 
    = \operatorname{COMBINE}\!\left(
        h_i^{(L-1)},\,
        \operatorname{AGGREGATE}\!\left(\lmulti h_j^{(L-1)} \mid j \in \mathcal{N}(i) \rmulti\right)
      \right).
\end{align}
Even with the most general function choices for \textsc{AGGREGATE} and \textsc{COMBINE}, it is well known that AC-GNNs cannot distinguish certain graph structures. 
In each layer, each node's representation depends only on its neighborhood, and there exist non-isomorphic graphs for which AC-GNNs produce identical multisets of node representations. 
Such graphs are therefore indistinguishable. For the reader's convenience,  we provide illustrative examples in Appendix~\ref{appendix:graph_examples}.
This limitation is closely related to the iterative color-refinement procedure of the 1-Weisfeiler--Leman (1-WL) test \citep{weisfeiler1968reduction}, a widely used heuristic for graph isomorphism. 
As shown by \cite{jegelka2018powerful} and \cite{morris2019weisfeiler}, AC-GNNs are at most as powerful at distinguishing graphs as the 1-WL test.

To make GNNs more powerful at distinguishing graphs, two main strategies have emerged. One is to design architectures that go beyond the AC framework, enabling GNNs to capture more complex structural patterns. The other is to enrich node features with additional information so that even simple message passing can produce expressive representations.
Both approaches will be discussed in more detail later. Illustrations of simple examples where enriching the node features or altering the architecture improves GNN capabilities can be found in Appendix~\ref{appendix:graph_examples}.
 
In this work, we focus on the GNN variant of \emph{permutation-equivariant neural networks} (PENNs) introduced in \cite{herzig2018mapping} and extended in \cite{gonon2024computing}, which generalizes many popular architectures used in practice (see Remark~\ref{rem:can_approx_other_GNN}). 
We extend this framework by augmenting node features with random features and study the capability of PENNs to approximate certain functions.
Our main results are as follows.

As a first contribution, we show that  
PENNs with random features can approximate any permutation-invariant or permutation-equivariant function defined on directed graphs of fixed size with multidimensional node and edge features arbitrarily well with arbitrarily high probability.
This universal approximation result builds on classical approximation theorems for feedforward neural networks \citep{leshno1993multilayer,cybenko1989approximation,hornik1991approximation}, with the additional constraint that on graphs it is natural to restrict attention to permutation-invariant or permutation-equivariant target functions. 
Compared to existing universality results for message-passing GNNs with random features \citep{abboud2020surprising,puny2020globalattention}, our main theorem (Theorem~\ref{thm:universality_RGNN}) provides a substantial improvement. 
We specify a concrete architecture (PENNs) for which universality holds, rather than merely asserting the existence of some suitable architecture. 
We treat both permutation-invariant and permutation-equivariant functions on directed graphs, approximate measurable rather than continuous functions, impose no compactness or finiteness assumptions on node or edge features, and our framework accommodates multidimensional node and multidimensional edge features.
Furthermore, the random features we consider are very general and cover both independent and identically distributed (i.i.d.) random features used for example in \cite{sato2021random,abboud2020surprising,puny2020globalattention} and other forms of unique node identifiers \citep{murphy2019relational,dasoulas2019coloring,loukas2019graph}.  In particular, our results also hold for fixed node IDs that, for example, arise naturally when computing systemic risk measures of financial networks \citep{gonon2024computing}.
Recently, it has been noted that random features might harm generalization capabilities of GNNs most likely due to overfitting on these features and losing invariance with respect to them \citep{bechler2024utilization}. Our discussion of permutation equivariance in expectation suggests that averages of random-feature PENNs where the random features are resampled multiple times might be an approach to mitigate this issue. We prove that these models exhibit a bias towards permutation equivariance and still have the universal approximation property (Theorem~\ref{thm:universality_average_RGNN}).

As our second contribution, we provide approximation rates for PENNs with random features. For this, we restrict our attention to sufficiently regular functions and restrict node and edge features to the compact interval $[0,1]$. In this setting, we prove that PENNs with random features uniformly approximate permutation-invariant or permutation-equivariant functions arbitrarily well with arbitrarily high probability \emph{and}
we derive upper bounds for the complexity of the PENN components, which are feedforward neural networks, in terms of layer depth and number of nonzero weights.
Our approximation rates for PENNs extend existing approximation error bounds for feedforward neural networks \citep{barron2002universal,siegel2020approximation,yarotsky2017error,guhring2020error} and 
provide an important foundation for understanding the learning capabilities of these models. 
Since hyperparameters such as the number of layers must be chosen prior to training, any theoretical guidance on reasonable choices improves the performance of neural-network-based methods.

This article is structured as follows.  
The next section introduces notation and the domain of graphs.  
Section~\ref{section:gnns_and_random_features} introduces PENNs and random features. Section~\ref{section:universality} presents the universal approximation results for permutation-invariant and permutation-equivariant functions, and discusses permutation equivariance in expectation.  
Section~\ref{section:rates} contains our approximation-rate results.  
We draw a conclusion in Section~\ref{sec:conclusion} and, finally, the Appendix provides the auxiliary results regarding the proofs of our theorems, background information on feedforward neural networks, and some illustrations regarding the shortcomings of simple message passing.

\section{Notation and Domain of Graphs}
\paragraph{Notation}
Let $\N= \{1,2,\ldots\}$ denote the natural numbers, $\N_0 := \N \cup \{0\}$, and $\N_{\ge k}$ the natural numbers starting from $k \in \N$.
Let $N,N' \in \N$ and $p \geq 1$. We write $[N] := \{1,\ldots,N\}$.
For real matrices and vectors, $\|\cdot\|_{\ell^0}$ counts the number of nonzero entries.
For $a \in \R^N$, we define
$\|a\|_p := (\sum_{i=1}^N |a_i|^p )^{1/p}$ and 
$\|a\|_\infty := \max_{i \in [N]} |a_i|$.
Similarly, for $A \in \R^{N \times N'}$,
$
    \|A\|_p := (\sum_{i=1}^N\sum_{j=1}^{N'} |A_{ij}|^p )^{1/p},
$ and $
    \|A\|_\infty := \max_{i \in [N],j\in[N']} \left(|A_{i,j}|\right).
$

Let $d_1,\ldots,d_N,d_x,d_y \in \N$. For tensors $X \in \R^{d_1 \times ... \times d_N \times d_x}$ and $Y \in \R^{d_1 \times ... \times d_N \times d_y}$, we denote concatenation along the last dimension (feature dimension) by $X \concat Y \in \R^{d_1 \times ... \times d_N \times (d_x+d_y)}$.

We denote the complement of a set $\mathcal{A}$ by $\mathcal{A}^c$. A multi-set $\lmulti \ldots \rmulti$ is a set in which elements may appear multiple times.

For a probability space $(\Omega, \mathcal{F},\mathbb{P})$, let $L^0(\R)  :=L^0(\Omega, \mathcal{F},\mathbb{P}; \mathbb{R})$ denote the space of measurable functions $f:\Omega \to \R$ where functions that agree almost everywhere are identified. 
For $p > 0$, let $L^p(\R) :=L^p(\Omega, \mathcal{F},\mathbb{P}; \mathbb{R})$ denote the space of $f \in L^0(\R)$  with finite $p$-norm $\|f\|_{L^p(\R)} := (\int_\Omega |f(\omega)|^p d\mathbb{P}(\omega))^{1/p}$. Similarly, we define $L^\infty(\R)$ using the essential supremum $\|f\|_{L^\infty(\R)}:= \operatorname{ess\,sup} |f|$.
For $f = (f_1,\ldots,f_N)$, we write $f \in L^p(\R^N)$ if each $f_i \in L^p(\R)$.

For an open set $U \subseteq \R^d$ and $1 \le p \le \infty$, we write $\mathcal{L}^p(U)$ for the classical Lebesgue space of (equivalence classes of) real-valued measurable functions on $U$ with respect to Lebesgue measure, with norm $\|f\|_{\mathcal{L}^p(U)} := \left(\int_U |f(x)|^p \, dx\right)^{1/p}$ for $p<\infty$ and $\|f\|_{\mathcal{L}^\infty(U)} := \operatorname{ess\,sup}_{x \in U}|f(x)|$ for $p=\infty$. This is distinct from the probability-space notation $L^p(\R)$ above.

For $\beta \in \mathbb{N}_0^d$ we use multi-index notation $|\beta|:=\beta_1+\ldots+\beta_d$.
For $n \in \mathbb{N}_0 \cup\{\infty\}$ let $C^n(U)$ denote the space of $n$-times continuously differentiable real-valued functions on $U$.
For $f \in C^n(U)$ and $\beta \in \mathbb{N}_0^d$ with $|\beta| \leq n$, we write
$$
D^\beta f:=\frac{\partial^{|\beta|} f}{\partial x_1^{\beta_1} \partial x_2^{\beta_2} \cdots \partial x_d^{\beta_d}}.
$$
Let $n \in \mathbb{N}_0$ and $1 \leq p \leq \infty$. 
Following \cite{brezis2011functional}, we define the Sobolev space
$$
     W^{n, p}(U):=\left\{f \in \mathcal{L}^p(U) \mid D^\beta f \in \mathcal{L}^p(U) 
     \text{ for all } \beta \in \mathbb{N}_0^d \text{ with }|\beta| \leq n\right\}.
$$
For $1 \le p < \infty$, we set
$$
\|f\|_{W^{n, p}(U)}:=\left(\sum_{0 \leq|\beta| \leq n}\left\|D^\beta f\right\|_{\mathcal{L}^p(U)}^p\right)^{1 / p}
$$
and for $p=\infty$,
$$
\|f\|_{W^{n, \infty}(U)}:=\max _{0 \leq|\beta| \leq n}\left\|D^\beta f\right\|_{\mathcal{L}^{\infty}(U)}.
$$

For functions $f:\mathcal{X} \to \mathcal{Y}$ and $g:\mathcal{Y} \to \mathcal{Z}$, we write $g \circ f:\mathcal{X} \to \mathcal{Z}$ for their composition.
A function $f: U \rightarrow \mathbb{R}^m$  is Lipschitz continuous (with respect to any norms $\|\cdot\|_U$ on $U$ and $\|\cdot\|_{\R^m}$ on $\R^m$) if there is $L>0$ such that
$
\|f(x)-f(y)\|_{\R^m} \leq L\|x-y\|_U
$
for all $x, y \in U$. 

\paragraph{The Domain of Graphs}
We represent directed, weighted, and featured graphs with $N \in \mathbb{N}$ nodes, node feature dimension $d \in \mathbb{N}$, and edge feature dimension $d' \in \mathbb{N}$ as the tuple of node and edge features $(x,a) \in \mathcal{D}_{N,d,d'} := \mathbb{R}^{N \times d} \times \mathbb{R}^{N \times N \times d'}$. If each feature is restricted to a compact interval $K \subset \mathbb{R}$ (for example $K=[0,1]$), we write $\mathcal{D}_{N,d,d'}^{K} := K^{N \times d} \times K^{N \times N \times d'}$.
By including a binary flag in the node features indicating whether a node is present, this representation can also encode graphs with fewer than $N$ nodes. 
Similarly, by including a binary flag in the edge features indicating whether an edge exists (equivalently, by reserving one slice of $a$ as the adjacency matrix), we can distinguish ``no edge'' from ``zero-valued edge features.'' In terms of structure the only assumption is that there are no self-loops present, i.e., no edges from node $i$ to itself for any $i \in [N]$.

The above is not canonical, since graph nodes have no intrinsic ordering.  
Thus, multiple tensor representations may correspond to the same underlying graph.  
To formalize this, let $S_N$ be the set of all permutations of $[N]$.  
For $\sigma \in S_N$ and a graph $g = (x,a) \in \mathcal{D}_{N,d,d'}$, we define $\sigma(g) = (\sigma(x), \sigma(a))$, where $\sigma(x)_{i,k} = x_{\sigma^{-1}(i),k}$ and $\sigma(a)_{i,j,k} = a_{\sigma^{-1}(i),\,\sigma^{-1}(j),\,k}$, for $i,j \in [N], k \in [d']$.
To ensure that functions defined on graphs do not depend on the arbitrary ordering of nodes, we require permutation equivariance or permutation invariance.  
For $l \in \mathbb{N}$, a node-labeling function $f:\mathcal{D}_{N,d,d'} \to \mathbb{R}^{N \times l}$ is \emph{permutation equivariant} if $\sigma(f(g)) = f(\sigma(g))$ for all $\sigma \in S_N,\ g \in \mathcal{D}_{N,d,d'}$.
A graph-labeling function $f:\mathcal{D}_{N,d,d'} \to \mathbb{R}^{l}$ is \emph{permutation invariant} if $f(\sigma(g)) = f(g)$ for all $\sigma \in S_N,\ g \in \mathcal{D}_{N,d,d'}$.
Finally, for a graph $g = (x,a)$, we define the neighborhood of node $i \in [N]$ as $\mathcal{N}_g(i) := \{j \in [N] \mid a_{j,i} \neq 0 \}$.
When the graph is clear from context, we simply write $\mathcal{N}(i)$.

\section{Message-Passing GNNs and Random Node Features}\label{section:gnns_and_random_features}
In this section, we specify the family of GNNs referred to as permutation equivariant neural networks (PENNs), introduce random node features, and relate the setting to existing literature.

\subsection{Permutation-Equivariant Neural Networks}

We begin by introducing the PENN architecture and explain its connection to other GNNs in the literature.
Like many other GNN types, PENNs are an extension of the AC framework in~\eqref{eq:AC_GNN} with the aim to increase expressivity and predictive performance.\footnote{In this context it is noteworthy that, while a certain level of expressivity is clearly needed, a growing body of work explores whether expressivity affects generalization (the ability to predict well on unseen data); see for example \cite{maskey2026graph}.}
Other examples of architectures that go beyond the AC framework include models that incorporate higher-order neighborhoods \citep{morris2019weisfeiler} or high-dimensional tensor representations of graphs \citep{maron2018invariant,maron2019_on_universality,keriven2019universal,maron2019provably}.  
Although these approaches are theoretically powerful, they typically require prohibitive memory, and practical variants lose much of their theoretical expressiveness.
PENNs, on the other hand, are memory efficient because they only require message passing within 1-hop neighborhoods.
Originally proposed in~\cite{herzig2018mapping} and extended in~\cite{gonon2024computing}, we define PENNs as follows.

\begin{definition}\label{def:PENN} 
Let $\rho, \phi, \alpha, \psi$ be feedforward neural networks (as in Definition~\ref{def:FNN}) with compatible input and output dimensions.  
A function $f : \mathcal{D}_{N,d,d'} \to \R^{N \times l}$ with $\R^l$-valued components $(f_1,\ldots,f_N)$ is called a \emph{permutation-equivariant neural network (PENN)} if, for every input $(x,a) \in \mathcal{D}_{N,d,d'}$ and all $k \in [N]$,
\begin{align*}
    f(x,a)_k 
    := \rho\!\left(
        x_k,\ 
        \sum_{i \in \mathcal{N}(k)} \phi(x_k, a_{i,k}, x_i),\ 
        \sum_{j \in [N]} \alpha\!\left(x_j,\ \sum_{i \in \mathcal{N}(j)} \psi(x_j, a_{i,j}, x_i)\right)
    \right).
\end{align*}
\end{definition}
Definition~\ref{def:PENN} defines only single-layer PENNs. We remark that we can construct multi-layer PENNs, for example a two-layer PENN, from two networks $\operatorname{PENN}^{(1)}$ and $\operatorname{PENN}^{(2)}$ by considering the composition $\operatorname{PENN}^{(2)}(\operatorname{PENN}^{(1)}(x,a), a)$.

Next we explain how the family of PENNs can be obtained as an extension of the AC framework and we show that many commonly used GNN types are covered by the PENN architecture.

\paragraph{Relation to GNNs in the literature}
In the AC framework it is common to apply a permutation-invariant \textsc{READOUT} function as a final layer to compute a graph-level representation from all node representations \emph{after} message-passing.  
Subsequent work proposed incorporating such \textsc{READOUT} layers (or ``global attribute blocks'' in \cite{battaglia2018relational}) into \emph{every} message-passing step.  
\citet{barcelo2020logical} refer to these architectures as aggregate--combine--readout (ACR) GNNs and show that they can approximate a strictly larger class of graph functions than AC-GNNs.
An ACR-GNN updates the representation of node $i$ in the message-passing step $L \in \mathbb{N}$ via
\begin{align*}
     h_i^{(L)} &= \operatorname{COMBINE}\left(h_i^{(L-1)}, \operatorname{AGGREGATE}( \lmulti h_j^{(L-1)} \mid j \in \mathcal{N}(i) \rmulti),\right.\\
     &\left.
     \phantom{== \operatorname{COMBINE}(h_i^{(L-1)}}
     \operatorname{READOUT}(\lmulti h_j^{(L-1)} \mid j \in [N] \rmulti) \right).
\end{align*}
Originally, GNNs were designed for undirected graphs with node features or no features at all. In contrast, we study functions on directed graphs with both node and edge features. 
Therefore, instead of the multiset $\lmulti h_j^{(L-1)} \mid j \in \mathcal{N}(i) \rmulti$, which only contains representations of neighboring nodes, we consider the multiset $\lmulti (h_i^{(L-1)}, a_{j,i}, h_j^{(L-1)}) \mid j \in \mathcal{N}(i) \rmulti$ which includes the representation of the neighboring node $h_j^{(L-1)}$, the root node $h_i^{(L-1)}$, and the edge features $a_{j,i}$ of the edge from the neighbor to the root.\footnote{We work with directed graphs. Unless stated otherwise, neighborhoods refer to in-neighborhoods. This means $v$ is a neighbor of $u$ if there is an edge from $v$ to $u$. This choice is not canonical, and the theory can be reformulated for out-neighborhoods where  $v$ is a neighbor of $u$ if there is an edge from $u$ to $v$. In practice, one choice may perform better than the other depending on the learning task.} This extension is necessary if we expect the GNN to capture edge features and edge directions. Such GNNs compute new node representations as
\begin{align}
\begin{split}
\bar h_i^{(L)} &= \operatorname{AGGREGATE}( \lmulti (h_i^{(L-1)}, a_{j,i}, h_j^{(L-1)}) \mid j \in \mathcal{N}(i) \rmulti)\\
     h_i^{(L)} &= \operatorname{COMBINE}\left(h_i^{(L-1)}, \bar h_i^{(L)} , \operatorname{READOUT}(\lmulti h_j^{(L-1)} \mid j \in [N] \rmulti) \right).  \label{eq:AC-GNN_with_efeat}
\end{split}
\end{align}
Instead of applying the READOUT function directly to the current node representations, we allow an additional neighborhood aggregation step, denoted by $\operatorname{AGGREGATE}^*$. A GNN message-passing layer then computes the new node representation as
\begin{align}
\begin{split}
\bar h_i^{(L)} &= \operatorname{AGGREGATE}\!\left(\lmulti (h_i^{(L-1)}, a_{j,i}, h_j^{(L-1)}) \mid j \in \mathcal{N}(i) \rmulti \right),\\
\tilde h_i^{(L)} &= \operatorname{AGGREGATE}^*\!\left(\lmulti (h_i^{(L-1)}, a_{j,i}, h_j^{(L-1)}) \mid j \in \mathcal{N}(i) \rmulti \right),\\
h_i^{(L)} &= \operatorname{COMBINE}\!\left(h_i^{(L-1)}, \bar h_i^{(L)}, \operatorname{READOUT}\!\left(\lmulti (h_j^{(L-1)}, \tilde h_j^{(L)}) \mid j \in [N] \rmulti \right)\right).
\label{eq:ACR-GNN_with_efeat_and_agg}
\end{split}
\end{align}
This additional aggregation step is only a minor modification of the GNN architecture and, in fact, \citet{battaglia2018relational} also suggest that the READOUT function (referred to as a global attribute block) should operate on node features that have already been updated by an aggregation function. Moreover, it is easy to see that  for suitable message-passing functions, one layer of the form in~\eqref{eq:ACR-GNN_with_efeat_and_agg} can be obtained by two message-passing layers of the form in~\eqref{eq:AC-GNN_with_efeat}. Hence, the classes of representable functions are identical when multiple message-passing layers are allowed.

We can thus observe that the PENN architecture in Definition~\ref{def:PENN} corresponds to the message-passing structure in~\eqref{eq:ACR-GNN_with_efeat_and_agg} with the following choices of message-passing functions. For $i \in [N]$,  
\begin{align*}
&\operatorname{AGGREGATE}(\lmulti
(h_i,a_{j,i},h_j) \mid j \in \mathcal{N}(i)
\rmulti)=  \sum_{j \in \mathcal{N}(i)} \phi(h_i,a_{j,i},h_j),\\
&\operatorname{AGGREGATE}^*(\lmulti 
(h_i,a_{j,i},h_j) \mid j \in \mathcal{N}(i)
\rmulti)=  \sum_{j \in \mathcal{N}(i)} \psi(h_i,a_{j,i},h_j),\\
&\operatorname{READOUT}(\lmulti (h_i,\tilde h_i) \mid i \in [N]
\rmulti) =  \sum_{i=1}^N \alpha(h_i,\tilde h_i),\\
&\operatorname{COMBINE}(x,y,z) = \rho(x,y,z).
\end{align*} 
\begin{remark}\label{rem:can_approx_other_GNN}
The PENN
architecture covers many common message-passing architectures. We follow the terminology of the message-passing layers used in the PyTorch Geometric (PyG) Library \citep{fey2019fast}. Let $x_i$, $i \in [N]$, denote node features and $a_{i,j}$, $i,j \in [N]$, denote edge features.

\begin{itemize}
    \item Assume one-dimensional, non-negative edge features, let $a_{i,i}=1$ for all $i \in [N]$, and let the first node feature encode the in-degree or weighted in-degree $x_{i,1} = \sum_{j \in \mathcal{N}(i)\cup\{i\}} a_{j,i}$. Then the GCNConv architecture of~\cite{kipf2016semi}, where
    \begin{align*}
        x'_i = \Theta^T \sum_{j \in \mathcal{N}(i) \cup \{i\}}\frac{a_{j,i}}{\sqrt{x_{i,1}x_{j,1}}}x_j,
    \end{align*}
    is covered by PENNs for
    \begin{align*}
        \rho(x,y,z) &= \Theta^T y,\\
        \phi(x_i,a_{j,i},x_j) &=  \frac{a_{j,i}}{\sqrt{x_{i,1}x_{j,1}}}x_j,
    \end{align*}
    where $\Theta$ is a learnable weight matrix.

    \item The NNConv architecture of~\cite{gilmer2017neural, simonovsky2017dynamic}, where
    \begin{align*}
        x'_i = \Theta x_i + \sum_{j \in \mathcal{N}(i) } x_j h_\Theta(a_{j,i}),
    \end{align*}
    is covered if
    \begin{align*}
        \rho(x,y,z) &= \Theta x,\\
        \phi(x_i,a_{j,i},x_j) &=  x_j h_\Theta(a_{j,i}),
    \end{align*}
    where $\Theta$ is a learnable matrix and $h_\Theta$ a learnable neural network.

    \item The SAGEConv architecture of~\cite{hamilton2017inductive} with ``sum'' aggregation, where
    \begin{align*}
        x'_i = W_1 x_i + W_2 \sum_{j \in \mathcal{N}(i)} x_j,
    \end{align*}
    is covered if
    \begin{align*}
        \rho(x,y,z) &= W_1 x + W_2 y,\\
        \phi(x_i,a_{j,i},x_j) &= x_j.
    \end{align*}
    ``Mean'' aggregation is also captured if $x_{i,1}$ stores the in-degree, by $\phi(x_i,a_{j,i},x_j) = x_j / x_{i,1}$.
    
    \item The GraphConv architecture of~\cite{morris2019weisfeiler}, where
    \begin{align*}
        x'_i = W_1 x_i + W_2 \sum_{j \in \mathcal{N}(i)} a_{j,i}x_j,
    \end{align*}
    is covered if
    \begin{align*}
        \rho(x,y,z) &= W_1 x + W_2 y,\\
        \phi(x_i,a_{j,i},x_j) = a_{j,i}x_j.
    \end{align*}

    \item The GINConv architecture of~\cite{jegelka2018powerful}, where
    \begin{align*}
        x'_i = h_\Theta \left((1+ \varepsilon)x_i + \sum_{j \in \mathcal{N}(i)}x_j \right),
    \end{align*}
    is covered if
    \begin{align*}
        \rho(x,y,z) &= h_\Theta\left((1+\varepsilon) x + y\right),\\
        \phi(x_i,a_{j,i},x_j) &= x_j,
    \end{align*}
    where $h_\Theta$ is a neural network and $\varepsilon$ a scalar.
    
    \item Set $a_{i,i}=1$ for all $i \in [N]$. We artificially do this to allow self-attention and not in contrast to the standing no self-loops assumption. The GATConv architecture of~\cite{velivckovic2017graph}, is given by
    \begin{align*}
        x'_i = \sum_{j \in \mathcal{N}(i)} \alpha_{i,j} W x_j, 
    \end{align*}
    with attention coefficients
    \begin{align*}
        \alpha_{i,j} = \frac{\exp\left( \text{LeakyReLU}(b^T [Wx_i \concat Wx_j] )\right)}{\sum_{k \in \mathcal{N}(i)} \exp\left( \text{LeakyReLU}(b^T [Wx_i \concat Wx_k] )\right)}.
    \end{align*}
    It is covered by PENNs with 
    \begin{align*}
        &\rho(x,(y^A, y^B),z) = \frac{y^A}{y^B},\\
        &\phi(x_i,a_{j,i},x_j) = \phi^*(Wx_i, Wx_j, \exp\left( \text{LeakyReLU}(b^T [Wx_i \concat Wx_j] )\right)),\\
        &= \bigg(\exp\left( \text{LeakyReLU}(b^T [Wx_i \concat Wx_j] )\right)Wx_i, \exp\left( \text{LeakyReLU}(b^T [Wx_i \concat Wx_j] )\right)\bigg),
    \end{align*}
    where $W$ is a learnable weight matrix, $b$ a learnable vector, and LeakyReLU the activation from~\cite{maas2013rectifier}.
\end{itemize}
Note that since the message-passing functions $\rho,\alpha, \phi, \psi$ of PENNs are feedforward neural networks (FNNs), in some cases above it might not be possible to obtain an exact equality. However, in these cases, the universal approximation property of FNNs ensures that PENNs can at least uniformly approximate other continuous message-passing functions on appropriately chosen compact domains.
\end{remark}
In conclusion, the PENN architecture is very versatile and covers many popular existing message-passing variants.
Further, from the construction it is clear that message-passing GNNs, in particular PENNs, are permutation-equivariant functions. However, as mentioned in the introduction, ACR-GNNs are not universal approximators of permutation-equivariant functions. To obtain universality, we will further extend the architecture and introduce random node features.

\subsection{Random Node Features}\label{section:random_node_features}
In order to increase the expressivity of PENNs and to obtain universality, we augment the node features with additional information. Since our universality result is formulated in probability we consider random graphs on some probability space $(\Omega, \mathcal{F}, \mathbb{P})$.  
Let $G = (X,A)$ denote a random graph taking values in $\mathcal{D}_{N,d,d'}$ and let $R$ denote random node features taking values in $\R^{N \times d_r}$, where $d_r \in \N$ is the dimension of the random node features.
Then, instead of using $\operatorname{PENN}(X,A)$ we augment the node features by  concatenating the random features and consider
\begin{align*}
    \operatorname{PENN}(X \concat R, A),
\end{align*}
for some $\operatorname{PENN}:\mathcal{D}_{N,d+d_r,d'} \to \R^{N \times l}$ and some $l \in \N$.
The key condition we impose on the random features $R$ is that they provide \emph{finite unique node features} --- either almost surely or with high probability --- in the following sense.
\begin{definition}\label{def:rv_unique_finite_feat}
\begin{enumerate}[label=(\roman*)]
    \item\label{def:rv_unique_finite_feat:finite_unique_as}
    Random features $R \in L^0(\R^{N \times d_r})$ provide \emph{finite unique node features} if there exist $M \in \N$ and a measurable function $h : \R^{d_r} \to [M]$ such that, for any $i,j \in [N], i \neq j$,
    \begin{align*}
        h(R_i) \neq h(R_j) \quad a.s.
    \end{align*}
    \item\label{def:rv_unique_finite_feat:finite_unique_whp}
    Random features $R \in L^0(\R^{N \times d_r})$ provide finite unique node features \emph{with high probability} if, for every $p \in (0,1)$, there exist $M \in \N$ and a measurable function $h : \R^{d_r} \to [M]$ such that
    \begin{align*}
        \mathbb{P}\left( \forall i,j \in [N], i \neq j: h(R_i) \neq h(R_j)\right) \geq p.
    \end{align*}
\end{enumerate}
\end{definition}
Note that random features as in Definition~\ref{def:rv_unique_finite_feat} \ref{def:rv_unique_finite_feat:finite_unique_as} or \ref{def:rv_unique_finite_feat:finite_unique_whp} ensure that, almost surely or with probability at least $p$, respectively, the augmented random graph $(X\concat R,A)$ takes values in a set $\mathcal{D} \subseteq \mathcal{D}_{N,d+d_r,d'}$ that has finite unique node features in the following sense.
\begin{definition}\label{def:set_unique_finite_feat}
A set of graphs $\mathcal{D} \subseteq \mathcal{D}_{N,d,d'}$ has \emph{finite unique node features} if there exist $M \in \N$ and a measurable function $h : \R^{d} \to [M]$ such that, for every $(x,a) \in \mathcal{D}$ and all distinct nodes $i,j \in [N]$, the node features satisfy
\begin{align*}
        h(x_i) \neq h(x_j).
    \end{align*}
\end{definition}
Instead of working with random variables on a probability space, it may be convenient to formulate our framework in terms of probability measures on the domain of graphs. For the sake of completeness, we also present this formulation in the following.
\begin{definition}\label{def:measure_finite_unique_feat}
\begin{enumerate}[label=(\roman*)]
    \item\label{def:measure_finite_unique_feat_as}
    A probability measure $\mu : \mathcal{B}(\mathcal{D}_{N,d,d'}) \to [0,1]$ is said to \emph{induce finite unique node features} if there exist $M \in \N$ and a measurable function $h : \R^d \to [M]$ such that
    \begin{align*}
        \mu \left( \left\{(x,a) \in \mathcal{D}_{N,d,d'} \, \middle| \, \forall i \neq j \in [N]: h(x_i) \neq h(x_j) \right\}\right) = 1.
    \end{align*}
    \item\label{def:measure_finite_unique_feat_whp}
    A probability measure $\mu : \mathcal{B}(\mathcal{D}_{N,d,d'}) \to [0,1]$ is said to \emph{induce finite unique node features with high probability} if, for every $p \in (0,1)$, there exist $M \in \N$ and a measurable function $h : \R^d \to [M]$ such that
    \begin{align*}
        \mu \left( \left\{(x,a) \in \mathcal{D}_{N,d,d'} \, \middle| \, \forall i \neq j \in [N]: h(x_i) \neq h(x_j) \right\}\right) \geq p.
    \end{align*}
\end{enumerate}
\end{definition}
With such probability measures it is possible to formulate an alternative measure-theoretic version of the universal approximation result based on random node features; see Theorem~\ref{thm:universality_GNN_with_measure}.

By Definition~\ref{def:rv_unique_finite_feat}, random features that provide finite unique node features trivially also provide finite unique node features with high probability.  
A simple example is the deterministic vector $R \equiv (1,\dots, N)^T$.  
With $M = N$ and $h$ equal to the identity, this yields finite unique node features.  
Alternatively, one may take $R$ to be the matrix whose rows contain the binary or one-hot encodings of the numbers $1,\dots,N$, and let $h$ map these encodings to their corresponding integers.
We may also consider random permutations $\sigma(R)$ of any such $R$, where $\sigma$ is drawn uniformly from $S_N$.  
These still provide finite unique node features and additionally satisfy
$\sigma(R) \stackrel{d}{=} \sigma^*(\sigma(R))$
for any fixed permutation $\sigma^* \in S_N$.

An example that provides finite unique node features with high probability is the random variable  
$R \in L^0(\R^{N})$ whose components $R_i$ are independent and uniformly distributed on $[0,1]$; see Lemma~\ref{lemma:unif}.

\begin{lemma}\label{lemma:unif}
Let $R \in L^0(\R^{N})$ be such that the components $R_i$, $i = 1,\dots,N$, are independent and uniformly distributed on $[0,1]$.  
Then $R$ provides finite unique node features with high probability.
\end{lemma}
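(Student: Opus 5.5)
The plan is to discretize $[0,1]$ into $M$ equal bins and show that, for $M$ large, all $N$ uniform samples land in distinct bins with probability close to one. Fix $p \in (0,1)$. For $M \in \N$, define the measurable map $h:\R \to [M]$ by $h(r) := \min\{M, \max\{1, \lceil M r \rceil\}\}$. On $(0,1]$ this sends $r$ to the index $m$ of the bin $((m-1)/M, m/M]$. Values outside $(0,1]$ are clipped into $[M]$ only so that $h$ is defined on all of $\R$. Since $h$ is a composition of a floor/ceiling function with monotone clipping, it is Borel measurable. This is the $h$ required by Definition~\ref{def:rv_unique_finite_feat}~\ref{def:rv_unique_finite_feat:finite_unique_whp}.

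Next, estimate the probability of a collision. Almost surely every $R_i$ lies in $(0,1]$, since $\mathbb{P}(R_i=0)=0$. On this event $h(R_i)$ is uniformly distributed on $[M]$, and the $h(R_i)$ are independent because they are measurable functions of independent variables. For fixed $i \neq j$,
\begin{align*}
\mathbb{P}\big(h(R_i)=h(R_j)\big) = \sum_{m=1}^M \mathbb{P}\big(h(R_i)=m\big)\,\mathbb{P}\big(h(R_j)=m\big) = \frac{1}{M}.
\end{align*}
A union bound over the $\binom{N}{2}$ unordered pairs then gives
\begin{align*}
\mathbb{P}\big(\exists\, i \neq j: h(R_i)=h(R_j)\big) \le \frac{N(N-1)}{2M}.
\end{align*}
One could instead use the exact birthday-problem formula $\prod_{k=0}^{N-1}(1-k/M)$, but the union bound suffices.

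Finally, choose $M \ge N(N-1)/(2(1-p))$, with $M \ge 1$. Then the probability that all $h(R_i)$ are pairwise distinct is at least $p$, which is exactly the condition in Definition~\ref{def:rv_unique_finite_feat}~\ref{def:rv_unique_finite_feat:finite_unique_whp}.

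No step here is a real obstacle. The only care needed is in defining $h$ on all of $\R$, including the endpoint $0$ and values outside $[0,1]$, so that it is measurable with values in $[M]$. One also has to note that the exceptional set $\{R_i \notin (0,1]\}$ has probability zero and so does not affect the bound.
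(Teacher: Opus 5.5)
Your proof is correct and follows the same route as the paper: bin $[0,1]$ into $M$ equal intervals, observe that each pairwise collision has probability $1/M$, and apply a union bound. The differences are cosmetic. You use right-closed bins via a clipped ceiling, and counting the $\binom{N}{2}$ unordered pairs gives the slightly sharper requirement $M \ge N(N-1)/(2(1-p))$ in place of the paper's $1 - N^2/M \ge p$.
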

\begin{proof}
Let $p \in (0,1)$. Choose $M \in \N$ such that $1 - \frac{N^2}{M} \ge p$.
Partition the set $[0,1]$ into $M$ disjoint intervals  
$I_i = [\tfrac{i}{M}, \tfrac{i+1}{M})$ for $i = 0,\ldots,M-2$, $I_{M-1} = [\tfrac{M-1}{M},1]$. 
Define a measurable function $h : \R \to [M]$ by
\begin{align*}
        h(x) = \mathds{1}_{\left\{\cdot < 0\right\}}(x) + M \mathds{1}_{\left\{\cdot > 1 \right\}}(x) + \sum_{i=0}^{M-1} (i+1)\mathds{1}_{\{\cdot \in I_i\}}(x).
\end{align*} 
In particular, $h(x) = i+1$ whenever $x \in I_i$ for some $i = 0,\dots,M-1$.
Since the components of $R$ are i.i.d., the probability that all ``IDs'' $h(R_i)$, $i \in [N]$, are distinct satisfies
\begin{align*}
\mathbb{P}\!\left( \forall i \neq j : h(R_i) \neq h(R_j) \right)
&= 1 - \mathbb{P}\!\left( \exists\, i \neq j : h(R_i) = h(R_j) \right) \\
&\ge 1 - N^2\, \mathbb{P}\!\left( h(R_1) = h(R_2) \right) \\
&= 1 - \frac{N^2}{M}.
\end{align*}
By construction, this quantity is greater than or equal to $p$, which completes the proof.
\end{proof}

\begin{remark}
In fact, for any continuously distributed real random variable $R$, a vector of independent copies $(R^{(1)},\dots,R^{(N)}) \in L^0(\R^N)$ provides finite unique node features with high probability.
This follows from the proof of Lemma~\ref{lemma:unif} with the measurable function
$h(x) := \sum_{i=0}^{M-1}(i+1)\mathds{1}_{I_i}$, where the intervals $I_i$ form an ``equiprobable'' partition of $\R$ in the following sense.  
For $M \in \N$, choose the thresholds $q_0 < \dots < q_{M-2} \in \R$ such that
$I_0 = (-\infty,q_0]$, $I_i = (q_{i-1},q_i], i=1,...,M-2$, and $I_{M-1} = (q_{M-2}, \infty)$
satisfy $\mathbb{P}(R \in I_i) = \tfrac{1}{M}$ for all $i = 0,\dots, M-1$.
\end{remark}

\paragraph{Related literature}
A large body of work proposes data-augmentation schemes that empirically improve GNN performance, many of which rely on randomness. Examples include \cite{murphy2019relational}, \cite{dasoulas2019coloring}, \cite{loukas2019graph}, \cite{sato2021random}, \cite{abboud2020surprising}, and \cite{puny2020globalattention}, which we discuss below; \cite{eliasof2023graph} and \cite{canturk2023graph}, where positional and structural encodings are obtained from random features; and \cite{reidgeneral} and \cite{choromanski2023taming}, where random walks generate random features.
In what follows, we focus on approaches that provably increase expressiveness in the sense of universal approximation of permutation-invariant or permutation-equivariant functions.

In this context, the groundwork was laid by \citet{sato2021random}, who proposed concatenating the node features $x \in \R^{N \times d}$ of any deterministic graph input $(x,a)$ with random features $R \in L^0(\R^N)$ with i.i.d.\ components $R_i$, $i \in [N]$ (from now on simply called \emph{i.i.d.\ random features}). Instead of $\operatorname{GNN}(x,a)$, they consider random graph neural networks of the form
\begin{align*}
\operatorname{GNN}(x \concat R,a),
\end{align*}
where the random features are sampled newly each time the function is called.  
They showed that such random features improve the empirical and theoretical performance of several GNN architectures. In particular, assuming countable node features and random features with finite support, they proved that rGINs (graph isomorphism networks (GINs) from~\cite{jegelka2018powerful} augmented with random features) can approximate certain node-labeling functions with high probability, including those solving the ``minimum dominating set'' and ``maximum matching'' problems. Their empirical results on synthetic data further demonstrate that rGINs overcome three limitations of normal GINs, as they can determine the existence of a triangle, compute local clustering coefficients, and learn an algorithm for the minimum dominating set problem.
They also evaluated rGINs and rGCNs (graph convolutional networks (GCNs) from~\cite{kipf2016semi} with random node features) on three biological real-world datasets and found that models with random features slightly outperform their standard counterparts.

Building on this random-feature idea, \citet{abboud2020surprising} proved a universality result for ACR-GNNs. They showed that for every permutation-invariant function $f$ defined on the finite domain containing undirected graphs of fixed size without node or edge features, there exists an ACR-GNN which, when augmented with i.i.d.\ random features, approximates $f$ with high probability. However, they do not specify a concrete GNN architecture; instead, they construct aggregate, combine, and readout functions tailored to the target function $f$.
To support the theoretical expressiveness gained from random features, they conducted experiments on carefully designed synthetic datasets that require expressiveness beyond the 1-WL test (and thus cannot be solved by standard AC-GNNs). On these datasets, they compared memory-efficient GCNs from~\cite{kipf2016semi} with random features (referred to as GCN-RNI) against higher-order GNN variants that exceed the 1-WL expressive power but are computationally expensive. Their results show that GCN-RNI achieves near-perfect performance, competitive with higher-order GNNs, while being significantly more efficient.

Another, slightly stronger, universality result was provided by \citet{puny2020globalattention}. They showed that ACR-GNNs with random node features $R \in L^0(K^{N \times d_r})$ whose components $R_{i,k}$, $i \in [N], k \in [d_r]$, are i.i.d.\footnote{They in fact require all entries to be i.i.d. When we refer to i.i.d.\ random features, we typically assume only that the random feature vectors $R_i \in L^0(\R^{d_r})$, $i \in [N]$, are i.i.d., which also includes random node numberings.} and take values in a compact set $K \subset \R$, can approximate continuous permutation-invariant functions on compact sets of directed graphs of fixed size with one-dimensional edge features, with high probability, provided the random feature dimension $d_r$ is chosen sufficiently large.
They do not commit to a specific GNN architecture. Instead, they consider all ACR-GNNs whose message-passing mechanism can transfer structural information into the node features (this means for node features $x$, weighted adjacency matrix $a$, and random features $r$, there exist parameters such that the message-passing output is $(x, r, ar)$), and whose readout function or global attribute block is given by the DeepSets architecture of~\cite{zaheer2017deep}.
To evaluate the practical benefits of random node features, they conducted experiments on a synthetic node-classification dataset and found that random features consistently improve the performance of all tested GNN architectures (GCN, GAT, SAGEConv, GIN, GatedGCN).

All three approaches share a common limitation: adding random node features breaks the permutation invariance or equivariance of the GNN. This is expected: for some graph input $(x,a)$ and a realization $r$ of the random features $R$, the values of $\sigma\!\left(\operatorname{GNN}(x \concat r, a)\right)$ and $\operatorname{GNN}(\sigma(x) \concat r, \sigma(a))$ may differ. However, \citet{bechler2024utilization} show that GNNs must include layers that are sensitive to the random input---thereby breaking in-/equivariance---because otherwise the random features cannot increase expressiveness.
Motivated by this observation, \citet{bechler2024utilization} propose a method to regularize random-feature GNNs toward permutation in-/equivariance using a contrastive loss, and they empirically demonstrate improved performance, in particular faster convergence.

Despite this loss of \emph{pathwise} in-/equivariance, GNNs with random features still satisfy permutation in-/equivariance \emph{in distribution} under mild assumptions on the random features, and, in particular, they are permutation equivariant \emph{in expectation}. We will later show how this property can be exploited to obtain ``almost'' permutation-equivariant results; see Proposition~\ref{prop:equi_in_exp} and the subsequent discussion.

There exist additional results related to our work that rely on augmenting node features with information specifically designed to distinguish nodes, or even to make each node label unique.

\citet{murphy2019relational} propose \emph{relational pooling} (RP), which considers functions of the form
$$ 
\bar f(x,a) = \frac{1}{|S_N|}\sum_{\sigma \in S_N} f(\sigma(x,a))
$$ 
thereby generating a permutation-invariant function $\bar f$ from a potentially permutation-sensitive function $f$. They show that if $f$ is sufficiently expressive (for example a universal approximator of continuous functions) and node and edge features take values in a finite set, then RP can learn distinct representations of non-isomorphic graphs. This is essentially\footnote{This statement is subject to some technical conditions and one must apply another universal function (for example an MLP) to the graph representations.} equivalent to approximating continuous permutation-invariant functions; see \cite{chen2019equivalence, dasoulas2019coloring}.

Although this result is not directly tied to GNNs, which are known to be non-universal, \citet{murphy2019relational} propose concatenating a special case of random features, namely deterministic unique node identifiers $u \in \R^{N \times d_r}$ (for example a one-hot encoding of each node’s row in the adjacency matrix), to the node features $x$, and then computing
\begin{align*}
    \bar f(x,a) = \frac{1}{|S_N|} \sum_{\sigma \in S_N} f(x \concat \sigma(u), a).
\end{align*}
By choosing $f$ as GIN~\citep{jegelka2018powerful}, this construction yields a model that is provably more expressive than the 1-WL test. Relational pooling---evaluating all permutations of $u$ and averaging---addresses the issue that appending (random) features inside $f$ may destroy permutation invariance. The main drawback, however, is that $|S_N|$ grows too quickly for realistic graph sizes, making the full sum computationally infeasible.
To mitigate this, they propose using random permutations during training (a procedure called $\pi$-stochastic gradient descent) and averaging over a small number of permutations at inference time. This idea is closely related to permutation in-/equivariance \emph{in expectation}, discussed in Proposition~\ref{prop:equi_in_exp} and thereafter.
They demonstrate the effectiveness of relational pooling through experiments on synthetic and real-world datasets. On synthetic datasets consisting of 1-WL-indistinguishable graphs, they show that RP-GINs (GINs with relational pooling) can distinguish graphs that classical GINs provably cannot. On biochemical datasets, they show that RP improves the performance of the state-of-the-art model (in this case, the GNN architecture of \cite{duvenaud2015convolutional}).

In the same spirit as the unique node identifiers used in \cite{murphy2019relational}, \citet{dasoulas2019coloring} propose the $k$-CLIP algorithm, an extension of the ACR-GNN architecture that assigns ``colors'' to distinguish nodes with identical features, thereby increasing the expressive power of message-passing GNNs. They show that, in the limit $k \to \infty$, $k$-CLIP can approximate continuous permutation-invariant functions on compact sets of undirected graphs with node features. Even for relatively small values of $k$ (between $0$ and $16$), their method slightly outperforms benchmark approaches, including GINs, on real-world datasets (social networks and bioinformatics), and significantly outperforms them on synthetic datasets specifically designed to test a model’s ability to detect structural properties such as triangle-freeness.

Finally, without committing to a specific GNN architecture, \citet{loukas2019graph} show that AC-GNNs can approximate any graph function computable by a ``Turing machine,'' provided that node features disambiguate the nodes, the message-passing layers are sufficiently expressive, and the layer size and width (hidden-state dimension) are chosen appropriately. They also note that, with disambiguating node features, even single-layer ACR-GNNs could, in principle, compute any such Turing-complete function if the readout function is expressive enough. Although ``Turing completeness'' is a different perspective on GNN expressiveness that we do not pursue here, the result highlights that augmented node features can, at least theoretically, substantially strengthen GNNs.

\begin{remark}
We emphasize that, just like i.i.d.\ random features, deterministic unique node identifiers and random permutations thereof are simply special cases of random features. One advantage of using (random permutations of) unique node identifiers \citep{murphy2019relational, dasoulas2019coloring, loukas2019graph} compared to i.i.d.\ random features $R \in L^0(\R^{N \times d_r})$ with i.i.d.\ components $R_i$, $i \in [N]$ \citep{sato2021random, abboud2020surprising}) is that they provide unique node labels almost surely. A drawback, however, is that such identifiers cannot be assigned independently to each node, since the identifiers of all other nodes must be taken into account.
For example, $2N$ i.i.d.\ random node features can serve as valid random features for either two graphs of size $N$ or one graph of size $2N$. In contrast, two identical sets of $N$ unique node identifiers do \emph{not} provide unique identifiers for a graph of size $2N$, since each identifier appears twice. This makes i.i.d.\ random features easier to handle in practice. According to \citet{abboud2020surprising}, further advantages of i.i.d.\ random features are that they are data-agnostic (they work for any graph size and are not tailored to a specific graph), do not impose additional structure (such as an ordering of nodes), and can yield infinitely many representations for a single graph (for example when drawn from a continuous distribution), whereas unique node identifiers typically come from a finite set of size at least $N$.
Our results apply to both types of random features, since both provide finite unique node features in probability as specified in Definition~\ref{def:rv_unique_finite_feat}.
\end{remark}

\section{Universality of PENNs with Random Node Features} \label{section:universality}
With the PENN architecture and random node features in place, we are ready to state and prove the central result of this paper. We first establish the universal approximation property of PENNs with random node features for permutation-equivariant functions. Then we present an alternative measure-theoretic version of the result, an almost sure version under stronger assumptions, and explain how all results extend naturally to permutation-invariant functions. After this, we discuss permutation equivariance in expectation: since augmenting with random node features might break \emph{pathwise} permutation equivariance, one could consider averages of PENNs where the random features are resampled multiple times. We show that this approach is biased more towards permutation equivariance and still has the universal approximation property.
\subsection{Universal Approximation Result}
Let us first turn to the universality result we aim to establish. We show that PENNs from Definition~\ref{def:PENN} with random node features from Section~\ref{section:random_node_features} are universal approximators of permutation-equivariant functions on directed graphs with multidimensional node and edge features. The corresponding result for permutation-invariant functions follows directly; see Remark~\ref{rem:extends_to_perm_inv}. Compared to existing results \citep{abboud2020surprising, puny2020globalattention, murphy2019relational, dasoulas2019coloring}, this fills substantial gaps in the literature: we allow the target function to be measurable (rather than continuous), to be permutation invariant \emph{or} equivariant (rather than only one of them), and to be defined on an unbounded, uncountable domain of directed graphs with multidimensional features.

The only condition we impose on the random features is that they provide finite unique node features with high probability as in Definition~\ref{def:rv_unique_finite_feat}.
Then the following result holds.
\begin{theorem}\label{thm:universality_RGNN}
Let $N,d,d',d_r,l \in \N$.  
Let $(X,A): \Omega \to \mathcal{D}_{N,d,d'}$ be a graph-valued random variable, and let  
$R : \Omega \to \R^{N \times d_r}$ be random features that provide finite unique node features with high probability.  
Let $f\colon\mathcal{D}_{N,d,d'} \to \R^{N \times l}$ be a measurable permutation-equivariant function.

Then, for all $\varepsilon, \delta > 0$, there exists a  
$\operatorname{PENN} : \mathcal{D}_{N,d+d_r,d'} \to \R^{N \times l}$ as in Definition~\ref{def:PENN} such that
\begin{align*}
    \mathbb{P}\left(\|f(X,A) - \operatorname{PENN}(X\concat R,A)\| \leq \varepsilon\right) \geq 1-\delta.
\end{align*}
\end{theorem}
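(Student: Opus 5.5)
The plan is to first build an \emph{exact} representation of $f$ of the PENN form in Definition~\ref{def:PENN}, with $\rho,\phi,\alpha,\psi$ replaced by merely measurable maps. Then we replace these measurable components one at a time by feedforward networks, losing only small probability and small error at each stage. Fix $\varepsilon,\delta>0$. By Definition~\ref{def:rv_unique_finite_feat}\ref{def:rv_unique_finite_feat:finite_unique_whp}, choose $M$ and a measurable $h:\R^{d_r}\to[M]$ such that the event $E:=\{h(R_i)\neq h(R_j)\ \forall i\neq j\}$ has $\mathbb{P}(E)\ge 1-\delta/2$. On $E$, every node $k$ carries a distinct ID $h(R_k)\in[M]$.

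\emph{Step 1 (exact measurable encoding).} Write $e_m$ for the $m$-th unit vector, and write $x_i\concat r_i$ for augmented node features. Define $\tilde\psi(x_j\concat r_j,a_{i,j},x_i\concat r_i)\in\R^{M\times M\times (d'+1)}$ to be the tensor with $(a_{i,j},1)$ placed in slot $(h(r_i),h(r_j))$ and zeros elsewhere. The sum $\sum_{i\in\mathcal{N}(j)}\tilde\psi$ then records all incoming edges of $j$, indexed by source ID and target ID. Only edges with $a_{i,j}\neq 0$ contribute, which is consistent with the definition of $\mathcal{N}(j)$. Let $\tilde\alpha(x_j\concat r_j,s)$ be the concatenation of three pieces: $x_j$ placed in slot $h(r_j)$ of $\R^{M\times d}$, the one-hot vector $e_{h(r_j)}\in\R^M$, and $s$. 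On $E$, the readout $Z:=\sum_j\tilde\alpha(\cdot)$ then stores, in ID-indexed slots, all node features, the set of occupied IDs, and the full edge tensor. Let $m_1<\dots<m_N$ be the occupied IDs read from $Z$. The map $\mathrm{dec}(Z)$ returns the graph whose $n$-th node is the node with ID $m_n$; this graph equals $\sigma(X,A)$ for a permutation $\sigma$ determined by $R$. Set
\[
\tilde\rho(x_k\concat r_k,y,Z):=f(\mathrm{dec}(Z))_{n(k)},
\]
where $n(k)$ is the rank of $h(r_k)$ among the occupied IDs, and take $\tilde\phi\equiv 0$. Permutation equivariance gives $f(\sigma(X,A))=\sigma(f(X,A))$, and $n(k)=\sigma(k)$, so on $E$ the output at node $k$ is exactly $f(X,A)_k$. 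All of these maps are measurable, because $h$, the sorting/ranking maps and $f$ are.

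\emph{Step 2 (approximation by FNNs).} The measurable maps now have to be replaced by networks while controlling how errors propagate through the composition. We proceed from the outside in. Let $\mu_\rho$ be the law of the input $(X_k\concat R_k,0,Z)$ to $\tilde\rho$ restricted to $E$, tight in finite dimensions. By Lusin's theorem there is a compact $K_\rho$ with $\mu_\rho(K_\rho^c)$ small on which $\tilde\rho$ is continuous, hence uniformly continuous. Extend it by Tietze and approximate it uniformly on a compact neighborhood by an FNN $\rho$ to accuracy $\varepsilon/2$ (by \cite{leshno1993multilayer}). This yields a radius $\eta>0$ such that perturbing the input by at most $\eta$ while staying in that neighborhood changes the output by at most $\varepsilon/2$. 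Next, we treat $\tilde\alpha$ and then $\tilde\psi$ in the same way. For each, Lusin on the law of its input gives a large compact set, and an FNN approximates the map there to accuracies $\eta/(2N)$ and $\eta'/N^2$ respectively. Here $\eta'$ is the continuity radius of the already-approximated $\alpha$ on its compact set. Summing over at most $N$ and $N^2$ terms keeps the perturbation of $Z$ below $\eta$. A union bound over the $N$ nodes and all Lusin exceptional sets, each made of probability at most $\delta/(2(N^2+N+1))$, together with $\mathbb{P}(E^c)\le\delta/2$, gives probability at least $1-\delta$ that the FNN-PENN output is within $\varepsilon$ of $f(X,A)$ at every node. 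Adjusting constants gives the bound in whatever norm $\|\cdot\|$ is used on $\R^{N\times l}$, since all norms there are equivalent.

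\emph{Main obstacle.} The delicate part is Step 2. The targets $\tilde\rho,\tilde\alpha,\tilde\psi$ are discontinuous: they contain $h$, the sorting and ranking maps, and an arbitrary measurable $f$. Moreover, an approximation error in an inner map could move the input of an outer map onto a set where that map jumps. This is why the outer map is fixed first on a Lusin compact set where it is uniformly continuous, and only afterwards are the inner maps approximated to within its continuity radius. I would also check carefully that the inputs produced by the approximated inner networks remain inside the compact neighborhood on which the outer FNN approximation is valid. Enlarging the Lusin compact set slightly by a closed $\eta$-neighborhood before applying Tietze and FNN approximation should handle this.
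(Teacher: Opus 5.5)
Your proposal is correct. Step~1 is the paper's encoding from Proposition~\ref{prop:repr_of_GPEqui}:
\begin{itemize}
\item $\psi$ writes each edge into the slot indexed by the IDs of source and target;
\item $\alpha$ writes node features and an occupancy flag into the row of the node's ID;
\item $\phi\equiv 0$;
\item $\rho$ sorts the occupied IDs, decodes $\sigma(X,A)$ and returns component $\sigma(k)$.
\end{itemize}
The only difference is that you read off the original features and apply $f$ directly, instead of passing through the extension $\tilde f$.

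Step~2 follows the same outer-to-inner order as Proposition~\ref{prop:universality_of_composition}, but propagates errors by a different mechanism.

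\textbf{The paper's mechanism.} It approximates each component only in measure, at its exact input, by citing Proposition~\ref{thm:universality_of_measurable_thomas}. It uses bounded Lipschitz activations, so the already chosen outer networks are \emph{globally} Lipschitz. The effect of the inner errors is then bounded by $K_{\hat\rho}$ and $K_{\hat\rho}K_{\hat\alpha}$ times those errors, with no compact sets involved.

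\textbf{Your mechanism.} You effectively reprove the in-measure approximation via Lusin, Tietze and uniform approximation on compacts. You replace global Lipschitz constants by the modulus of uniform continuity of the outer network on a compact neighbourhood of its Lusin set.

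\textbf{Comparison.} Your route is more self-contained and works for any continuous non-polynomial activation (e.g.\ ReLU), at the cost of more bookkeeping. The paper's route is shorter, and its perturbation bounds hold globally.

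\textbf{The neighbourhood worry.} This is harmless if, in your notation, you split
\[
\tilde\rho(u)-\rho(\hat u)=[\tilde\rho(u)-\rho(u)]+[\rho(u)-\rho(\hat u)].
\]
The approximation property is then used only at $u\in K_\rho$. Meanwhile $\hat u$ lies in the closed $\eta$-neighbourhood of $K_\rho$, which is compact, and the network is uniformly continuous there.

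\textbf{Two bookkeeping points.}
\begin{itemize}
\item $\rho$, $\alpha$ and $\psi$ are shared across nodes and edges, so each Lusin set must serve all $N$ (for $\rho$, $\alpha$) or $N^2$ (for $\psi$) input laws. You can get this by applying Lusin to the average of those laws, or by taking a finite union of closed Lusin sets and invoking the pasting lemma.
\item Counting one exceptional event per input gives $N^2+2N$ events, not $N^2+N+1$. This only changes constants.
\end{itemize}
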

\begin{remark}
    Note that augmenting the original node features with random features is unnecessary if the original features $X$ already provide finite unique node features with high probability. In this case, from the proof of Theorem~\ref{thm:universality_RGNN} it follows that there exists a $\operatorname{PENN} : \mathcal{D}_{N,d,d'} \to \R^{N \times l}$ as in Definition~\ref{def:PENN} such that
\begin{align*}
    \mathbb{P}\left(\|f(X,A) - \operatorname{PENN}(X,A)\| \leq \varepsilon\right) \geq 1-\delta.
\end{align*}   
\end{remark}
\begin{proof}\textbf{of Theorem~\ref{thm:universality_RGNN}}
    Let $\varepsilon, \delta > 0$.
    Since $R$ provides finite unique node features with high probability, we can choose $M \in \N$ and a measurable map $h : \R^{d_r} \to [M]$ such that $\mathbb{P}(\mathfrak{A}) \ge 1 - \delta/2$, where $\mathfrak{A} \subseteq \Omega$ is the set on which $R$ provides finite unique node features: $ \mathfrak{A} = \{\omega \in \Omega \mid \forall i,j \in [N], i \neq j: h(R_i(\omega)) \neq h(R_j(\omega))\} \subseteq \Omega$.
    Thus, $\mathcal{D} := \{(X \concat R,A)(\omega) \mid \omega \in \mathfrak{A}\} \subset \mathcal{D}_{N,d+d_r,d'}$ is a set with finite unique node features as in Definition~\ref{def:set_unique_finite_feat}. 

    Define $\tilde f: \mathcal{D}_{N,d+d_r,d'} \to \R^{N \times l}$ as the natural extension of $f$, $\tilde f(x \concat r,a) := f(x,a)$, for any $(x\concat r,a) \in \mathcal{D}_{N,d+d_r,d'}$. 
    By Proposition~\ref{prop:repr_of_GPEqui}, there exist measurable functions $\rho, \alpha, \phi, \psi$ of compatible dimensions such that the function $g: \mathcal{D}_{N,d+d_r,d'} \to \R^{N \times l}$ defined for $k \in [N]$ and $(x,a) \in \mathcal{D}_{N,d+d_r,d'}$ by 
    \begin{align}\label{eq:def_g_section_1}
        g(x,a)_k := \rho\left(x_k, \sum_{i \in \mathcal{N}(k)}\phi \left(x_k, a_{i,k}, x_i\right), \sum_{j \in [N]}\alpha\left(x_j,\sum_{i \in \mathcal{N}(j)} \psi \left(x_j,a_{i,j}, x_i\right)\right)\right),
    \end{align}
    satisfies 
    $$
    \tilde f(x,a) = g(x,a) \qquad\text{for all } (x,a) \in \mathcal{D}.
    $$
    Using this representation of $\tilde f$ on $\mathcal{D}$, we complete the proof by showing that there exist feedforward neural networks $\hat\rho, \hat\alpha, \hat\phi, \hat\psi$ approximating $\rho, \alpha, \phi, \psi$ such that the resulting PENN approximates $\tilde f$ with sufficiently high probability.

    Since the pushforward measure $\mathbb{P} \circ (X \concat R, A)^{-1}$ is a probability measure on $\mathcal{D}_{N,d+d_r,d'}$, Proposition~\ref{prop:universality_of_composition} guarantees the existence of feedforward neural networks $\hat\rho, \hat\alpha, \hat\phi, \hat\psi$ with bounded activation functions (like sigmoid) such that, for the function $\operatorname{PENN}$ defined for $k \in [N]$ and $(x,a) \in \mathcal{D}_{N,d+d_r,d'}$ by
    \begin{align}\label{eq:struct_of_GNN}
        \operatorname{PENN}(x,a)_k = \hat \rho\left(x_k, \sum_{i \in \mathcal{N}(k)}\hat \phi \left(x_k, a_{i,k}, x_i\right), \sum_{j \in [N]}\hat \alpha\left(x_j,\sum_{i \in \mathcal{N}(j)}\hat \psi \left(x_j,a_{i,j}, x_i\right)\right)\right),
    \end{align}
    we have, with $\tilde X := X \concat R$,
    \begin{align*}
        \mathbb{P}(\| g(\tilde X,A) - \operatorname{PENN}(\tilde X,A) \| < \varepsilon) \geq 1-\delta/2.
    \end{align*}
    The architecture in \eqref{eq:struct_of_GNN} is a PENN as in Definition~\ref{def:PENN}.

    Using that $\tilde f(x \concat r, a) = f(x,a)$ and that $\tilde f(\tilde X, A)$ and $g(\tilde X, A)$ coincide on $\mathfrak{A}$, we obtain
    \begin{align*}
        \mathbb{P}(\|f(X,A) - \operatorname{PENN}(X \concat R, A)\| < \varepsilon ) &\geq \mathbb{P}(\{\|f(X,A) - \operatorname{PENN}(\tilde X, A)\| < \varepsilon\} \cap \mathfrak{A})\\
        &= \mathbb{P}(\{\|\tilde f(\tilde X,A) - \operatorname{PENN}(\tilde X, A)\| < \varepsilon\} \cap \mathfrak{A})\\
        &\geq \mathbb{P}(\{\|g(\tilde X,A) - \operatorname{PENN}(\tilde X, A)\| < \varepsilon\} \cap \mathfrak{A})\\ 
        &\geq  \mathbb{P}(\|g(\tilde X,A) - \operatorname{PENN}(\tilde X, A)\| < \varepsilon ) + \mathbb{P}(\mathfrak{A}) - 1\\ 
        &\geq 1- \delta/2 + 1-\delta/2 -1 = 1-\delta.
    \end{align*}
\end{proof}
Rather than augmenting node features with random features, we can also reformulate our results in terms of measures that induce finite unique node features with high probability; see Definition~\ref{def:measure_finite_unique_feat}. 
Then Theorem~\ref{thm:universality_RGNN} can be reformulated as follows.
\begin{theorem}\label{thm:universality_GNN_with_measure}
    Let $N,d,d',l \in \N$.
    Let $\mu$ be a probability measure on $\mathcal{D}_{N,d,d'}$ that induces finite unique node features with high probability as in Definition~\ref{def:measure_finite_unique_feat}.
    Let $f: \mathcal{D}_{N,d,d'} \to \R^{N \times l}$ be a measurable permutation-equivariant function.
    For every $\varepsilon >0, \delta >0$, there exists a $\operatorname{PENN}:\mathcal{D}_{N,d,d'} \to \R^{N \times l}$ as in Definition~\ref{def:PENN} such that
\begin{align*}
    \mu\left( \left\{(x,a) \in \mathcal{D}_{N,d,d'} \, \middle | \, \|f(x,a) - \operatorname{PENN}(x,a)\| \leq \varepsilon \right\}\right) \geq 1-\delta.
\end{align*}
\end{theorem}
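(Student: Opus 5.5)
The plan is to mirror the proof of Theorem~\ref{thm:universality_RGNN}, replacing the random graph $(X\concat R,A)$ by the identity random variable on the probability space $(\mathcal{D}_{N,d,d'},\mathcal{B}(\mathcal{D}_{N,d,d'}),\mu)$. Fix $\varepsilon,\delta>0$. Since $\mu$ induces finite unique node features with high probability, we choose $M\in\N$ and a measurable $h:\R^d\to[M]$ such that the set
\[
\mathcal{D}:=\left\{(x,a)\in\mathcal{D}_{N,d,d'} \,\middle|\, \forall i\neq j\in[N]: h(x_i)\neq h(x_j)\right\}
\]
satisfies $\mu(\mathcal{D})\ge 1-\delta/2$. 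This set is Borel, being a finite intersection of preimages of measurable maps, and by construction it has finite unique node features in the sense of Definition~\ref{def:set_unique_finite_feat}.

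Next we apply Proposition~\ref{prop:repr_of_GPEqui} directly to $f$ on $\mathcal{D}$; no extension $\tilde f$ is needed because there are no appended random features. This yields measurable $\rho,\alpha,\phi,\psi$ such that the function $g$ defined as in \eqref{eq:def_g_section_1} coincides with $f$ on $\mathcal{D}$. Then Proposition~\ref{prop:universality_of_composition}, applied with the probability measure $\mu$ itself as the relevant measure on $\mathcal{D}_{N,d,d'}$, gives feedforward networks $\hat\rho,\hat\alpha,\hat\phi,\hat\psi$. The resulting $\operatorname{PENN}$ of the form \eqref{eq:struct_of_GNN} then satisfies
\[
\mu\left(\left\{(x,a) \,\middle|\, \|g(x,a)-\operatorname{PENN}(x,a)\|<\varepsilon\right\}\right)\ge 1-\delta/2 .
\]

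Finally, we intersect this event with $\mathcal{D}$. On the intersection $f=g$, so the union bound gives
\[
\mu\left(\left\{\|f-\operatorname{PENN}\|\le\varepsilon\right\}\right)\ge \mu\left(\left\{\|g-\operatorname{PENN}\|<\varepsilon\right\}\cap\mathcal{D}\right)\ge 1-\delta/2+1-\delta/2-1=1-\delta .
\]
All the substantive work, namely the exact representation on sets with unique finite features and the approximation of compositions in measure, sits in the two cited propositions.

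The only point that needs care is that Proposition~\ref{prop:universality_of_composition} is formulated for an arbitrary probability measure on the graph domain. In Theorem~\ref{thm:universality_RGNN} it was invoked with the pushforward $\mathbb{P}\circ(X\concat R,A)^{-1}$, so it applies verbatim to $\mu$. Alternatively, one can deduce the statement from Theorem~\ref{thm:universality_RGNN} itself: take $\Omega=\mathcal{D}_{N,d,d'}$ with $\mathbb{P}=\mu$ and $(X,A)$ the identity, use the node features $X$ themselves as the finite unique features, and apply the Remark following Theorem~\ref{thm:universality_RGNN}, which states that no augmentation is required in that case.
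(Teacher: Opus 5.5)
Your proposal is correct and matches the paper's proof step for step. You choose $h$ so that the set of graphs with distinct IDs has $\mu$-measure at least $1-\delta/2$, represent $f$ exactly on that set via Proposition~\ref{prop:repr_of_GPEqui}, approximate in $\mu$-measure via Proposition~\ref{prop:universality_of_composition} applied to $\mu$ itself, and conclude with the same union-bound estimate as in Theorem~\ref{thm:universality_RGNN}.
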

\begin{proof}
The proof is almost identical to that of Theorem~\ref{thm:universality_RGNN}; we only sketch the main steps.
\begin{itemize}
    \item Let $\varepsilon, \delta > 0$.  
    Since $\mu$ induces finite unique node features with high probability, there exist  
    $M \in \N$ and a measurable map $h : \R^{d} \to [M]$ such that, for
    $\mathfrak{A} = \{(x,a) \in \mathcal{D}_{N,d,d'} \mid \forall i,j \in [N], i \neq j: h(x_i) \neq h(x_j)\}$,
    we have $\mu(\mathfrak{A}) \ge 1 - \delta/2$.

    \item Since $\mathfrak{A}$ is a set with finite unique node features, Proposition~\ref{prop:repr_of_GPEqui} yields measurable functions  
    $\rho, \alpha, \phi, \psi$ such that  
    $g_{\rho,\alpha,\phi,\psi} = f$ on $\mathfrak{A}$,  
    where $g_{\rho,\alpha,\phi,\psi}$ is defined by the right-hand side of~\eqref{eq:def_g_section_1}.

    \item By Proposition~\ref{prop:universality_of_composition}, we obtain feedforward neural networks  
    $\hat\rho, \hat\alpha, \hat\phi, \hat\psi$ such that the PENN  
    $g_{\hat\rho,\hat\alpha,\hat\phi,\hat\psi}$ satisfies
    $\mu\left(\left\{(x,a) \mid \| g_{\rho,\alpha, \phi,\psi}(x,a) - g_{\hat \rho,\hat \alpha, \hat \phi, \hat \psi}(x,a))\| \leq \varepsilon\right\}\right) \geq 1-\delta/2$,
    where $g_{\hat\rho,\hat\alpha,\hat\phi,\hat\psi}$ is given by the right-hand side of~\eqref{eq:struct_of_GNN}.

    \item Reasoning as in the proof of Theorem~\ref{thm:universality_RGNN}, we conclude that
    $\mu(\{ \|f(x,a) - \operatorname{PENN}(x,a)\| \leq \varepsilon\}) \geq 1-\delta$.
\end{itemize}
\end{proof}
Under additional assumptions, it is possible to strengthen Theorem~\ref{thm:universality_RGNN} so that the approximation holds pointwise (surely), rather than only in probability. For this we assume that the permutation-equivariant target function $f:\mathcal{D}_{N,d,d'} \to \R^{N \times l}$ is continuous, and that the random node features are uniformly distributed on $\{\sigma((x_1,\ldots,x_N)) \mid \sigma \in S_N\}$,
for some $N$ distinct deterministic elements $x_1,\ldots,x_N \in [0,1]^{d_r}$. 
Focusing on a compact subset of $\mathcal{D}_{N,d,d'}$ (without loss of generality we choose $\mathcal{D}^{[0,1]}_{N,d,d'}$) the following result is a pointwise version of Theorem~\ref{thm:universality_RGNN}.

\begin{corollary}\label{cor:universality_RGNN_as}
Let $N,d,d',d_r,l \in \N$.
Let $R:\Omega \to [0,1]^{N\times d_r}$ be random features that are uniformly distributed on $\{\sigma((x_1,\ldots,x_N))\mid \sigma \in S_N\}$, for some distinct $x_1,\ldots,x_N \in [0,1]^{d_r}$.
Let further $f: \mathcal{D}_{N,d,d'} \to \R^{N \times l}$ be a continuous permutation-equivariant function.

Then, for all $\varepsilon> 0$, there exists a $\operatorname{PENN}:\mathcal{D}_{N,d+d_r,d'} \to \R^{N \times l}$ as in Definition~\ref{def:PENN} such that
\begin{align*}
    \max_{(x,a) \in \mathcal{D}_{N,d,d'}^{[0,1]}}
    \|f(x,a) - \operatorname{PENN}(x \concat R, a)\|_\infty \leq \varepsilon
\end{align*}
holds surely.
\end{corollary}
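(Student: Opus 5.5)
Since $R$ takes only the $N!$ values $r^\sigma := \sigma((x_1,\ldots,x_N))$, $\sigma \in S_N$, "surely" means: for every $\sigma$, the single deterministic map $(x,a)\mapsto \operatorname{PENN}(x\concat r^\sigma,a)$ approximates $f$ uniformly on the compact set $\mathcal{D}^{[0,1]}_{N,d,d'}$. The plan is therefore to build one PENN that works uniformly over the compact set
\[
\mathcal{K} := \bigl\{(x\concat r^\sigma, a) \bigm| (x,a)\in \mathcal{D}^{[0,1]}_{N,d,d'},\ \sigma\in S_N\bigr\} \subset \mathcal{D}_{N,d+d_r,d'}.
\]
$\mathcal{K}$ is a finite union of compact sets, hence compact, and it has finite unique node features. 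Take $M=N$ and let $h$ map a point $y \in \R^{d+d_r}$ to $m$ if its last $d_r$ coordinates equal $x_m$, and to $1$ otherwise. This $h$ is measurable, and $h(y_i)\neq h(y_j)$ on $\mathcal{K}$ because the $x_m$ are distinct.

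\textbf{Step 1 (exact representation).} Set $\tilde f(x\concat r,a) := f(x,a)$ and apply Proposition~\ref{prop:repr_of_GPEqui} on $\mathcal{K}$ to obtain $\rho,\alpha,\phi,\psi$ with $g_{\rho,\alpha,\phi,\psi}=\tilde f$ on $\mathcal{K}$. These functions are only known to be measurable, which is not enough for sure uniform approximation. So I would instead redo the construction with continuity in mind. On $\mathcal{K}$ the identifier part of node features lives in the finite set $\{x_1,\ldots,x_N\}$, and a continuous function on $\R^{d_r}$ (a bump) can decode $x_m\mapsto m$ exactly there.

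Concretely, I would take the following choices:
\begin{itemize}
\item $\psi$ and $\alpha$ make each node $j$ broadcast its own features, its identifier, and its incoming edge features tagged by the neighbour identifier. For instance, $\psi(y_j,a_{i,j},y_i) = e_{h(i)}\otimes a_{i,j}$, together with the zero-check of the edge (below), stored in block $h(j)$. Then $\alpha(y_j,\cdot)$ places $(x_j,\cdot)$ in block $h(j)$, where $e_m$ is a continuous bump selector.
\item The readout sum $\sum_j\alpha$ then equals a continuous encoding $E(x,a)$ of the whole graph in canonical order, given by the identifiers.
\item $\rho(y_k,\cdot,E)$ decodes the permutation from the identifier of $y_k$ and outputs $f(\text{reordered graph})_{\text{slot of }k}$.
\end{itemize}
If all of these are continuous on the relevant compact images, $g$ is continuous on $\mathcal{K}$ and equals $\tilde f$ there by equivariance of $f$.

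\textbf{Step 2 (approximation).} With continuous $\rho,\alpha,\phi,\psi$, replace each by a feedforward network, using the classical universal approximation theorem on compacts, going from the inside out. The inner sums range over compact sets, so first choose $\hat\psi$ and $\hat\phi$ uniformly close. The argument of $\alpha$ then stays in a slightly enlarged compact set, on which $\alpha$ is uniformly continuous and can be uniformly approximated. The same reasoning then handles $\rho$. Since every error is controlled uniformly on $\mathcal{K}$, we get $\sup_{\mathcal{K}}\|\tilde f-\operatorname{PENN}\|_\infty\le\varepsilon$, and this gives the claimed bound for every realisation of $R$.

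\textbf{Main obstacle.} The hardest part is continuity of the representation, specifically the neighbourhood indicator. $\mathcal{N}(k)$ depends on whether $a_{i,k}\neq 0$, which is discontinuous in $a$. This is harmless only because $\phi(\cdot,0,\cdot)$ terms are then excluded, so I would arrange $\psi(y_j,a,y_i)$ to vanish continuously at $a=0$, e.g.\ linear in $a$. Then summing over $\mathcal{N}(j)$ coincides with summing over all $i\neq j$. The resulting encoding $e_{h(i)}\otimes a_{i,j}$ recovers $a$ exactly, including zero edges, so $E$ is continuous in $(x,a)$. If this fails for the general Proposition~\ref{prop:repr_of_GPEqui}, I would fall back on this explicit continuous construction, in which $f$ enters only through $\rho$ and is evaluated on the reordered graph, which is a continuous function of $E$ and the identifier.
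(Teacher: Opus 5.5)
Your proposal is correct and follows essentially the same route as the paper: represent $\tilde f$ exactly on the finite-support set with the identifier-based encoding of Proposition~\ref{prop:repr_of_GPEqui}, replace the indicator selectors by continuous bump functions that agree with them on the finitely many identifier values (as in the proof of Corollary~\ref{cor:a.s._rates}), and then approximate the continuous components uniformly on compacta by feedforward networks. Your handling of the neighbourhood sum (taking $\psi$ linear in the edge feature) is also what the paper's $\psi$ does implicitly, and for the error propagation you only need $\rho,\alpha,\psi$ to be continuous on compact neighbourhoods of their bounded argument sets, not $g$ itself to be continuous.
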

\begin{proof}
The proof is similar to that of Theorem~\ref{thm:universality_RGNN} with the only differences that we replace measurable functions with continuous ones and invoke a stronger universality result (uniform approximation of continuous functions on compacta, rather than approximation of measurable functions in probability). We only sketch the main steps.
\begin{itemize}
    \item Find measurable functions $\rho, \alpha, \phi, \psi$ such that for $g_{\rho, \alpha, \phi, \psi}$ defined as the right-hand side in \eqref{eq:def_g_section_1} $g_{\rho, \alpha, \phi, \psi}(x \concat r,a)$ equals $\tilde f(x \concat r,a) := f(x,a)$, for $(x,a) \in \mathcal{D}_{N,d,d'}$ and $r \in \{R(\omega)\mid \omega \in \Omega\}$. This is possible because the random features provide unique node features surely, allowing us to apply Proposition~\ref{prop:repr_of_GPEqui} to the set $\{(x \concat r,a)\mid (x,a) \in \mathcal{D}_{N,d,d'}, r \in \operatorname{supp}(R)\}$.
    
    \item Find continuous functions $\tilde \rho, \tilde \alpha, \tilde \phi, \tilde \psi$ such that $g_{\tilde \rho, \tilde \alpha, \tilde \phi, \tilde \psi}(x \concat R,a)$ equals $\tilde f(x \concat R,a)$ for all $(x,a) \in \mathcal{D}_{N,d,d'}$. 
    We cannot replace measurable functions by continuous ones without error in general, but since $R$ has finite support, we can construct continuous approximations that agree \emph{exactly} on this support. (See, for example, the proof of Corollary~\ref{cor:a.s._rates}.)
    
    \item Approximate $\tilde \rho, \tilde \alpha, \tilde \phi, \tilde \psi$ sufficiently well on suitable compact domains by feedforward neural networks $\hat \rho, \hat \alpha, \hat \phi, \hat \psi$, such that $\operatorname{PENN}(x \concat R,a)$ approximates $\tilde f(x \concat R,a) = f(x,a)$ uniformly for all $(x,a) \in \mathcal{D}^{[0,1]}_{N,d,d'}$.
\end{itemize}
\end{proof}
Finally, the following remark shows that all results extend directly to permutation-invariant functions. 
\begin{remark}\label{rem:extends_to_perm_inv}
Given a permutation-invariant graph-labeling function
$f:\mathcal{D}_{N,d,d'} \to \R^l$ we can construct a permutation-equivariant node-labeling function 
$f^*:\mathcal{D}_{N,d,d'} \to \R^{N \times l}$
with 
\begin{align}\label{eq:extends_to_perm_inv}
    f^* = (\underbrace{f,\ldots,f}_{\text{N times}}).
\end{align}
If $f^*$ can be approximated by some $\hat{f}$, then $f$ can be recovered either by selecting any single component $f \approx \hat f_k$, for some $k \in [N]$, or by averaging all components $f \approx \frac{1}{N}\sum_{k=1}^N \hat f_k$.

\begin{corollary}\label{cor:universality_RGNN_perm_inv}
Let $N,d,d',d_r,l \in \N$.
Let $(X,A):\Omega \to \mathcal{D}_{N,d,d'}$ be a graph-valued random variable and $R:\Omega \to \R^{N \times d_r}$ random features that provide finite unique node features with high probability. Let further $f: \mathcal{D}_{N,d,d'} \to \R^{l}$ be a measurable permutation-invariant function.

Then, for all $\varepsilon, \delta > 0$, and any $1 \leq p \leq \infty$ there exists a $\operatorname{PENN}:\mathcal{D}_{N,d+d_r,d'} \to \R^{N \times l}$ as in Definition~\ref{def:PENN} such that for every $k \in [N]$
\begin{align*}
    \mathbb{P}\left(\|f(X,A) - \operatorname{PENN}(X\concat R,A)_k\|_p \leq \varepsilon\right) \geq 1-\delta.
\end{align*}
\end{corollary}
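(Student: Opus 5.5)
The plan is to reduce Corollary~\ref{cor:universality_RGNN_perm_inv} to Theorem~\ref{thm:universality_RGNN} via the construction in Remark~\ref{rem:extends_to_perm_inv}. Given the measurable permutation-invariant $f:\mathcal{D}_{N,d,d'}\to\R^l$, I define $f^*:\mathcal{D}_{N,d,d'}\to\R^{N\times l}$ by $f^*(g)_k := f(g)$ for all $k\in[N]$, as in \eqref{eq:extends_to_perm_inv}. Measurability of $f^*$ is immediate, since each row is the measurable map $f$.

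The first step is to check that $f^*$ is permutation equivariant. For $\sigma\in S_N$ we have $\sigma(f^*(g))_k = f^*(g)_{\sigma^{-1}(k)} = f(g)$, while $f^*(\sigma(g))_k = f(\sigma(g)) = f(g)$ by invariance. Hence $\sigma(f^*(g)) = f^*(\sigma(g))$.

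Next I apply Theorem~\ref{thm:universality_RGNN} to $f^*$, with the same $(X,A)$ and $R$ and with a tolerance $\varepsilon'$ still to be chosen. This gives a $\operatorname{PENN}:\mathcal{D}_{N,d+d_r,d'}\to\R^{N\times l}$ satisfying
\[
\mathbb{P}\bigl(\|f^*(X,A)-\operatorname{PENN}(X\concat R,A)\|\le\varepsilon'\bigr)\ge 1-\delta .
\]

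The only point needing care is the choice of norm, because the theorem leaves $\|\cdot\|$ on $\R^{N\times l}$ unspecified. By finite-dimensional norm equivalence, I would fix the entrywise $\infty$-norm and note that Theorem~\ref{thm:universality_RGNN} holds for it: the proof via Proposition~\ref{prop:universality_of_composition} is norm-agnostic, or one can simply rescale $\varepsilon$. On the event above, every row then satisfies $\|f(X,A)-\operatorname{PENN}(X\concat R,A)_k\|_\infty\le\varepsilon'$. Using $\|v\|_p\le l^{1/p}\|v\|_\infty$ for $v\in\R^l$, it suffices to take $\varepsilon' := \varepsilon\, l^{-1/p}$, interpreted as $\varepsilon'=\varepsilon$ when $p=\infty$.

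This single event has probability at least $1-\delta$ and controls all $k\in[N]$ at once, so the claimed bound holds for every $k$ with the same PENN. If the matrix norm in the theorem were instead some other norm $\|\cdot\|$, I would use a constant $C$ with $\|\cdot\|_\infty\le C\|\cdot\|$ and shrink $\varepsilon'$ by the factor $C$. No other step poses a real obstacle.
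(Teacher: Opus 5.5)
Your proposal is correct and follows the paper's proof: build $f^*=(f,\ldots,f)$, apply Theorem~\ref{thm:universality_RGNN}, and control each row by the full matrix error. The only difference is cosmetic: the paper takes the theorem's norm to be the entrywise $p$-norm on $\R^{N\times l}$, so $\|f(X,A)-\operatorname{PENN}(X\concat R,A)_k\|_p\le\|f^*(X,A)-\operatorname{PENN}(X\concat R,A)\|_p$ holds directly and no $l^{-1/p}$ rescaling is needed, whereas you go through the $\infty$-norm and also verify the equivariance of $f^*$ explicitly, which the paper leaves implicit.
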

\begin{proof}
    With $f^* = (\underbrace{f,\ldots,f}_{\text{N times}})$ from~\eqref{eq:extends_to_perm_inv} the result follows from Theorem~\ref{thm:universality_RGNN} and the fact that for any $k \in [N]$,
    \begin{align*}
        \|f(X,A) - \operatorname{PENN}(X\concat R,A)_k\|_p &= \left(\sum_{j \in [l]} |f(X,A)_{j} - \operatorname{PENN}(X \concat R,A)_{k,j}|^p \right)^\frac{1}{p}\\
        &\leq  \left(\sum_{\substack{i \in [N]\\ j \in [l]}} |f(X,A)_{j} - \operatorname{PENN}(X\concat R,A)_{i,j}|^p \right)^\frac{1}{p}\\
        &= \|f^*(X,A) - \operatorname{PENN}(X\concat R,A)\|_p.
    \end{align*}
\end{proof}
\end{remark}

\subsection{Permutation Equivariance in Expectation}
Theorem~\ref{thm:universality_RGNN} guarantees the existence of a PENN that, with random features, approximates any measurable permutation-equivariant function with arbitrarily high probability.
However, this comes at the cost of losing permutation equivariance.
Although PENN itself is permutation equivariant, with $\operatorname{R-PENN}(G) = \operatorname{R-PENN}(X,A) := \operatorname{PENN}(X \concat R, A)$, it may happen for some $\sigma \in S_N$ that
\begin{align*}
    \operatorname{R-PENN}(\sigma(G)) \neq \sigma(\operatorname{R-PENN}(G)),
\end{align*}
which is due to the fact that both $G$ and $\sigma(G)$ are augmented with the \emph{same} random features $R$, rather than augmenting $\sigma(G)$ with $\sigma(R)$.
As mentioned earlier, \citet{bechler2024utilization} show that giving up permutation equivariance is necessary to increase the expressive power of GNNs. On the other hand they show that the lack of permutation equivariance might harm the performance of GNNs.  
To mitigate this problem, we propose to consider an average of PENNs where the random features are resampled multiple times. We motivate this with the following considerations. 

Assume that the random features $R$ are independent of the graph input $(X,A)$.  
In the special case where $R = \sigma(r)$ for a random permutation $\sigma \in L^0(S_N)$ and a fixed collection of unique node identifiers $r = (r_1,\ldots,r_N) \in \mathbb{R}^{N \times d_r}$, \citet{murphy2019relational} propose \emph{relational pooling} to obtain a permutation-equivariant function
\begin{align*}
    \tau(X,A) = \frac{1}{|S_N|}\sum_{\sigma \in S_N}\operatorname{PENN}(X \concat \sigma(r) , A).
\end{align*}
In contrast to R-PENN, the function $\tau$ is permutation equivariant:  
for any permutation $\bar{\sigma} \in S_N$,
\begin{align*}
    \bar \sigma(\tau(X,A)) &= \bar \sigma\left(\frac{1}{|S_N|}\sum_{\sigma \in S_N}\operatorname{PENN}(X \concat \sigma(r) , A)\right) = \frac{1}{|S_N|}\sum_{\sigma \in S_N}\bar \sigma\left(\operatorname{PENN}(X \concat \sigma(r) , A)\right) 
    \\ 
    &= \frac{1}{|S_N|}\sum_{\sigma \in S_N}\operatorname{PENN}(\bar \sigma(X) \concat \bar \sigma(\sigma(r)) , \bar \sigma(A))
     = \tau(\bar\sigma(X), \bar \sigma(A)),
\end{align*}
which holds because PENN is permutation equivariant and $\{ \sigma(r)\mid \sigma \in S_N\} = \{\bar \sigma(\sigma(r)) \mid \sigma \in S_N\}$.
Note that in this case $\{ \sigma(r)\mid \sigma \in S_N\} = \operatorname{supp}(R)$.
The idea of averaging over all possible realizations of the random features extends to general random features whose support is finite and permutation invariant: let $R$ be random features such that $|\operatorname{supp}(R)| < \infty$ and, for every $\sigma \in S_N$, $\operatorname{supp}(R) = \operatorname{supp}(\sigma(R))$.
Then,
\begin{align*}
    \tau(X,A) = \frac{1}{|\operatorname{supp}(R)|}\sum_{r \in \operatorname{supp}(R)}\operatorname{PENN}(X \concat r, A)
\end{align*}
is permutation equivariant, because for any permutation $\bar \sigma \in S_N$
\begin{align*}
\bar \sigma(\tau(X,A)) &= \bar \sigma\left(\frac{1}{|\operatorname{supp}(R)|}\sum_{r \in \operatorname{supp}(R)}\operatorname{PENN}(X \concat r , A)\right) = \frac{1}{|\operatorname{supp}(R)|}\sum_{r \in \operatorname{supp}(R)}\bar \sigma\left(\operatorname{PENN}(X \concat r , A)\right) 
    \\ 
    &= \frac{1}{|\operatorname{supp}(R)|}\sum_{r \in \operatorname{supp}(R)}\operatorname{PENN}(\bar \sigma(X) \concat \bar \sigma(r) , \bar \sigma(A))
     = \tau(\bar\sigma(X), \bar \sigma(A)).
\end{align*}
Both functions are examples of \emph{permutation equivariance in expectation}, where the expectation is only with respect to the random features, or in other words, conditional on the graph input $(X,A)$. In general, for suitable random features, taking the expectation conditional on $(X,A)$ yields a function that is permutation equivariant.
\begin{proposition}\label{prop:equi_in_exp}
    Let $(X,A) \in L^0(\mathcal{D}_{N,d,d'})$ and let $R \in L^0(\mathbb{R}^{N \times d_r})$ be random features that are independent of $(X,A)$ and distributed in a permutation-invariant way, that is, for all $\sigma \in S_N$ it holds that $R \stackrel{d}{=} \sigma(R)$.
    Assume further that $\operatorname{PENN}(X \concat R, A) \in L^1(\R^{N \times l})$ and let 
    \begin{align*}
        \tau(X,A) = \myCond{}{\operatorname{PENN}(X \concat R,A)}{(X,A)}.
    \end{align*}
    Then, for any $\sigma \in S_N$,
    \begin{align*}
        \sigma(\tau(X,A)) = \tau(\sigma(X),\sigma(A)).
    \end{align*}
\end{proposition}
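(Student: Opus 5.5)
The plan is to express the conditional expectation through a deterministic function of the graph, using independence. Set
\[
F(x,a) := \mathbb{E}\big[\operatorname{PENN}(x \concat R, a)\big], \qquad (x,a) \in \mathcal{D}_{N,d,d'},
\]
defined wherever this expectation exists. Since $R$ is independent of $(X,A)$ and $\operatorname{PENN}(\cdot \concat \cdot,\cdot)$ is jointly measurable, the freezing (substitution) lemma for conditional expectations gives $\tau(X,A) = F(X,A)$ almost surely.

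Integrability is the one point needing care. The assumption $\operatorname{PENN}(X\concat R,A) \in L^1$ together with Fubini applied to the product law $\mathbb{P}_{(X,A)} \otimes \mathbb{P}_R$ shows that $F(x,a)$ is finite for $\mathbb{P}_{(X,A)}$-almost every $(x,a)$. On the exceptional null set we set $F := 0$, or else restrict the claim to an almost-sure statement.

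The core computation is pointwise in $(x,a)$. Fix $\sigma \in S_N$. The PENN of Definition~\ref{def:PENN} is permutation equivariant on $\mathcal{D}_{N,d+d_r,d'}$. Moreover, the action on concatenated features splits as $\sigma(x \concat r) = \sigma(x) \concat \sigma(r)$, since $\sigma$ only permutes node indices. Hence
\[
\sigma\big(\operatorname{PENN}(x\concat R,a)\big) = \operatorname{PENN}(\sigma(x)\concat\sigma(R),\sigma(a)).
\]
Taking expectations, and using that $\sigma$ acts linearly so it commutes with $\mathbb{E}$, we get
\[
\sigma(F(x,a)) = \mathbb{E}\big[\operatorname{PENN}(\sigma(x)\concat\sigma(R),\sigma(a))\big].
\]
The distributional invariance $\sigma(R) \stackrel{d}{=} R$ then lets us replace $\sigma(R)$ by $R$, because the integrand is a fixed measurable function of the random feature. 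The right-hand side therefore equals $\mathbb{E}[\operatorname{PENN}(\sigma(x)\concat R,\sigma(a))] = F(\sigma(x),\sigma(a))$. The same argument shows that finiteness of $F$ at $(x,a)$ transfers to $(\sigma(x),\sigma(a))$.

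Finally we interpret $\tau(\sigma(X),\sigma(A))$ as $F(\sigma(X),\sigma(A))$, i.e.\ $\tau$ understood as the function $F$, which is the natural reading of the statement. Then
\[
\sigma(\tau(X,A)) = \sigma(F(X,A)) = F(\sigma(X),\sigma(A)) = \tau(\sigma(X),\sigma(A)).
\]

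The main obstacle is this interpretation step. $\tau$ is defined as a conditional expectation, which is only an a.s.\ object. To make the identity meaningful we must fix the version $F$ and check that it is finite both at $(x,a)$ and at $(\sigma(x),\sigma(a))$. The latter comes for free from the equality of expectations, since the absolute values satisfy the same chain of identities.
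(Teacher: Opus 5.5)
Your proposal is correct and follows the paper's own chain of equalities. You freeze $(X,A)$ using independence, apply the permutation equivariance of the PENN to the augmented input, and remove $\sigma$ from the random features via $\sigma(R) \stackrel{d}{=} R$. Your extra care with the Fubini integrability argument, the permutation-invariant exceptional null set, and the choice of the version $F$ makes rigorous what the paper's one-line computation leaves implicit.
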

\begin{proof}
    Since $\operatorname{PENN}(X\concat R,A) \in L^1(\R^{N\times l})$, the conditional expectation $\tau(X,A)$ is well-defined, and the result follows from the independence of $R$ and $(X,A)$ together with the permutation‑invariant distribution of $R$.
    \begin{align*}
        \sigma(\tau(X,A)) &= \sigma\left(\myCond{}{\operatorname{PENN}(X \concat R,A)}{(X,A)}\right) = \myE{}{\sigma(\operatorname{PENN}(x \concat R,a))}\vert_{(x,a) = (X,A)}
        \\
        &=\myE{}{\operatorname{PENN}(\sigma(x) \concat \sigma(R),\sigma(a))}\vert_{(x,a) = (X,A)} 
        = \myE{}{\operatorname{PENN}(\sigma(x) \concat R,\sigma(a))}\vert_{(x,a) = (X,A)} 
        \\
        &= \myCond{}{\operatorname{PENN}(\sigma(X) \concat R,\sigma(A))}{(X,A)} = \tau(\sigma(X),\sigma(A)).
    \end{align*}
\end{proof}
This suggests that averaging over all possible realizations of the random features would yield a permutation-equivariant function.
In practice, however, it is usually not feasible to average over all realizations of the random features.  
Even for random permutations, the sum contains $|S_N| = N!$ terms and is therefore intractable.  
More generally, there is no reason to expect that $\operatorname{supp}(R)$ is finite.  
To overcome this issue \citet{murphy2019relational} propose approximating the average by averaging over only a small number of randomly sampled permutations.  
In the general setting, this corresponds to sampling random elements from $\operatorname{supp}(R)$. 

This motivates us to propose the following.  
Let $R^{(1)},\ldots,R^{(M)}$ be i.i.d.\ copies of $R$. We consider
\begin{align*}
    \tilde \tau(X,A) = \frac{1}{M}\sum_{i=1}^M \operatorname{PENN}(X \concat R^{(i)},A).
\end{align*}
Provided $\operatorname{PENN}(X \concat R,A) \in L^1(\R^{N\times l})$, the (strong) law of large numbers gives
\begin{align*}
    \frac{1}{M}\sum_{i=1}^M \operatorname{PENN}(X \concat R^{(i)},A) \xrightarrow[M \to \infty]{} \myE{}{\operatorname{PENN}(x \concat R,a)}\vert_{(x,a) = (X,A)} =   \myCond{}{\operatorname{PENN}(X \concat R,A)}{(X,A)}
\end{align*}
which suggests that permutation equivariance can be improved by averaging over an increasing number of i.i.d.\ copies of $R$. Most importantly, the same approximation theorem established for a single PENN also holds for this averaged PENN output.
\begin{theorem}\label{thm:universality_average_RGNN}
    Let $N,d,d',d_r,l \in \N$.
    Let $(X,A):\Omega \to \mathcal{D}_{N,d,d'}$ be a graph-valued random variable.
    Let $R:\Omega \to \R^{N \times d_r}$ be random features that provide finite unique node features with high probability and are independent of $(X,A)$. 
    Let $M \in \N$ and let $R^{(1)},...,R^{(M)}$ be i.i.d.\ copies of $R$, each independent of $(X,A)$. 
    Let $f: \mathcal{D}_{N,d,d'} \to \R^{N \times l}$ be a measurable permutation-equivariant function.

    Then, for all $\varepsilon, \delta > 0$, there exists a $\operatorname{PENN}:\mathcal{D}_{N,d+d_r,d'} \to \R^{N \times l}$ as in Definition~\ref{def:PENN} such that
\begin{align*}
    \mathbb{P}\left(\|f(X,A) - \frac{1}{M}\sum_{i \in [M]}\operatorname{PENN}(X\concat R^{(i)},A)\| \leq \varepsilon\right) \geq 1-\delta.
\end{align*}
\end{theorem}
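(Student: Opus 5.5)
Rather than treat the average as a new object, I would apply Theorem~\ref{thm:universality_RGNN} to each copy $R^{(i)}$ separately with a smaller failure probability, and then take a union bound. The key observation is that a single PENN works for every copy simultaneously. Each pair $(X,A,R^{(i)})$ has the same joint distribution as $(X,A,R)$, because $R^{(i)}$ is a copy of $R$ and both are independent of $(X,A)$. So any event defined through a fixed measurable function of $(X\concat R,A)$ has the same probability when $R$ is replaced by $R^{(i)}$.

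\textbf{Step 1.} Let $\varepsilon,\delta>0$ and apply Theorem~\ref{thm:universality_RGNN} to $f$, $(X,A)$, $R$ with accuracy $\varepsilon$ and failure probability $\delta/M$. This gives a $\operatorname{PENN}$ with
\begin{align*}
\mathbb{P}\left(\|f(X,A)-\operatorname{PENN}(X\concat R,A)\|\le \varepsilon\right)\ge 1-\delta/M .
\end{align*}

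\textbf{Step 2.} Transfer this bound to each copy. Define
\begin{align*}
B:=\{(x,r,a): \|f(x,a)-\operatorname{PENN}(x\concat r,a)\|\le\varepsilon\}.
\end{align*}
This set is measurable, since $f$ is measurable and $\operatorname{PENN}$ is a composition of continuous FNNs with sums. The neighborhood indicator $\mathds{1}_{a_{i,k}\neq0}$ is also measurable, so no continuity of $\operatorname{PENN}$ in $a$ is needed. Because $(X,R^{(i)},A)\stackrel{d}{=}(X,R,A)$ for each $i\in[M]$, we get $\mathbb{P}((X,R^{(i)},A)\in B)\ge 1-\delta/M$. Let $E_i$ be the corresponding event. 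A union bound then gives $\mathbb{P}(\bigcap_i E_i)\ge 1-\delta$.

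\textbf{Step 3.} Use convexity of the norm. On $\bigcap_i E_i$,
\begin{align*}
\Big\|f(X,A)-\frac1M\sum_{i\in[M]}\operatorname{PENN}(X\concat R^{(i)},A)\Big\|\le\frac1M\sum_{i\in[M]}\|f(X,A)-\operatorname{PENN}(X\concat R^{(i)},A)\|\le\varepsilon .
\end{align*}
This is exactly the claimed estimate.

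I expect the only delicate point to be Step 2. One has to check that the probability guarantee from Theorem~\ref{thm:universality_RGNN} depends only on the joint law of $(X,A,R)$. It does, because the constructed $\operatorname{PENN}$ is a deterministic function. One also has to check that the identity of joint laws genuinely uses independence, since $R^{(i)}\stackrel{d}{=}R$ alone would not suffice. Given these two checks, the proof reduces to the earlier theorem plus a union bound. It does not require $R^{(1)},\ldots,R^{(M)}$ to be mutually independent, only that each one individually is distributed like $R$ and independent of $(X,A)$.
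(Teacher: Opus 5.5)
Your proposal is correct and follows essentially the paper's argument. Both rest on three facts: independence makes $(X\concat R^{(i)},A)$ and $(X\concat R,A)$ equal in law, a union bound runs over the $M$ copies, and the triangle inequality handles the average. The only difference is packaging: you call Theorem~\ref{thm:universality_RGNN} as a black box with failure probability $\delta/M$, while the paper re-runs its internal steps (Proposition~\ref{prop:repr_of_GPEqui} and Proposition~\ref{prop:universality_of_composition}, each at level $\delta/(2M)$, intersected with the events $\mathfrak{A}_i$), which makes your version slightly shorter and more modular.
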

\begin{proof}
    The proof consists largely of steps already encountered, so we only sketch it.

\begin{itemize}
    \item Let $\varepsilon,\delta >0$.
        Since $R$ provides finite unique node features with high probability, we find $M^* \in \N, h:\R^{d_r} \to [M^*]$, such that, with $\mathfrak{A} := \{\omega \in \Omega : \forall i \neq j \in [N],\ h(R_i(\omega)) \neq h(R_j(\omega))\}$, we have $\mathbb{P}(\mathfrak{A}) \geq 1- \frac{\delta}{2M}$.
        For $i \in [M]$, define analogously $\mathfrak{A}_i := \{\omega \in \Omega : \forall i' \neq j' \in [N],\ h(R^{(i)}_{i'}(\omega)) \neq h(R^{(i)}_{j'}(\omega))\}$, using the same $h$. Since $R^{(i)}$ is distributed identically to $R$, $\mathbb{P}(\mathfrak{A}_i) = \mathbb{P}(\mathfrak{A}) \geq 1-\frac{\delta}{2M}$ for every $i \in [M]$.
        
    \item We know that $\mathcal{D} = \{(x \concat r,a) \in \mathcal{D}_{N,d+d_r,d'} \mid \forall i' \neq j' \in [N],\ h(r_{i'}) \neq h(r_{j'})\} \subset \mathcal{D}_{N,d+d_r,d'}$ is a set with finite unique node features. Let $\tilde f(x \concat r,a) = f(x,a)$ be the natural extension of $f$ to $\mathcal{D}_{N,d+d_r,d'}.$
        Following Proposition~\ref{prop:repr_of_GPEqui}, we find measurable functions $\rho, \alpha, \phi, \psi$ of compatible in- and output dimensions such that for the function $g_{\rho,\alpha,\phi,\psi}$ (defined as the right-hand side of \eqref{eq:def_g_section_1}) it holds $\tilde f(x\concat r,a) = g_{\rho,\alpha,\phi,\psi}(x\concat r,a)$ for $(x \concat r,a) \in \mathcal{D}$.

    \item The pushforward measure $\mathbb{P} \circ (X \concat R, A)^{-1}$ is a probability measure on $\mathcal{D}_{N,d+d_r,d'}$.  
    By Proposition~\ref{prop:universality_of_composition}, there exist feedforward neural networks $\hat{\rho}, \hat{\alpha}, \hat{\phi}, \hat{\psi}$ such that
    \begin{align*}
        \mathbb{P}\circ (X \concat R,A)^{-1}\left(\left\{(x \concat r,a) \in \mathcal{D}_{N,d+d_r,d'} \mid \|g_{\rho,\alpha,\phi,\psi}(x \concat r,a) - g_{\hat \rho, \hat \alpha, \hat \phi, \hat \psi}(x \concat r,a) \| \leq \varepsilon \right\} \right) \geq 1- \frac{\delta}{2M}.
    \end{align*}
    Since $R^{(i)}$ is independent of $(X,A)$ and distributed identically to $R$, the pushforward measure $\mathbb{P}\circ(X\concat R^{(i)},A)^{-1}$ equals $\mathbb{P}\circ(X\concat R,A)^{-1}$ for every $i \in [M]$. Hence the same $\hat\rho,\hat\alpha,\hat\phi,\hat\psi$ satisfy, for every $i \in [M]$,
    \[
        \mathbb{P}\left(\|g_{\rho,\alpha,\phi,\psi}(X\concat R^{(i)},A) - g_{\hat\rho,\hat\alpha,\hat\phi,\hat\psi}(X\concat R^{(i)},A)\| \leq \varepsilon \right) \geq 1-\frac{\delta}{2M}.
    \]
    \item Then,
        \begin{align*}
            &\mathbb{P}\left(\|f(X,A) - \frac{1}{M}\sum_{i \in [M]} \operatorname{PENN}(X \concat R^{(i)},A)\| \leq \varepsilon \right)\\
            &= \mathbb{P}\left(\|\frac{1}{M}\sum_{i \in [M]}\tilde f(X\concat R^{(i)},A) - \frac{1}{M}\sum_{i \in [M]} \operatorname{PENN}(X \concat R^{(i)},A)\| \leq \varepsilon \right)\\
            &\geq \mathbb{P}\left(\forall i \in [M]: \|\tilde f(X\concat R^{(i)},A) - \operatorname{PENN}(X \concat R^{(i)},A)\| \leq \varepsilon \right)\\
            &\geq \sum_{i \in [M]}\left[\mathbb{P}\left(\|\tilde f(X\concat R^{(i)},A) - \operatorname{PENN}(X \concat R^{(i)},A)\| \leq \varepsilon \right) \right] - (M-1)\\
            &\geq \sum_{i \in [M]}\left[\mathbb{P}\left( \left\{\|\tilde f(X\concat R^{(i)},A) - \operatorname{PENN}(X \concat R^{(i)},A)\| \leq \varepsilon\right\} \cap \mathfrak{A}_i\right) \right] - (M-1)\\
            &=  \sum_{i \in [M]}\left[\mathbb{P}\left( \left\{\|g_{\rho,\alpha,\phi,\psi}(X\concat R^{(i)},A) - g_{\hat \rho, \hat \alpha, \hat \phi, \hat \psi}(X \concat R^{(i)},A)\| \leq \varepsilon\right\} \cap \mathfrak{A}_i\right) \right] - (M-1)\\
            &\geq \sum_{i \in [M]}\left[\mathbb{P}\left( \left\{\|g_{\rho,\alpha,\phi,\psi}(X\concat R^{(i)},A) - g_{\hat \rho, \hat \alpha, \hat \phi, \hat \psi}(X \concat R^{(i)},A)\| \leq \varepsilon\right\}\right) +  \mathbb{P}\left(\mathfrak{A}_i\right) -1 \right] - (M-1)\\
            &\geq \sum_{i \in [M]}\left[1- \frac{\delta}{2M} + 1-\frac{\delta}{2M} -1 \right] - (M-1) = M - \frac{\delta}{2} + M - \frac{\delta}{2} -M -M + 1 = 1-\delta.
        \end{align*} 
\end{itemize}
\end{proof}

\section{Approximation Rates}\label{section:rates}

In this section, we establish a stronger approximation result for a more regular class of functions and under the assumptions that the random graphs have compact support.  
We focus on permutation-equivariant functions $f:\mathcal{D}_{N,d,d'} \to \R^{N \times l}$ that are $k \in \mathbb{N}_{\ge 2}$ times continuously differentiable with bounded partial derivatives
and show that any such function $f$ can be uniformly approximated on the compact subset $\mathcal{D}_{N,d,d'}^{[0,1]} = [0,1]^{N \times d} \times [0,1]^{N \times N \times d'}$ with arbitrarily high probability by random‑feature PENNs whose FNN components have suitable architectures.

Neural-network architectures, as introduced in Definition~2.3 of \cite{guhring2020error}, specify the number of layers as well as which weights are fixed to zero and which may vary. They are defined similarly to FNNs, except that their weights are restricted to $\{0,1\}$. 
\begin{definition}[Definition~2.3, \cite{guhring2020error}]\label{def:NN_architecture}
    Let $L \in \N$ and $d_0,\ldots,d_L \in \N$. A \emph{neural-network architecture} $\mathcal{A}$ is a collection of binary matrices and vectors
    $$
    \mathcal{A} = ((\bar A^1, \bar b^1),\ldots,(\bar A^L,\bar b^L)),
    $$
    where $\bar A^l \in \{0,1\}^{d_l \times d_{l-1}}$ and $\bar b^l \in \{0,1\}^{d_l}$.
    As for neural networks, we define the complexity of an architecture by
    $L(\mathcal{A}) = L$ and $M(\mathcal{A})= \sum_{l=1}^L \|\bar A^l\|_{\ell^0} + \|\bar b^l\|_{\ell^0}$.

    Let $\Phi$ be an $L$-layer neural network with affine maps $W_l(x) = A^l x + b^l$, where $A^l \in \R^{d_l \times d_{l-1}}$ and $b^l \in \R^{d_l}$. We say that $\Phi$ \emph{has} (or is a realization of) the architecture $\mathcal{A} = ((\bar A^1, \bar b^1), \ldots, (\bar A^L, \bar b^L))$ if, for all $l=1,\ldots,L$,
    \begin{align*}
        \bar A^l_{i,j} = 0 &\implies A^l_{i,j} = 0, &i \in [d_l], j \in [d_{l-1}],\\
        \bar b^l_{i} = 0 &\implies b^l_{i} = 0, &i \in [d_l].
    \end{align*}
\end{definition}
The distinction between neural networks and neural-network architectures is practically important because the architecture must be fixed before training begins. Insights about suitable architectures therefore promise to increase the learning performance. This motivates our next result: we derive upper bounds on the number of layers and non‑zero weights required by the PENN components and characterize how these quantities scale with the approximation error $\varepsilon$ and the regularity parameter~$k$.
To formulate the theorem, we introduce the class of sufficiently regular functions as follows.
For $N,d,d',l \in \N$, $k \in \N_{\geq 2}$, and $\kappa > 0$, we define
\begin{align}
\begin{split}
   \mathcal{F}_{N,d,d', l, k,\kappa} :=  &\Bigg\{ f:\mathcal{D}_{N,d,d'} \to \R^{N \times l} \text{ perm. equiv.} \,\Bigg\vert\, \forall i \in [N], j \in [l]:  f_{i,j}  \in \mathcal{C}^k(\mathcal{D}_{N,d,d'}) \text{ and }\\
  & \phantom{\{} \forall (x,a) \in \mathcal{D}_{N,d,d'}^{[0,1]}: \max_{\substack{i \in [N], j \in [l]\\ \beta \in \N^{Nd + N^2d'}_0, |\beta|\leq k}} |D^\beta f_{i,j}(x,a)| \leq \kappa  \Bigg\}.
\end{split}
\end{align}
Then, the approximation rates are given by the following theorem. Note: we present the result specifically for random features with i.i.d.\ components that are distributed uniformly on $[0,1]$. The result can also be proven for many other distributions, in particular, any continuous distribution on $[0,1]$ or, when dropping independence and ensuring that distinct nodes are assigned distinct features, any discrete distribution with at least $N$ distinct values. The latter case includes (random permutations of) deterministic unique node identifiers for which the result even holds surely, as presented in Corollary~\ref{cor:a.s._rates}.
\begin{theorem}\label{thm:rates_for_RGNN}
Let $N,d,d',l \in \N$, $k \in \N_{\geq2}$, and $\kappa >0$.
Let $R: \Omega \to [0,1]^N$ be random features with uniformly distributed i.i.d.\ components $R_1,...,R_N$. 

For every $\delta \in (0, 1)$ there exist $c_1,c_2>0$ with $c_1 = c_1(N,d,d',l,\delta, \kappa,k)$ and $c_2= c_2(N,d,d',\delta)$ and dimensions $\bar d_\psi, d_\psi, \bar d_\alpha, d_\alpha, \bar d_\phi, d_\phi, \bar d_\rho, d_\rho \in \N$ such that the following holds.

For every $\varepsilon \in (0, \frac{1}{2})$ there exist architectures $\mathcal{A}_\rho$,$\mathcal{A}_\alpha, \mathcal{A}_\phi,$ and $\mathcal{A}_\psi$ with the property that for every $f \in \mathcal{F}_{N,d,d', l, k,\kappa}$, one can choose ReLU neural networks $\hat\rho:\R^{\bar d_\rho} \to \R^{d_\rho}$, $\hat\alpha:\R^{\bar d_\alpha} \to \R^{d_\alpha}$, 
    $\hat\phi:\R^{\bar d_\phi} \to \R^{d_\phi}$,
    and $\hat\psi:\R^{\bar d_\psi} \to \R^{d_\psi}$ each of architecture $\mathcal{A}_\rho$,$\mathcal{A}_\alpha, \mathcal{A}_\phi,\mathcal{A}_\psi$, respectively, such that the $\operatorname{PENN}$ with components $\hat\rho, \hat\alpha, \hat\phi,\hat\psi$ satisfies
\begin{align}
   \mathbb{P}\left(\max_{(x,a) \in \mathcal{D}_{N,d,d'}^{[0,1]}}\|f(x,a) - \operatorname{PENN}(x \concat R,a)\|_\infty \leq \varepsilon\right) \geq 1-\delta.
\end{align}
Moreover, the following bounds of the neural network complexity hold. 
For $\mathcal{A} \in \{\mathcal{A}_\rho, \mathcal{A}_\alpha, \mathcal{A}_\phi,\mathcal{A}_\psi\}$,
    \begin{align*}
        L(\mathcal{A}) &\leq c_1 \log_2\left(\frac{1}{\varepsilon} \right);\\
        M(\mathcal{A}) &\leq c_1 \left(\frac{1}{\varepsilon}\right)^{c_2/(k-1)} \left(\log_2\left(\frac{1}{\varepsilon}\right)\right)^2.
    \end{align*}
In particular, $c_2$ can be expressed explicitly in terms of $N,d,d',\delta$; see Equation \eqref{eq:const_c_2}. 
\end{theorem}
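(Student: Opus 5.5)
The plan is to make the representation of Proposition~\ref{prop:repr_of_GPEqui} quantitative and regular, so that each of its four components can be approximated by the ReLU results of \cite{guhring2020error}. First fix $\delta$ and, as in Lemma~\ref{lemma:unif}, choose $M = M(N,\delta)$ with $1-N^2/M\ge 1-\delta$. I would refine this slightly: partition $[0,1]$ into $M$ intervals and keep only a central closed subinterval of each, of relative width $\gamma=\gamma(N,\delta)$. On the resulting event $\mathfrak{A}$ the $N$ random features lie in distinct subintervals that are separated by gaps of length at least $\eta=\eta(N,\delta)>0$, and $\mathbb{P}(\mathfrak{A})\ge 1-\delta$. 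The gaps are what let us replace the measurable ID map $h$ by a Lipschitz, in fact piecewise linear, map $\tilde h:[0,1]\to[0,1]$. This map is constant on each kept subinterval and takes distinct values $m/M$ there, and it is exactly representable by a ReLU network whose size depends only on $M$. The whole construction then runs on $\mathfrak{A}$ for every $(x,a)\in\mathcal{D}^{[0,1]}_{N,d,d'}$ simultaneously, which gives the uniform-in-$(x,a)$ statement inside the probability.

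Next I would write an explicit smooth representation on $\mathfrak{A}$. Each message $\psi$ (resp.\ $\phi$) encodes the pair $(\tilde h(r_j),\tilde h(r_i))$ together with $a_{i,j}$ into a disjoint block of coordinates of a vector of dimension $O(M^2 d')$, for example $\psi(x_j,a_{i,j},x_i)=e_{\tilde h(r_i),\tilde h(r_j)}\otimes a_{i,j}$ via continuous bump-type indicator functions. Then $\alpha$ places $x_j$ and the aggregated edge block into the slot indexed by $\tilde h(r_j)$. After summation, the readout $z=\sum_j\alpha(\cdot)$ contains the whole graph arranged according to ID order. Finally $\rho(x_k,\cdot,z)$ reorders: it reads its own ID $\tilde h(r_k)$ and evaluates $f$ at the decoded graph, selecting the $k$-th output row. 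Using equivariance of $f$, the decoded graph can be taken in the canonical order given by the IDs. The decoding of which slots are occupied is again piecewise linear in $z$ and the IDs, because the occupied set is one of finitely many patterns. Hence $\rho$ is a finite sum over at most $\binom{M}{N}N!$ patterns of compositions of $f$ with linear or piecewise-linear selection maps, multiplied by exact ReLU switches. The components $\phi,\psi,\alpha$ are then exactly ReLU-representable with size depending only on $N,d,d',\delta$. Only $\rho$ involves $f$, and it does so through a $C^k$ function whose derivatives are bounded by $\kappa$ times constants, on a compact domain of dimension $c_2=c_2(N,d,d',\delta)$, namely $d$ plus the slot dimension $O(M d+M^2 d')$. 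This $c_2$ is the explicit constant to record.

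Then I would approximate. I would apply Theorem~4.1 / Corollary 4.2 of \cite{guhring2020error}, which gives $W^{1,\infty}$ or $L^\infty$ approximation of $W^{k,\infty}$ functions on $[0,1]^{c_2}$ with error $\varepsilon$, depth $\le c\log_2(1/\varepsilon)$ and at most $c\,\varepsilon^{-c_2/(k-1)}\log_2^2(1/\varepsilon)$ weights, on a fixed architecture independent of the particular $f$. This handles the smooth part of $\rho$. The exact ReLU pieces are then composed and parallelized, which only adds constants, using the standard composition and parallelization lemmas for architectures. Errors do not propagate from the inner networks because $\phi,\psi,\alpha$ are exact. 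I would pad all four architectures to common bounds so that one pair $c_1,c_2$ works for all of them.

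The main obstacle is the middle step: making the decoder inside $\rho$ simultaneously exact on $\mathfrak{A}$, piecewise linear, and of size independent of $\varepsilon$, and composing it with $f$ so that the composite stays in a $W^{k,\infty}$ class with norm $\lesssim\kappa$. Piecewise-linear switching is not smooth. To handle this, I would first try approximating each pattern term $f\circ S_\pi$ separately, which is smooth on its own, and then multiplying by an exact $\{0,1\}$ ReLU indicator of the pattern. The multiplication uses the bounded-range product network of \cite{yarotsky2017error}, which costs only $O(\log(1/\varepsilon))$ extra depth. This is what yields the $\log_2^2$ factor in the weight bound.
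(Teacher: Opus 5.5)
Your proposal is correct in outline, but it follows a genuinely different route from the paper.

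The paper replaces every indicator in the exact representation by mollified $C^\infty$ bumps $\iota_{i,\delta'},\nu_j$, so that $\tilde\psi,\tilde\alpha,\tilde\rho$ are all $C^k$ with derivative bounds $\kappa_\psi,\kappa_\alpha,\kappa_\rho$. It then approximates all three in $W^{1,\infty}$ (Proposition~\ref{prop:rates_for_NN}), clips inputs to obtain global Lipschitz constants (Corollary~\ref{cor:rates_for_NN_global_lip}), and propagates the errors through the PENN with tolerances $\varepsilon_\rho,\varepsilon_\alpha,\varepsilon_\psi$ shrunk by those constants. It ends with $c_2=\max(\bar d_\rho,\bar d_\alpha,\bar d_\psi)$, which is of order $M^2d'$ with $M\ge 2N^2/\delta$.

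You instead use piecewise-linear switches that are exactly $\{0,1\}$-valued on the good event. Then $\phi,\psi,\alpha$ and all bookkeeping inside $\rho$ are fixed-size exact ReLU networks, and only $f$ itself is approximated. This buys a lot. There is no error propagation and no Lipschitz control, so $L^\infty$ approximation ($s=0$) suffices and even yields $\varepsilon^{-c_2/k}$, which implies the stated bound. Also, $f$ is only evaluated at exactly decoded graphs in $\mathcal{D}^{[0,1]}_{N,d,d'}$, precisely where the $\kappa$-bound is assumed. By contrast, the paper's bound on $\tilde\rho$ must control $\tilde f$ along the image of the smoothed decoder $\tilde\rho^A$ over all of $[0,1]^{\bar d_\rho}$. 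Finally, if you approximate each $f_{i,j}$ on $[0,1]^{Nd+N^2d'}$ and precompose with the linear selections, you may take $c_2=Nd+N^2d'$. That is independent of $\delta$ and far smaller than \eqref{eq:const_c_2}.

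The price is a combinatorial factor such as $\binom{M}{N}N$ in $c_1$. You can remove it by decoding exactly as in the paper's $\rho^A$, but with ReLU hats $t\mapsto\max(0,1-|t-j|)$ in place of $\nu_j$; this works because prefix counts of the occupancy flags are integers on the event.

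Two points need to be made explicit. First, every exactness claim rests on $sv=\max(0,v+Bs-B)-\max(0,-v+Bs-B)$ for $s\in\{0,1\}$ and $|v|\le B$. This identity also makes Yarotsky's product network unnecessary, and the $\log_2^2$ factor comes from Proposition~\ref{prop:rates_for_NN}, not from a product. Second, you need an occupancy flag in $\alpha$, and you must split $\delta$ between distinctness ($M\ge 2N^2/\delta$) and landing in the kept subintervals ($\gamma^N\ge 1-\delta/2$).
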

\begin{proof}
    Let $\delta \in (0,1)$.
    Choose $M \in \N$, with  $M\geq \frac{2N^2}{\delta}$ and $\delta' \leq \frac{1-(\frac{2-\delta}{2})^{\frac{1}{N}}}{2M}$.
    Let $\bar d_\psi, d_\psi, \bar d_\alpha, d_\alpha, \bar d_\phi, d_\phi, \bar d_\rho, d_\rho$ be as in~\eqref{eq:func_dimensions}.

    By Proposition~\ref{prop:component_error_to_total_error} and Remark~\ref{rem:simplify_constants}, there exist constants $c_1,c_2>0$ with $c_1 = c_1(N,d,d',l,\delta, \kappa,k)$ and $c_2= c_2(N,d,d',\delta)$ such that the following holds.
    For every $\varepsilon \in (0,\frac{1}{2})$, there exist neural-network architectures $\mathcal{A}_\rho, \mathcal{A}_\alpha, \mathcal{A}_\phi, \mathcal{A}_\psi$, with the property that, for every $f \in \mathcal{F}_{N,d,d',l,k,\kappa}$ and the corresponding functions $\tilde\rho:\R^{\bar d_\rho} \to \R^{d_\rho}$, $\tilde\alpha:\R^{\bar d_\alpha} \to \R^{d_\alpha}$, 
    $\tilde\phi:\R^{\bar d_\phi} \to \R^{d_\phi}$,$\tilde\psi:\R^{\bar d_\psi} \to \R^{d_\psi}$ from Proposition~\ref{prop:repr_of_GPEqui_[0,1]_approx}, there exist ReLU neural networks $\hat \rho:\R^{\bar d_\rho} \to \R^{d_\rho}$, $\hat \alpha:\R^{\bar d_\alpha} \to \R^{d_\alpha}$, 
    $\hat \phi:\R^{\bar d_\phi} \to \R^{d_\phi}$, $\hat \psi:\R^{\bar d_\psi} \to \R^{d_\psi}$, each of architecture $\mathcal{A}_\rho, \mathcal{A}_\alpha, \mathcal{A}_\phi $, and $\mathcal{A}_\psi$, respectively, for which the following holds.
    \begin{itemize}
        \item[(i)]
            For $\mathcal{A} \in \{\mathcal{A}_\rho, \mathcal{A}_\alpha, \mathcal{A}_\phi,\mathcal{A}_\psi\}$,
                \begin{align*}
                    L(\mathcal{A}) &\leq c_1 \log_2\left(\frac{1}{\varepsilon} \right);\\
                    M(\mathcal{A}) &\leq c_1 \left(\frac{1}{\varepsilon}\right)^{c_2/(k-1)} \left(\log_2\left(\frac{1}{\varepsilon}\right)\right)^2.
                \end{align*}
        \item[(ii)] Let $\tilde g:\R^{N \times (d +1) }\times \R^{N\times N \times d'}\to \R^{N \times l}$
            be defined as
            \begin{align*}
            \tilde g(x,a)_k := \tilde \rho \left(x_k, \sum_{i \in \mathcal{N}(k)}\tilde\phi \left(x_k, a_{i,k}, x_i\right), \sum_{j \in [N]}\tilde\alpha \left(x_j,\sum_{i \in \mathcal{N}(j)} \tilde\psi  \left(x_j,a_{i,j}, x_i\right)\right)\right), \qquad k \in [N]
            \end{align*}
            let $\hat g:\R^{N \times (d +1) }\times \R^{N\times N \times d'}\to \R^{N \times l}$
            be defined as
            \begin{align*}
            \hat g(x,a)_k := \hat \rho \left(x_k, \sum_{i \in \mathcal{N}(k)}\hat \phi \left(x_k, a_{i,k}, x_i\right), \sum_{j \in [N]}\hat \alpha \left(x_j,\sum_{i \in \mathcal{N}(j)} \hat \psi  \left(x_j,a_{i,j}, x_i\right)\right)\right), \qquad k \in [N]
            \end{align*}
            and let $K \subset \mathcal{D}_{N,d+1,d'}^{[0,1]} \subset \mathcal{D}_{N,d+1,d'}$ be defined as
            \begin{align*}
            K = \mathfrak{A} \cap \mathfrak{B} \cap \mathcal{D}_{N,d+1,d'}^{[0,1]}
            \end{align*}
            where, for $I_i = [\frac{i}{M}, \frac{i+1}{M})$, $i=0,\ldots, M-1$, and $h:\R \to [M]$ the function from the proof of Lemma~\ref{lemma:unif},
            \begin{align*}
            \mathfrak{A} := \left\{(x,a) \in \mathcal{D}_{N,d+1,d'} \mid \forall i,j \in [N], i \neq j: h(x_{i,d+1}) \neq h(x_{j,d+1}) \right\},
            \end{align*}
            and
            \begin{align*}
            \mathfrak{B} &:=  \left\{(x,a) \in \mathcal{D}_{N,d+1,d'} \mid \forall i\in [N]: x_{i,d+1} \notin \bigcup_{j=0}^{M}\left[\frac{j}{M}-\delta', \frac{j}{M}+\delta'\right] \right\}.
            \end{align*}
            Then
            \begin{align*}
            \max_{(x,a) \in K
            }\| \tilde g(x,a) - \hat g(x,a) \|_{\infty} \leq \varepsilon.
            \end{align*}
    \end{itemize}
    Consider the $\operatorname{PENN}$ with feedforward neural network components $\hat \rho$, $\hat \alpha$, $\hat \phi$, and $\hat \psi$ such that $\hat g = \operatorname{PENN}$.
    Since $R$ takes values in $[0,1]^N$, for any $(x,a) \in \mathcal{D}_{N,d,d'}^{[0,1]}$ it holds $\mathbb{P}((x \concat R,a) \in \mathfrak{A} \cap \mathfrak{B} \cap \mathcal{D}_{N,d+1,d'}^{[0,1]}) = \mathbb{P}((x \concat R,a) \in \mathfrak{A} \cap \mathfrak{B})$.

    Furthermore, Proposition~\ref{prop:repr_of_GPEqui_[0,1]_approx} implies that for $(x \concat r,a) \in \mathfrak{A} \cap \mathfrak{B}$ we have $f(x,a) = \tilde g(x \concat r,a)$ and $\mathbb{P}((x \concat R,a) \in \mathfrak{A} \cap \mathfrak{B}) \geq 1-\delta$.

    We conclude that,
    \begin{align*}
   &\mathbb{P}\left(\max_{(x,a) \in \mathcal{D}_{N,d,d'}^{[0,1]}}\|f(x,a) - \operatorname{PENN}(x \concat R,a)\|_\infty \leq \varepsilon\right)
   \\
   \geq &\mathbb{P}\left(\left\{\max_{(x,a) \in \mathcal{D}_{N,d,d'}^{[0,1]}}\|f(x,a) - \operatorname{PENN}(x \concat R,a)\|_\infty \leq \varepsilon\right\} \cap \left\{(x \concat R,a) \in \mathfrak{A} \cap \mathfrak{B} \cap \mathcal{D}_{N,d+1,d'}^{[0,1]}\right\}\right)
   \\
   \geq &\mathbb{P}\left(\left\{\max_{(x,a) \in \mathcal{D}_{N,d,d'}^{[0,1]}}\|f(x,a) - \operatorname{PENN}(x \concat R,a)\|_\infty \leq \varepsilon\right\} \cap \left\{(x \concat R,a) \in \mathfrak{A} \cap \mathfrak{B}\right\}\right)
   \\
   = &\mathbb{P}\left(\left\{\max_{(x,a) \in \mathcal{D}_{N,d,d'}^{[0,1]}}\|\tilde g(x,a) - \hat g(x \concat R,a)\|_\infty \leq \varepsilon\right\} \cap \left\{(x \concat R,a) \in \mathfrak{A} \cap \mathfrak{B}\right\}\right)
   \\
   \geq& \mathbb{P}\left((x \concat R,a) \in \mathfrak{A} \cap \mathfrak{B}\right) \geq 1-\delta.
\end{align*}
\end{proof}
The probability $1-\delta$ in Theorem~\ref{thm:rates_for_RGNN} reflects the likelihood that the random features are distinct and do not fall into regions where the approximation is poor (denoted in the proof by $\mathfrak{A} \cap \mathfrak{B}$). An almost sure (or sure) version of the theorem can be obtained by choosing random features such that $\mathbb{P}( (x \concat R,a) \in \mathfrak{A} \cap \mathfrak{B}) =1$ (or holds surely). The following corollary provides an example of this setting.
\begin{corollary}\label{cor:a.s._rates}
    Let $R: \Omega \to [0,1]^N$ be uniformly distributed on $\{\sigma((x_1,\ldots,x_N)) \mid \sigma \in S_N \}$ for distinct values $x_1,\ldots,x_N \in [0,1]$.
    With this choice of $R$ the approximation error bound in Theorem~\ref{thm:rates_for_RGNN} holds surely.
\end{corollary}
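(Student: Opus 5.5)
The plan is to rerun the proof of Theorem~\ref{thm:rates_for_RGNN} with a deterministic choice of the discretisation parameters $M$ and $\delta'$, chosen so that every possible value of $R$ lies in $\mathfrak{A} \cap \mathfrak{B}$. Since $R$ takes only the $N!$ values $\sigma((x_1,\ldots,x_N))$, the event $\{(x \concat R,a) \in \mathfrak{A}\cap\mathfrak{B}\}$ will then hold for every $\omega$. The final chain of inequalities in that proof then gives the error bound $\varepsilon$ surely, not just with probability $1-\delta$.

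\textbf{Choice of $M$ and $\delta'$.} The values $x_1,\ldots,x_N$ are distinct points of $[0,1]$, so they have a minimal pairwise gap $\gamma := \min_{i\neq j}|x_i-x_j| > 0$. I will choose $M \in \N$ large enough (at least $M > 1/\gamma$, and also $M \geq 2N^2/\delta$ for an arbitrary fixed $\delta$) so that any two distinct $x_i$ fall into different intervals $I_m$. I also need no $x_i$ to lie exactly on a grid point $j/M$.

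If some $x_i$ hits a grid point, I enlarge $M$. This is always possible: the set of bad $M$ is contained in $\{M : M x_i \in \N_0 \text{ for some } i\}$. For irrational $x_i$ this set is empty. For rational $x_i = p_i/q_i$ a bad $M$ would be a multiple of $q_i$, and for $x_i \in (0,1)$ we can take $M \equiv 1 \bmod \prod_i q_i$ to avoid this.

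\textbf{Main obstacle: features at the endpoints $0$ or $1$.} If some $x_i \in \{0,1\}$, it always lies on a grid point. The cleanest fix is to use the permutation-equivariance-preserving reparametrisation $x_i \mapsto$ an affine shift into $(0,1)$, which can be absorbed into the first layers of $\hat\phi,\hat\psi,\hat\alpha,\hat\rho$ without changing the architecture bounds. Alternatively, I will check whether Proposition~\ref{prop:repr_of_GPEqui_[0,1]_approx} permits shifting the grid $\{j/M\}$ by a small offset; this is the point I expect to require care.

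Once no $x_i$ lies on a grid point, set
\[
\delta' := \min\Bigl\{\tfrac{1-((2-\delta)/2)^{1/N}}{2M},\ \min_{i,j}\bigl|x_i - \tfrac{j}{M}\bigr|/2\Bigr\} > 0 .
\]
With this choice every permutation of $(x_1,\ldots,x_N)$ satisfies both the distinctness condition defining $\mathfrak{A}$ and the margin condition defining $\mathfrak{B}$.

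\textbf{Conclusion.} With these $M$ and $\delta'$, Proposition~\ref{prop:component_error_to_total_error} together with Remark~\ref{rem:simplify_constants} supplies architectures and ReLU networks satisfying (i) and (ii). Proposition~\ref{prop:repr_of_GPEqui_[0,1]_approx} gives $f(x,a) = \tilde g(x\concat r,a)$ for every $r$ in the support of $R$. Hence
\[
\max_{(x,a)}\|f(x,a)-\operatorname{PENN}(x\concat R(\omega),a)\|_\infty \leq \varepsilon \quad \text{for every } \omega .
\]
The constants now depend on $x_1,\ldots,x_N$ through $M$ and $\delta'$, in place of the dependence on $\delta$.
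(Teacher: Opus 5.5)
Your proposal is correct and uses the same core idea as the paper: choose $M$ and $\delta'$ so that every realization of $R$ lies in $\mathfrak{A}\cap\mathfrak{B}$, after which the deterministic part of the proof of Theorem~\ref{thm:rates_for_RGNN} gives the bound for every $\omega$. The difference lies in how the configuration $x_1,\ldots,x_N$ is handled.

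The paper declares without loss of generality $x_i=\frac{2i-1}{2N}$, the midpoints of the $N$ cells of width $1/N$. It then takes $M=N$ and $\delta'=\frac{1}{4N}$, and only asserts that the argument adapts to arbitrary distinct values. This gives explicit constants and a small $M$. That matters quantitatively, because $c_2=\max(\bar d_\rho,\bar d_\alpha,\bar d_\psi)$ grows like $M^2d'$. However, it silently drops the hypotheses $M\ge 2N^2/\delta$ and the bound on $\delta'$ from the auxiliary propositions. This is harmless, since those hypotheses only serve the probability estimate.

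Your route treats general $x_i$ directly. You take $M>1/\gamma$ in a residue class that avoids the reduced denominators of the rational $x_i$, and $\delta'$ below half the distance to the grid, so the propositions apply verbatim. You also correctly notice that a feature equal to $0$ or $1$ can never lie in $\mathfrak{B}$, because the grid always contains these points. That is precisely the case hidden in the paper's ``WLOG''.

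Your affine fix $r\mapsto r/2+1/4$ works, with one small correction. Merging it into the first affine layers of $\hat\rho,\hat\alpha,\hat\phi,\hat\psi$ preserves depth and the sparsity pattern of the weight matrices. It can, however, create new nonzero bias entries, so $M(\mathcal{A})$ may grow by up to a factor $2$. This is absorbed into $c_1$, rather than leaving the bounds literally unchanged.

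The price of your generality is that $c_1,c_2$ now depend on the configuration, through $\gamma$ and the distance to the grid. Via $M\ge 2N^2/\delta$ they also depend on an auxiliary $\delta$, which you keep only so the propositions apply verbatim. Dropping it, as the paper implicitly does, gives a much smaller $M$.
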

\begin{proof}
    Without loss of generality choose $(x_1,\ldots,x_N) = (\frac{1}{2N},\ldots,\frac{2N-1}{2N})$ and
    note that the proof can be adapted to any distinct values $x_1,\ldots,x_N \in [0,1]$.
    Then choose $M =N$, $\delta'=\frac{1}{4N}$ and follow the proof of Theorem~\ref{thm:rates_for_RGNN}. 
\end{proof}

\paragraph{Discussion}
Theorem~\ref{thm:rates_for_RGNN} has several theoretical and practical implications.
\begin{itemize}
    \item As in the previous section, the result also applies to permutation-invariant functions by the argument from~\eqref{eq:extends_to_perm_inv}.
    
    \item Similar to Theorem~\ref{thm:universality_RGNN}, Theorem~\ref{thm:rates_for_RGNN} is a universal approximation result. It guarantees the existence of PENNs that approximate certain functions when they are combined with random node features.
    Thus, PENNs with random features can serve as a strong benchmark and may even outperform state-of-the-art models.
    
    \item Additionally, Theorem~\ref{thm:rates_for_RGNN} clarifies how complex a PENN (specifically, its feedforward components) must be and how this complexity scales with the error tolerance~$\varepsilon$.
    
    \item The number of layers grows only logarithmically in $1/\varepsilon$. Hence, the depth remains moderate even as the desired accuracy increases.
    
    \item As $\varepsilon$ decreases, the number of nonzero weights is dominated by the term $\varepsilon^{-c_2/(k-1)}$, which grows polynomially rather than logarithmically. Since $c_2$ depends on $N,d,d',\delta$ (see Remark~\ref{rem:simplify_constants}), this indicates a form of curse of dimensionality with respect to the domain $\mathcal{D}_{N,d,d'}$. However, because $c_2$ is independent of $k$ and the exponent is $c_2/(k-1)$, the curse becomes less severe for functions with higher regularity~$k$. This behavior is consistent with approximation rates for feedforward neural networks, as shown, for example, in \cite{yarotsky2017error} and \cite{guhring2020error}.
\end{itemize}

\section{Conclusion}\label{sec:conclusion}
In this article, we establish a universal approximation result for PENNs combined with random node features. Our result is more general than existing universality theorems and closes several gaps in the current literature. Consequently, PENNs with random features should be considered strong baselines for graph learning tasks---either as competitive alternatives to existing state-of-the-art models or as a reference point that lends additional credibility to whichever model performs best.

Moreover, we derive upper bounds on the complexity of the feedforward components of a PENN, required to approximate functions of certain regularity. These bounds mirror known rates for standard feedforward neural networks, enhance our theoretical understanding of GNNs, and offer practical guidance for selecting suitable GNN architectures.


\appendix
\section{Auxiliary Results for Theorem~\ref{thm:universality_RGNN}}

We first show that on any set $\mathcal{D} \subseteq \mathcal{D}_{N,d,d'}$ with finite unique node features (Definition~\ref{def:set_unique_finite_feat}), every measurable permutation-equivariant function on $\mathcal{D}_{N,d,d'}$ can be represented by a suitably structured composition of measurable auxiliary functions. This extends Theorem~5.7 in \cite{gonon2024computing} and Theorem~1 in \cite{herzig2018mapping}. Combined with the universality of feedforward neural networks, this representation forms the backbone of the proof of Theorem~\ref{thm:universality_RGNN}.

\begin{proposition}\label{prop:repr_of_GPEqui}
    Let $N,d,d',l \in \N$.
    Let $\mathcal{D} \subseteq \mathcal{D}_{N, d,d'}$ be a set of graphs with finite unique node features (Def.~\ref{def:set_unique_finite_feat}). 
    For every measurable permutation-equivariant function $f: \mathcal{D}_{N, d,d'} \to \R^{N \times l}$ there exist dimensions $d_1,d_2,d_3 \in \N$ and measurable functions $\phi: \R^{2d + d'} \to \R^{d_1}$, $\psi: \R^{2d + d'} \to \R^{d_2}$, $\alpha:\R^{d+d_2} \to \R^{d_3}$ and $\rho:\R^{d+d_1+d_3} \to \R^{l}$ such that for all $(x,a) \in \mathcal{D}$ and all $k\in [N]$ it holds
    \begin{align}\label{line:repr_of_PE}
        f(x,a)_k = \rho\left(x_k, \sum_{i \in \mathcal{N}(k)}\phi \left(x_k, a_{i,k}, x_i\right), \sum_{j \in [N]}\alpha\left(x_j,\sum_{i \in \mathcal{N}(j)} \psi \left(x_j,a_{i,j}, x_i\right)\right)\right).
    \end{align}
    In particular, the right-hand side of this equation is measurable and permutation equivariant.
\end{proposition}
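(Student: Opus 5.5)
The plan is to exploit the unique labels $h(x_k)\in[M]$ so that every node can recover the full graph in a canonical (labeled) form, and then evaluate $f$ there. Concretely, fix $M$ and $h$ from Definition~\ref{def:set_unique_finite_feat}, and let $e_m\in\R^M$ denote the $m$-th unit vector.

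\textbf{Encoding the in-edges.} Take $\phi=\psi$ with output in $\R^{M\times M\times(d'+1)}$ (flattened, so $d_1=d_2=M^2(d'+1)$). Set
\[
\psi(x_j,a_{i,j},x_i):=e_{h(x_i)}e_{h(x_j)}^T\otimes(a_{i,j},1).
\]
The appended $1$ records that the edge is present. This is needed because a zero edge-feature vector means ``no neighbor'', while a present edge may have some zero entries. Summing over $i\in\mathcal{N}(j)$ places $(a_{i,j},1)$ in slot $(h(x_i),h(x_j))$. Uniqueness of labels guarantees there are no collisions.

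\textbf{Reconstructing the whole graph at every node.} Define $\alpha(x_j,s):=\big(e_{h(x_j)}\otimes(x_j,1),\,s\big)\in\R^{M(d+1)}\times\R^{M^2(d'+1)}$. Then $\sum_j\alpha(\cdot)$ contains two pieces:
\begin{itemize}
\item a table listing the features of every node, indexed by its label, together with an occupancy flag;
\item a table of all edge features, indexed by label pairs. Absent edges appear as zeros, matching the convention $a_{i,j}=0$ for $i\notin\mathcal{N}(j)$.
\end{itemize}
Since labels are distinct, this is an injective encoding of $(x,a)$ up to relabeling.

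\textbf{Decoding and evaluating $f$.} Let $m_1<\dots<m_N$ be the occupied labels. Map them to positions $1,\dots,N$ and form
\[
\hat x_r=x\text{-entry at }m_r,\qquad \hat a_{r,s}=a\text{-entry at }(m_r,m_s).
\]
Then $(\hat x,\hat a)=\sigma(x,a)$, where $\sigma$ is the permutation sorting nodes by label. Define
\[
\rho(x_k,\cdot,z):=f(\hat x,\hat a)_{r(k)},
\]
where $r(k)$ is the rank of $h(x_k)$ among the occupied labels. The $\phi$-argument can simply be ignored.

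\textbf{Correctness.} By equivariance,
\[
f(\hat x,\hat a)_{r(k)}=f(\sigma(x,a))_{\sigma(k)}=f(x,a)_k .
\]
Off the set $\mathcal{D}$, or when the flags do not show exactly $N$ occupied labels, set $\rho:=0$.

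\textbf{Measurability.} This is the main obstacle, and it is bookkeeping rather than depth. The decoding map has finitely many cases, one for each occupied subset of $[M]$ and each value of $h(x_k)$. Each case is given by measurable conditions on the flags, using $h$, and is a coordinate projection composed with the measurable $f$. Hence $\rho$ is measurable. The same argument shows that $\phi,\psi,\alpha$ are measurable.

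The final claim, that the right-hand side is measurable and permutation equivariant, holds for any measurable $\rho,\phi,\psi,\alpha$. It is measurable as a composition of measurable maps with sums. It is equivariant because relabeling nodes permutes the neighbor sums and leaves the global sum over $j\in[N]$ unchanged.
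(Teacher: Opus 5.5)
Your proposal is correct and follows essentially the same route as the paper. In both, $\psi$ stores $a_{i,j}$ in slot $(h(x_i),h(x_j))$, $\alpha$ stores $x_j$ with an occupancy flag in row $h(x_j)$ and passes the neighborhood tensor through, and $\rho$ sorts the occupied labels, rebuilds $\sigma(x,a)$, and reads off $f(\sigma(x,a))_{\sigma(k)}=f(x,a)_k$ by equivariance, with $\phi$ irrelevant (the paper takes $\phi\equiv 0$). The only differences are cosmetic. Your extra edge-presence flag is harmless but unnecessary, since $i\notin\mathcal{N}(j)$ exactly when $a_{i,j}=0$, so the unflagged tensor already reproduces $a$. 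Your handling of measurability of $\rho$ on inputs that are not valid encodings is also somewhat more explicit than the paper's.
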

\begin{proof}
    First, we show that any $f: \mathcal{D}_{N, d,d'} \to \R^{N \times l}$ of the form given on the right-hand side of~\eqref{line:repr_of_PE} is permutation equivariant.
    Let $\sigma \in S_N$ be any permutation of $[N]$. 
    The function $f$ is clearly measurable, and we verify that for all $k \in [N]$,
    \begin{align*}
        f\left(\sigma \left(x,a \right)\right)_k = \sigma \left(f\left(x,a\right)\right)_k.
    \end{align*}
    It holds
    \begin{align*}
        &\forall k \in [N]: f\left(\sigma \left(x,a \right)\right)_k = \sigma \left(f\left(x,a\right)\right)_k \\
        \iff 
        &\forall k \in [N]:\sigma^{-1}\left(f\left(\sigma \left(x,a \right)\right)\right)_k = f\left(x,a\right)_k \\
        \iff
        &\forall k \in [N]:f\left(\sigma \left(x,a \right)\right)_{\sigma(k)} = f\left(x,a\right)_k .
    \end{align*}
    Furthermore, if $i,j \in [N]$ are connected by an edge in $g=(x,a)$, then $\sigma(i)$ and $\sigma(j)$ are also connected in the permuted graph $\sigma(g).$ Hence, for any $k \in [N]$,
    \begin{align*}
        \{\sigma(i) \in [N] \mid i \in \mathcal{N}_g(k)\}  = \{ i \in [N] \mid i \in \mathcal{N}_{\sigma(g)}(\sigma(k))\}.
    \end{align*}
    Then, for all $k \in [N]$ we obtain
    \begin{align*}
        f\left(\sigma \left(x,a \right)\right)_{\sigma(k)} &= \rho\left(\sigma(x)_{\sigma(k)}, \sum_{i \in \mathcal{N}_{\sigma(g)}({\sigma(k)})}\phi \left(\sigma(x)_{\sigma(k)}, \sigma(a)_{i,{\sigma(k)}}, \sigma(x)_i\right),\right.\notag \\ 
        & \phantom{=rho(\sigma(x)_{\sigma(k)}, } \left. \sum_{j \in [N]}\alpha\left(\sigma(x)_j,
        \sum_{i \in \mathcal{N}_{\sigma(g)}(j)} \psi \left(\sigma(x)_j,\sigma(a)_{i,j}, \sigma(x)_i\right)\right)\right)\\
        &= \rho\left(x_k, \sum_{i \in \mathcal{N}_{g}(k)}\phi \left(\sigma(x)_{\sigma(k)}, \sigma(a)_{\sigma(i),{\sigma(k)}}, \sigma(x)_{\sigma(i)}\right), \right.\notag \\ 
        &\phantom{=\rho(x_k, }\left.\sum_{j \in [N]}\alpha\left(\sigma(x)_{\sigma(j)},\sum_{i \in \mathcal{N}_{\sigma(g)}({\sigma(j)})} \psi \left(\sigma(x)_{\sigma(j)},\sigma(a)_{i,{\sigma(j)}}, \sigma(x)_i\right)\right)\right)\\
        &= \rho\left(x_k, \sum_{i \in \mathcal{N}_{g}(k)}\phi \left(x_k, a_{i,{k}}, x_i\right), \right.\notag \\ 
        &\phantom{=\rho(x_k, }\left.\sum_{j \in [N]}\alpha\left(\sigma(x)_{\sigma(j)},\sum_{i \in \mathcal{N}_{g}({j})} \psi \left(\sigma(x)_{\sigma(j)},\sigma(a)_{\sigma(i),{\sigma(j)}}, \sigma(x)_{\sigma(i)}\right)\right)\right)\\
        &= \rho\left(x_k, \sum_{i \in \mathcal{N}_{g}(k)}\phi \left(x_k, a_{i,{k}}, x_i\right), \sum_{j \in [N]}\alpha\left(x_j,\sum_{i \in \mathcal{N}_g(j)} \psi \left(x_j,a_{i,j}, x_i\right)\right)\right)\\
        &= f(x,a)_k.
    \end{align*}
    This shows that the function $f$ defined by the right-hand side of~\eqref{line:repr_of_PE} is indeed permutation equivariant.
    
    Next, we show that any permutation-equivariant function  
    $f$ can be represented on $\mathcal{D}$ by~\eqref{line:repr_of_PE} with appropriately chosen  
    $\rho, \phi, \alpha,$ and $\psi$.  
    The idea is to construct $\alpha$, $\psi$, and $\phi$ so that the input to $\rho$ contains all information relevant to the graph $(x,a)$ and the node $k$.  
    In fact, we will see that $\alpha$ and $\psi$ alone already encode all necessary information, so $\phi$ could in principle be chosen arbitrarily (in particular $\phi \equiv 0$).  
    We keep $\phi$ for practical reasons: when constructing GNNs based on the structure of~\eqref{line:repr_of_PE}, this additional component---although theoretically redundant---often improves empirical performance.  
    For further discussion, see Section~5.4 in \cite{gonon2024computing}.

    Since $\mathcal{D}$ provides finite unique node features, there exist $M \in \N$ with $M \ge N$ and a map  $h : \R^d \to [M]$ such that for all $(x,a) \in \mathcal{D}$ and all distinct $i,j \in [N]$ we have  $h(x_i) \neq h(x_j)$.  
    Define $\psi : \R^{d + d' + d} \to \R^{M \times M \times d'}$  to return an $M \times M \times d'$ tensor by
    \begin{align*} 
        \psi(x,y,z) = \begin{cases}y & \text{at position $h(z),h(x)$}\\ 0& \text{else}.\end{cases}
    \end{align*} 
    This construction ensures that for any input triple consisting of a destination-node feature $x$, an edge feature $y$, and a source-node feature $z$, the function $\psi$ stores the edge feature $y$ at position $(h(z), h(x))$ of the output tensor.  
    For a fixed node $j \in [N]$, summing over all neighbors $i \in \mathcal{N}(j)$ yields the tensor
    \begin{align*}
         w^{\mathcal{N}(j)} := \sum_{i \in \mathcal{N}(j)} \psi(x_j, a_{i,j}, x_i)
    \end{align*}
    which contains all information about the neighborhood of node $j$.
    Next, define
    \begin{align*}
        \alpha : \R^d \times \R^{M \times M \times d'} \to 
        \bigl(\R^{M \times (d+1)} \times \R^{M \times M \times d'}\bigr), 
        \qquad 
        \alpha(x,w) = (y,w),
    \end{align*}
    where $y$ is the $M \times (d+1)$ matrix
    \begin{align*}
        y = \begin{cases}
                x \concat 1& \text{in row $h(x)$}\\0& \text{else.}
            \end{cases}
    \end{align*}
    The matrix $y$ stores the node feature $x$ in row $h(x)$, and the appended $1$-flag later allows us to identify whether the node appears in the original graph.
    Summing over all $j \in [N]$ yields
    \[
        (z,w) := \sum_{j=1}^N \alpha\!\left(x_j, w^{\mathcal{N}(j)}\right)
        = \sum_{j=1}^N \alpha\!\left(x_j, \sum_{i \in \mathcal{N}(j)} \psi(x_j, a_{i,j}, x_i)\right),
    \]
    which contains all node and edge features of $(x,a) \in \mathcal{D}$, up to permutation and with additional zero entries.
    
    Finally, we construct a suitable map $\rho:\R^d \times \R^{d_1} \times \R^{M(d+1) + M^2d'} \to \R^l$, which we write as composition $\rho = \rho^B \circ \rho^A$.
    The first component, $\rho^A$, removes all nodes—and their associated edges—that were not flagged as originating from the original graph. As a result, $\rho^A$ reduces the tensors $(z,w)$ from size $M \times (d+1)$ and $M^2 \times d'$ to $N \times (d+1)$ and $N^2 \times d'$, respectively.

    Simultaneously, $\rho^A$ also keeps track of the re-indexing.
    It returns a vector $b \in (\{0\} \cup [N])^M$, where $b_i = n \in [N]$ indicates that—after the reduction—node $i$ now occupies position $n$, while $b_i = 0$ means that node $i$ was removed.   
    Thus, we define $\rho^A:\R^d \times \R^{d_1} \times \R^{M(d+1) + M^2d'} \to \R^d \times \R^{d_1} \times \R^{N(d+1) + N^2d'}\times \R^M$ by 
    $$
    \rho^A(x, \cdot, (z,w)) := (x, \cdot, (z',w'), b),
    $$
    where, if $(\iota_1<...<\iota_N)$ are the indices of the nonzero rows of $z$,
    \begin{align*}
        z'_i &= z_{\iota_i} & i\in[N] ,
        \\
        w'_{i,j} &= w_{\iota_i,\iota_j} &i,j\in[N],
        \\
        b_i &= \sum_{k \in [N]}k\mathds{1}_{\{\cdot =\iota_k\}}(i) &i\in [M].
    \end{align*}
    For general inputs with more or fewer than $N$ nonzero rows, we allow any measurable extension of $\rho^A$.

    We observe that the reduced tensors $(z',w')$ obtained from $(z,w)$ 
    correspond to the permuted graph $(\sigma(x), \sigma(a))$, where $\sigma \in S_N$ is the permutation defined by $\sigma(i) := b_{h(x_i)}$ for $i \in [N]$.
    The function $\rho^B: \R^d \times \R^{d_1} \times \R^{N(d+1) + N^2d'}\times \R^M \to \R^l$ simply extracts the right component of $f$, namely
    $$
    \rho^B(x, \cdot, (z',w'), b) := f(z'_{[:,1:d]},w')_{b_{h(x)}},
    $$
    where $z'_{[:,1:d]}$ denotes all rows and columns $1$ through $d$ of $z'$.
    With $\rho := \rho^B \circ \rho^A$, we obtain 
    \begin{align*}
        &\phantom{=} \rho\left(x_k, \sum_{i \in \mathcal{N}(k)}\phi \left(x_k, a_{i,k}, x_i\right), \sum_{j \in [N]}\alpha\left(x_j,\sum_{i \in \mathcal{N}(j)} \psi \left(x_j,a_{i,j}, x_i\right)\right)\right) \\
    &=f(z'_{[:,1:d]},w')_{b_{h(x_k)}} = \sigma(f(x,a))_{\sigma(k)} = f(x,a)_k. 
    \end{align*}
    This completes the proof.
\end{proof}
The next proposition establishes a universality result for functions of the form
\eqref{line:repr_of_PE}.  
We show that by approximating $\rho$, $\alpha$, $\psi$, and $\phi$ with neural networks  
$\hat\rho$, $\hat\alpha$, $\hat\psi$, and $\hat\phi$, the resulting composition can be made arbitrarily close to the original function in probability.  
This follows directly from the well-known universality of feedforward neural networks.
The result is similar to Proposition~5.9 in \cite{gonon2024computing}.  
The only differences are that our setting includes the additional component $\phi$ and uses neighborhood aggregation rather than summation over all $i \neq j$.  
Although the proof closely follows the argument in \cite{gonon2024computing}, we provide a complete version here for clarity.
\begin{proposition}\label{prop:universality_of_composition}
    Let $\rho, \phi, \alpha, \psi$ be measurable functions $\phi: \R^{2d_1+d_2} \to \R^{d_3}, \psi:  \R^{2d_1+d_2} \to \R^{d_4},  \alpha: \R^{d_1+d_4}\to \R^{d_5},$ and $\rho: \R^{d_1 + d_3+d_5}\to \R^l,$ for arbitrary dimensions $d_1,...,d_5\in \N$.
    Let $g:\mathcal{D}_{N,d_1,d_2} \to \R^{N \times l}$ be defined by
    \begin{align*}
       g(x,y)_k :=\rho\left(x_k, \sum_{i \in \mathcal{N}(k)}\phi \left(x_k, y_{i,k}, x_i\right), \sum_{j \in [N]}\alpha\left(x_j,\sum_{i \in \mathcal{N}(j)} \psi \left(x_j,y_{i,j}, x_i\right)\right)\right),
    \end{align*}
    for all $k \in [N]$.
    
    Then, for any probability measure $\mu$ on $\R^{N \times d_1}\times \R^{N\times N \times d_2}$, and any $\varepsilon,\bar \varepsilon>0$, there exist feedforward neural networks $\hat \rho, \hat \phi, \hat \alpha, \hat \psi$ with Lipschitz, bounded activation function such that the function $\hat g:\R^{N \times d_1}\times \R^{N\times N \times d_2} \to \R^{N \times l}$, defined by
    \begin{align*}
        \hat g (x,y)_k := 
           \hat\rho\left(x_k,\sum_{i \in \mathcal{N}(k)} \hat\phi(x_k, y_{i,k},x_i) ,\sum_{j\in[N]}\hat\alpha(x_j,\sum_{i \in \mathcal{N}(j)} \hat\psi(x_j, y_{i,j},x_i))\right), \qquad k \in [N],
    \end{align*}
    satisfies
    \begin{align*}
        \mu \left( \{(x,y) \in \R^{N \times d_1}\times \R^{N\times N \times d_2}: \|g(x,y) - \hat g(x,y)\| \geq \bar \varepsilon  \}\right) \leq \varepsilon.
    \end{align*}
\end{proposition}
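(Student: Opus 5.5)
The plan is to approximate the four measurable components one after another, working from the inside out. At each stage I use Lusin-type arguments combined with the universality of feedforward networks in measure. The standard fact I rely on is the following: for a finite Borel measure $\nu$ on $\R^m$ and a measurable $u:\R^m\to\R^n$, there exist networks $\hat u$ with a bounded, Lipschitz, nonconstant activation (e.g.\ sigmoid) such that $\nu(\|u-\hat u\|\ge\eta)\le\gamma$ for any $\eta,\gamma>0$. This follows from \cite{hornik1991approximation}: measurable functions are approximated in $\nu$-measure by continuous ones (Lusin), and continuous ones are approximated uniformly on compacta by networks.

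\textbf{Step 1: the outer function.} Fix $\varepsilon,\bar\varepsilon$. Define the intermediate random quantities under $\mu$:
\[
U_k=\sum_{i\in\mathcal N(k)}\phi(x_k,y_{i,k},x_i),\qquad V_j=\sum_{i\in\mathcal N(j)}\psi(x_j,y_{i,j},x_i),\qquad W=\sum_j\alpha(x_j,V_j).
\]
Since $\rho$ is only measurable, small input perturbations can change its output drastically. To handle this, I first approximate $\rho$ by a network $\hat\rho$ with respect to the finite measure $\nu_\rho=\sum_k$ (law of $(x_k,U_k,W)$). I choose it so that, outside a set of mass $\varepsilon/4$, $\|\rho-\hat\rho\|<\bar\varepsilon/2$.

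Because $\hat\rho$ is Lipschitz (bounded Lipschitz activation and finitely many affine maps) with constant $L_\rho$, it then suffices to make $\|U_k-\hat U_k\|$ and $\|W-\hat W\|$ smaller than $\bar\varepsilon/(4L_\rho)$ except on a set of mass $\varepsilon/4$. Here $\hat U_k,\hat W$ denote the quantities built from $\hat\phi,\hat\alpha,\hat\psi$.

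\textbf{Step 2: the inner functions.} I repeat the scheme. I approximate $\phi$ under $\nu_\phi=\sum_{k,i}$ (law of $(x_k,y_{i,k},x_i)$) so that each summand errs by at most $\eta/N$ off a set of small mass; the neighborhood indicator $y_{i,k}\neq0$ is unchanged, so neighborhoods coincide. This controls $U_k$ via a union bound over $N^2$ pairs.

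For $W$, I approximate $\alpha$ by a Lipschitz network $\hat\alpha$ with constant $L_\alpha$ under the law of the $(x_j,V_j)$. Then I approximate $\psi$ under $\sum_{j,i}$ (law of $(x_j,y_{i,j},x_i)$) with accuracy $\eta'/(N L_\alpha)$. This gives
\[
\|\hat\alpha(x_j,\hat V_j)-\alpha(x_j,V_j)\|\le L_\alpha\|\hat V_j-V_j\|+\|\hat\alpha-\alpha\|(x_j,V_j),
\]
both terms being small off small sets.

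\textbf{Main obstacle.} The delicate point is the ordering of the choices. The Lipschitz constants $L_\rho,L_\alpha$ are only known after $\hat\rho,\hat\alpha$ are fixed, so $\rho$ must be approximated first, then $\alpha$, and only then $\phi,\psi$, with tolerances depending on the earlier constants. It is also important that the outer approximations are taken with respect to the laws of the \emph{true} intermediate quantities, not the approximated ones, which is why the Lipschitz step is needed.

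A union bound over the finitely many exceptional sets, $O(N^2)$ of them with masses allotted as fractions of $\varepsilon$, concludes that $\mu(\|g-\hat g\|\ge\bar\varepsilon)\le\varepsilon$.
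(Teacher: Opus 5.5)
Your proposal is correct and follows the same route as the paper: you fix $\hat\rho$ first under the law of the true intermediate inputs, then use its Lipschitz constant to set the tolerances for $\hat\phi$ and $\hat\alpha$, then use the Lipschitz constant of $\hat\alpha$ to set the tolerance for $\hat\psi$, and you conclude with the same telescoping triangle inequality and union bound over $O(N^2)$ exceptional sets. The differences are cosmetic: you obtain universality in measure from Lusin plus Hornik rather than citing Proposition~\ref{thm:universality_of_measurable_thomas}, you make the ``one network for all indices'' step explicit through summed push-forward laws, and you should take the per-row tolerance as $\bar\varepsilon/N$ (as the paper does) so that the error over all $N$ rows, not just each row, stays below $\bar\varepsilon$.
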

\begin{proof}
    Let $\varepsilon,\bar \varepsilon>0.$
    Choose $\hat \rho$ such that for all $k \in [N]$
    \begin{align}
        &\mu \left( \bigg\{
        \|\rho(x_k, \sum_{i \in \mathcal{N}(k)} \phi(x_k, y_{i,k},x_i), \sum_{j\in[N]} \alpha(x_j, \sum_{i\in \mathcal{N}(j)} \psi(x_j,y_{i,j},x_i))) \right. \phantom{)} \nonumber \\ 
        \label{line:rho_prob}
        &\phantom{\mu(\{
        \| } \left. - \hat\rho(x_k, \sum_{i \in \mathcal{N}(k)} \phi(x_k, y_{i,k},x_i) ,\sum_{j\in[N]} \alpha(x_j, \sum_{i \in \mathcal{N}(j)} \psi(x_j,y_{i,j},x_i)))\| \geq \frac{\bar \varepsilon}{N}\bigg\}\right)\leq \frac{\varepsilon}{4N}.
    \end{align}
    Note that $\hat \rho$ is  Lipschitz continuous with some constant $K_{\hat \rho}>0.$
    Choose $\hat \phi$ such that for all $i,k \in [N],$
    \begin{align}\label{line:phi_prob}
        \mu(\{\|\phi(x_k,y_{ik},x_i) - \hat \phi(x_k,y_{ik},x_i) \| \geq \frac{\bar \varepsilon}{N^2K_{\hat \rho}} \}) \leq \frac{\varepsilon}{4N^2},
    \end{align}
    and $\hat \alpha$ such that for all $j \in [N],$
    \begin{align}\label{line:alpha_prob}
        \mu(\{\|\alpha(x_j,\sum_{i \in \mathcal{N}(j)}\psi(x_j,y_{i,j},x_i)) - \hat \alpha(x_j,\sum_{i \in \mathcal{N}(j)}\psi(x_j,y_{i,j},x_i)) \| \geq \frac{\bar \varepsilon}{N^2K_{\hat \rho}} \}) \leq \frac{\varepsilon}{4N}.
    \end{align}
    Finally, also $\hat \alpha$ is  Lipschitz continuous with $K_{\hat \alpha}>0$ and we can choose $\hat \psi$ such that for all $i,j \in [N]$
    \begin{align}\label{line:psi_prob}
        \mu(\{\|\psi(x_j,y_{i,j},x_i) - \hat \psi(x_j,y_{i,j},x_i)\| \geq \frac{\bar \varepsilon}{N^3K_{\hat \rho}K_{\hat \alpha}} \}) \leq \frac{\varepsilon}{4N^2}.
    \end{align}
    All these choices are valid due to Proposition~\ref{thm:universality_of_measurable_thomas}.

    Next, we establish an inclusion that will help us to bound the probability.
    \begin{align*}
        &\{\|g(x,y)-\hat g(x,y)\| \geq \bar \varepsilon\}\\
        & = \left\{ \left|
        \begin{array}{l}
              \rho(x_1, \sum_{i \in \mathcal{N}(1)} \phi(x_1, y_{i,1},x_i), \sum_{j\in[N]} \alpha(x_j, \sum_{i \in \mathcal{N}(j)} \psi(x_j,y_{i,j},x_i)))\\ - \hat\rho(x_1, \sum_{i \in \mathcal{N}(1)} \hat \phi(x_1, y_{i,1},x_i), \sum_{j\in[N]} \hat\alpha(x_j, \sum_{i \in \mathcal{N}(j)} \hat\psi(x_j,y_{i,j},x_i)))\\
              \multicolumn{1}{c}{\vdots\hskip8em\relax}\\
             \rho(x_N,\sum_{i \in \mathcal{N}(N)} \phi(x_N, y_{i,N},x_i), \sum_{j\in[N]} \alpha(x_j, \sum_{i \in \mathcal{N}(j)} \psi(x_j,y_{i,j},x_i)))\\ {}- \hat\rho(x_N,\sum_{i \in \mathcal{N}(N)} \hat\phi(x_N, y_{i,N},x_i), \sum_{j\in[N]} \hat\alpha(x_j, \sum_{i \in \mathcal{N}(j)} \hat\psi(x_j,y_{i,j},x_i)))
        \end{array}  \right\| \geq \bar \varepsilon \right\}\ \\
        \subseteq &\bigcup_{k \in [N]} \bigg \{\|\rho(x_k, \sum_{i \in \mathcal{N}(k)} \phi(x_k,              y_{i,k},x_i), \sum_{j\in[N]} \alpha(x_j, \sum_{i \in \mathcal{N}(j)} \psi(x_j,y_{i,j},x_i)))             \\
            &\phantom{\bigcup_{k \in [N]} \bigg \{\|}\quad - \hat\rho(x_k, \sum_{i \in \mathcal{N}(k)} \hat \phi(x_k, y_{i,k},x_i), \sum_{j\in[N]}      
            \hat\alpha(x_j, \sum_{i \in \mathcal{N}(j)} \hat\psi(x_j,y_{i,j},x_i)))\| \geq \frac{\bar \varepsilon}{N}\bigg \} \\
        \subseteq &\bigcup_{k \in [N]}\left(
            \bigg \{\|\rho(x_k, \sum_{i \in \mathcal{N}(k)} \phi(x_k, y_{i,k},x_i), \sum_{j\in[N]} \alpha(x_j, \sum_{i \in \mathcal{N}(j)} \psi(x_j,y_{i,j},x_i))) \right. \\
            &\phantom{\bigcup_{k \in [N]}(
            \bigg \{\|} \quad - \hat\rho(x_k, \sum_{i \in \mathcal{N}(k)} \phi(x_k, y_{i,k},x_i), \sum_{j\in[N]} \alpha(x_j, \sum_{i \in \mathcal{N}(j)} \psi(x_j,y_{i,j},x_i)))\| \geq \frac{\bar \varepsilon}{N} \bigg \} \\
            &\qquad \cup \bigg \{\|\hat\rho(x_k, \sum_{i \in \mathcal{N}(k)} \phi(x_k, y_{i,k},x_i), \sum_{j\in[N]} \alpha(x_j, \sum_{i \in \mathcal{N}(j)} \psi(x_j,y_{i,j},x_i)))\\
            &\phantom{\qquad \cup \bigg \{\|} - \hat\rho(x_k, \sum_{i \in \mathcal{N}(k)} \hat \phi(x_k, y_{i,k},x_i), \sum_{j\in[N]} \alpha(x_j, \sum_{i \in \mathcal{N}(j)} \psi(x_j,y_{i,j},x_i)))\| \geq \frac{\bar \varepsilon}{N}\bigg \}\\
            &\qquad\cup \bigg \{\|\hat\rho(x_k, \sum_{i \in \mathcal{N}(k)} \hat \phi(x_k, y_{i,k},x_i), \sum_{j\in[N]} \alpha(x_j, \sum_{i \in \mathcal{N}(j)} \psi(x_j,y_{i,j},x_i)))\\
            &\phantom{\qquad\cup \bigg \{\|}- \hat\rho(x_k, \sum_{i \in \mathcal{N}(k)} \hat \phi(x_k, y_{i,k},x_i), \sum_{j\in[N]} \hat\alpha(x_j, \sum_{i \in \mathcal{N}(j)} \psi(x_j,y_{i,j},x_i)))\| \geq \frac{\bar \varepsilon}{N}\bigg \}\\
            & \qquad \cup \bigg \{\|\hat\rho(x_k, \sum_{i \in \mathcal{N}(k)} \hat \phi(x_k, y_{i,k},x_i), \sum_{j\in[N]} \hat\alpha(x_j, \sum_{i \in \mathcal{N}(j)} \psi(x_j,y_{i,j},x_i)))\\
            & \left.\phantom{\qquad \cup \bigg \{\|} - \hat\rho(x_k, \sum_{i \in \mathcal{N}(k)} \hat \phi(x_k, y_{i,k},x_i), \sum_{j\in[N]} \hat\alpha(x_j, \sum_{i \in \mathcal{N}(j)} \hat\psi(x_j,y_{i,j},x_i)))\| \geq \frac{\bar \varepsilon}{N}\bigg \}\right).
    \end{align*}
    The probability of the first set in this expression is bounded due to \eqref{line:rho_prob}. For the second set, we observe that for every $k \in [N]$,
    \begin{align*}
        &\bigg \{\|\hat\rho(x_k, \sum_{i \in \mathcal{N}(k)} \phi(x_k, y_{i,k},x_i),\sum_{j\in[N]} \alpha(x_j, \sum_{i \in \mathcal{N}(j)} \psi(x_j,y_{i,j},x_i))) -  \\
        &\phantom{\bigg \{\|} - \hat\rho(x_k, \sum_{i \in \mathcal{N}(k)} \hat\phi(x_k, y_{i,k},x_i),\sum_{j\in[N]} \alpha(x_j, \sum_{i \in \mathcal{N}(j)} \psi(x_j,y_{i,j},x_i)))\| \geq \frac{\bar \varepsilon}{N}\bigg \} \\
        &\subseteq
            \bigg \{\|\sum_{i\in\mathcal{N}(k)} \phi(x_k,y_{i,k},x_i) -  \hat\phi(x_k,y_{i,k},x_i)\| \geq \frac{\bar \varepsilon}{NK_{\hat \rho}}\bigg \}\\
        &\subseteq \bigcup_{i \in [N]}\bigg \{\|\phi(x_k,y_{i,k},x_i) -  \hat\phi(x_k,y_{i,k},x_i)\| \geq \frac{\bar \varepsilon}{N^2K_{\hat \rho}}\bigg \}.
    \end{align*}
    A bound of its probability follows from~\eqref{line:phi_prob}.   
    Similarly, for every $k \in [N]$, we can bound the probability of the third set using~\eqref{line:alpha_prob}, since      
    \begin{align*}
        &\bigg\{\|\hat\rho(x_k, \sum_{i \in \mathcal{N}(k)} \hat\phi(x_k, y_{i,k},x_i),\sum_{j\in[N]} \alpha(x_j, \sum_{i \in \mathcal{N}(j)} \psi(x_j,y_{i,j},x_i))) - \\
        &\phantom{\bigg\{\|} - \hat\rho(x_k, \sum_{i \in \mathcal{N}(k)} \hat\phi(x_k, y_{i,k},x_i),\sum_{j\in[N]} \hat\alpha(x_j, \sum_{i \in \mathcal{N}(j)} \psi(x_j,y_{i,j},x_i)))\| \geq \frac{\bar \varepsilon}{N}\bigg\}\\
        \subseteq 
            &\bigg\{\|\sum_{j\in[N]} \alpha(x_j,\sum_{i \in \mathcal{N}(j)} \psi(x_j,y_{i,j},x_i)) -  \sum_{j\in[N]}\hat\alpha(x_j,\sum_{i \in \mathcal{N}(j)} \psi(x_j,y_{i,j},x_i))\| \geq \frac{\bar \varepsilon}{NK_{\hat \rho}}\bigg\}\\
        \subseteq &\bigcup_{j \in [N]}\bigg\{\|\alpha(x_j,\sum_{i \in \mathcal{N}(j)} \psi(x_j,y_{i,j},x_i)) -  \hat\alpha(x_j,\sum_{i \in \mathcal{N}(j)} \psi(x_j,y_{i,j},x_i))\| \geq \frac{\bar \varepsilon}{N^2K_{\hat \rho}}\bigg\}.
    \end{align*}
    Note that this last expression no longer depends on $k$, so it does not need to appear in the union over all $k\in [N]$.
    Similarly, for the final set we obtain
    \begin{align*}
            &\bigg \{\|\hat\rho(x_k, \sum_{i \in \mathcal{N}(k)} \hat\phi(x_k, y_{i,k},x_i),\sum_{j\in[N]} \hat\alpha(x_j, \sum_{i \in \mathcal{N}(j)} \psi(x_j,y_{i,j},x_i))) - \\
            &\phantom{\bigg \{\|} - \hat\rho(x_k, \sum_{i \in \mathcal{N}(k)} \hat\phi(x_k, y_{i,k},x_i),\sum_{j\in[N]} \hat\alpha(x_j, \sum_{i \in \mathcal{N}(j)} \hat\psi(x_j,y_{i,j},x_i)))\| \geq \frac{\bar \varepsilon}{N}\bigg \}\\
            &\subseteq\bigg \{\|\sum_{j\in[N]} \hat\alpha(x_j,\sum_{i \in \mathcal{N}(j)} 
            \psi(x_j,y_{i,j},x_i)) -  \sum_{j\in[N]}\hat\alpha(x_j,\sum_{i \in \mathcal{N}(j)} \hat\psi(x_j,y_{i,j},x_i))\| \geq \frac{\bar \varepsilon}{NK_{\hat \rho}}\bigg \}\\
            &\subseteq\bigcup_{j \in [N]}\bigg \{\|\hat\alpha(x_j,\sum_{i \in \mathcal{N}(j)} \psi(x_j,y_{i,j},x_i)) -  \hat\alpha(x_j,\sum_{i \in \mathcal{N}(j)} \hat\psi(x_j,y_{i,j},x_i))\| \geq \frac{\bar \varepsilon}{N^2K_{\hat \rho}}\bigg \} \\
            &\subseteq \bigcup_{j \in [N]}\bigg \{\|\sum_{i \in \mathcal{N}(j)} \psi(x_j,y_{i,j},x_i)) -  \sum_{i \in \mathcal{N}(j)} \hat\psi(x_j,y_{i,j},x_i))\| \geq \frac{\bar \varepsilon}{N^2K_{\hat \rho}K_{\hat \alpha}}\bigg \} \\
            &\subseteq \bigcup_{i,j \in [N]}\bigg \{\|\psi(x_j,y_{i,j},x_i)) -\hat\psi(x_j,y_{i,j},x_i))\| \geq \frac{\bar \varepsilon}{N^3K_{\hat \rho}K_{\hat \alpha}}\bigg \} ,
    \end{align*}
    and can bound the probability with~\eqref{line:psi_prob}.
    From this we can conclude that 
    \begin{align*}
        &\mu(\{\|g(x,y)-\hat g(x,y)\| \geq \bar \varepsilon\})\\
        &\leq \phantom{++}\sum_{k \in [N]}\mu\bigg(\bigg \{\|\rho(x_k, \sum_{i \in \mathcal{N}(k)} \phi(x_k, y_{i,k},x_i), \sum_{j\in[N]} \alpha(x_j, \sum_{i \in \mathcal{N}(j)} \psi(x_j,y_{i,j},x_i))) \\
            &\phantom{\sum_{k \in [N]}\mu\bigg(\bigg \{\|} - \hat\rho(x_k, \sum_{i \in \mathcal{N}(k)} \phi(x_k, y_{i,k},x_i), \sum_{j\in[N]} \alpha(x_j, \sum_{i \in \mathcal{N}(j)} \psi(x_j,y_{i,j},x_i)))\| \geq \frac{\bar \varepsilon}{N} \bigg \} \bigg) \\
            &+\sum_{k,i \in [N]}\mu\bigg(\bigg \{\|\phi(x_k,y_{i,k},x_i) -  \hat\phi(x_k,y_{i,k},x_i)\| \geq \frac{\bar \varepsilon}{N^2K_{\hat \rho}}\bigg \} \bigg) \\
            &+\sum_{j \in [N]}\mu\bigg(\bigg\{\|\alpha(x_j,\sum_{i \in \mathcal{N}(j)} \psi(x_j,y_{i,j},x_i)) -  \hat\alpha(x_j,\sum_{i \in \mathcal{N}(j)} \psi(x_j,y_{i,j},x_i))\| \geq \frac{\bar \varepsilon}{N^2K_{\hat \rho}}\bigg\} \bigg) \\
            &+\sum_{i,j \in [N]}\mu\bigg(\bigg\{\|\psi(x_j,y_{i,j},x_i)) -\hat \psi(x_j,y_{i,j},x_i))\| \geq \frac{\bar \varepsilon}{N^3K_{\hat \rho}K_{\hat \alpha}}\bigg\} \bigg)\\
        &\leq N\frac{\varepsilon}{4N} + N^2\frac{\varepsilon}{4N^2} + N\frac{\varepsilon}{4N} + N^2 \frac{\varepsilon}{4N^2}= \varepsilon.
    \end{align*}
\end{proof}

\section{Feedforward Neural Networks}
Feedforward neural networks (FNNs) are the building blocks of the GNNs considered in this work (PENNs). This section recalls the definition of feedforward neural networks and collects auxiliary results used in the proofs.
\begin{definition}\label{def:FNN}
    Let $L, d_0,\ldots,d_L \in \mathbb{N}$ and let $\varrho:\R \to \R$ be an activation function. A function $\Phi:\R^{d_0} \to \R^{d_L}$ is called a \emph{feedforward neural network} with $L$ layers and activation function $\varrho$ if
    \begin{align*}
        \Phi := W_L \circc F_{L-1} \circc \ldots \circc F_1, 
    \end{align*}
    where the functions $W_l:\R^{d_{l-1}} \to \R^{d_l}, l=1,\ldots,L$ are affine linear,
    \begin{align*}
        W_l(x) := A^lx + b^l,
    \end{align*}
    with $A^l \in \R^{d_{l}\times d_{l-1}}$ and $b^l \in \R^{d_l}$, and where 
    $F_l = \varrho \circc W_l,l=1,\ldots,L-1$. The activation function $\varrho$ is applied componentwise, that is, for $y \in \R^m, m \in \N,$ $\varrho(y)=(\varrho\left(y^1\right), \ldots, \varrho\left(y^m\right))^T \in \R^m$.
    We measure the complexity of a feedforward neural network by its number of layers $L(\Phi) := L$, 
    and by the number of nonzero weights $M(\Phi) := \sum_{l=1}^L \|A_l\|_{\ell^0} + \|b_l \|_{\ell^0}$.
\end{definition} 
One popular activation function is defined as follows.
\begin{definition}
    The function
    $$
    \varrho: \R \to \R, \quad x \mapsto \frac{1}{1+ e^{-x}},
    $$
    is called sigmoid activation function.
\end{definition}
The sigmoid activation function is an example of a bounded activation function. 
Although bounded activations can contribute to the so‑called vanishing‑gradient problem, they are widely used in theoretical analyses and in practice. The following result, for example, is formulated for bounded activation functions.
\begin{proposition}[Theorem B.1, \cite{biagini2023neural}] \label{thm:universality_of_measurable_thomas}
    Assume the activation function $\varrho$ is bounded and non-constant. 
    Let $f:\R^d \to \R^m$ be a measurable function and let $\mathbb{P}$ be a probability measure on $\R^d.$ Then, for any $\varepsilon, \bar \varepsilon >0$, there exists a neural network $\Phi:\R^d \to \R^m$ such that
    \begin{align*}
        \mathbb{P} ( \{x \in \R^d: \|f(x) - \Phi(x)\| \geq \bar \varepsilon\}) \leq \varepsilon.
    \end{align*}
\end{proposition}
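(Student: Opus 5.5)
The statement to prove is Proposition~\ref{thm:universality_of_measurable_thomas}: for a bounded non-constant activation, a measurable $f:\R^d\to\R^m$ and a probability measure $\mathbb{P}$ on $\R^d$, some network $\Phi$ satisfies $\mathbb{P}(\|f-\Phi\|\geq\bar\varepsilon)\leq\varepsilon$. The plan has three reductions: from measurable to continuous functions, from $\R^d$ to a compact set, and finally to the classical uniform universality of feedforward networks on compacta. Throughout we may work componentwise. If each coordinate $f_j$ is approximated to accuracy $\bar\varepsilon/m$ outside a set of measure $\varepsilon/m$, then stacking the scalar networks in parallel gives one network $\Phi:\R^d\to\R^m$ with the desired property. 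So we assume $m=1$.

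\textbf{Step 1 (compact set and continuous approximant).} First choose $R>0$ with $\mathbb{P}(\{x:|f(x)|>R\})\leq\varepsilon/4$; this is possible since $f$ is finite everywhere. Replace $f$ by the truncation $f_R:=\max(-R,\min(R,f))$, which is measurable and bounded. Next, $\mathbb{P}$ is a finite Borel measure on $\R^d$, hence Radon. Lusin's theorem therefore gives a compact set $K\subset\R^d$ with $\mathbb{P}(K^c)\leq\varepsilon/4$ such that $f_R|_K$ is continuous. Tietze's extension theorem then provides a continuous $\tilde f:\R^d\to[-R,R]$ with $\tilde f=f_R$ on $K$.

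\textbf{Step 2 (uniform approximation on the compact set).} Since $\varrho$ is bounded and non-constant, Hornik's theorem \citep{hornik1991approximation} applies: one-hidden-layer networks with activation $\varrho$ are dense in $C(K)$ for the uniform norm. Choose $\Phi$ with $\sup_{x\in K}|\tilde f(x)-\Phi(x)|<\bar\varepsilon$. Alternatively one can cite \cite{leshno1993multilayer}, since a bounded non-constant measurable function is locally bounded and not a.e.\ a polynomial; that route additionally needs continuity a.e.\ of $\varrho$. Hornik's version avoids this extra condition, which is why I would use it.

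\textbf{Step 3 (combine).} The exceptional set satisfies
\begin{align*}
\{|f-\Phi|\geq\bar\varepsilon\}\subseteq\{|f|>R\}\cup K^c\cup\{x\in K:|\tilde f(x)-\Phi(x)|\geq\bar\varepsilon\}.
\end{align*}
On $K\cap\{|f|\leq R\}$ we have $f=f_R=\tilde f$, which justifies the inclusion. The last set is empty by Step 2, so the measure of the exceptional set is at most $\varepsilon/2\leq\varepsilon$.

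The main obstacle is Step 2, specifically which classical theorem covers a merely bounded non-constant activation without continuity assumptions. I would rely on Hornik (1991), whose density result in $C(K)$ needs only boundedness and non-constancy. If a continuity hypothesis turns out to be needed, two fallbacks remain. One is to approximate in $\mathcal{L}^p(\mu)$ for finite $\mu$ (also covered by Hornik) and pass to convergence in measure via Markov's inequality, which suffices here. The other is to note that the paper only uses sigmoid-type activations in the end.
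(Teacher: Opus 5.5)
Your reduction to $m=1$, Step 1 and Step 3 are fine. Step 2, however, rests on a misattribution, and the argument genuinely breaks there. Hornik's theorem on uniform approximation on compacta assumes $\varrho$ is \emph{continuous}, bounded and non-constant. It is his $L^p(\mu)$-density theorem that needs only boundedness and non-constancy, so your remark that Hornik ``avoids this extra condition'' has it backwards.

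Without continuity the uniform claim is false. Take $d=1$, a fat Cantor set $C\subset[0,1]$ (closed, nowhere dense, of positive measure), and $\varrho=\mathds{1}_C$:
\begin{itemize}
\item every first-layer unit with nonzero weight is the indicator of a closed nowhere dense set;
\item so the first hidden layer, and hence $\Phi$, is constant on an open dense subset of $\R$;
\item one-layer networks are affine.
\end{itemize}
Hence no network approximates $x\mapsto x^2$ uniformly on $[0,1]$ to accuracy below $1/8$. For $f(x)=x^2$ and $\mathbb{P}$ uniform on $[0,1]$, Lusin may well return $K=[0,1]$, so your Steps 1--3 fail for this activation. Yet $\mathds{1}_C$ is bounded and not a.e.\ constant, so the proposition does hold for it; the uniform route simply cannot reach it. Your main route is valid only for continuous $\varrho$, or under Leshno's condition on the discontinuity set.

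The fix is your first fallback. Promote it to the main argument; it also makes Lusin and Tietze unnecessary. Since $f_R$ is bounded, it lies in $L^1(\mathbb{P})$, so Hornik's $L^p$ theorem gives a network $\Phi$ with $\|f_R-\Phi\|_{L^1(\mathbb{P})}\le\varepsilon\bar\varepsilon/2$. Markov's inequality then yields $\mathbb{P}(|f_R-\Phi|\ge\bar\varepsilon)\le\varepsilon/2$, and the inclusion $\{|f-\Phi|\ge\bar\varepsilon\}\subseteq\{|f|>R\}\cup\{|f_R-\Phi|\ge\bar\varepsilon\}$ finishes the proof. The paper gives no proof of its own here; it imports the result from the cited reference. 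Your second fallback, restricting to sigmoid-type activations, would not prove the proposition as stated.

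One caveat applies to any proof: ``non-constant'' must be read as ``not Lebesgue-a.e.\ constant''. For $\varrho=\mathds{1}_{\mathbb{Q}}$, each first-layer unit with nonzero weight vanishes off a countable set, so every network is a.e.\ affine. Then $x\mapsto x^2$ under the uniform distribution on $[0,1]$ cannot be approximated in probability. This reading is harmless for the paper, which only uses Lipschitz activations.
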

Another widely used activation function is the rectified linear unit (ReLU).
\begin{definition}
    The function
    $$
    \varrho: \R \to \R, \quad x \mapsto \max (0,x),
    $$
    is called \emph{ReLU (Rectified Linear Unit)} activation function.
\end{definition}
Unlike the sigmoid function, ReLU is unbounded. Its piecewise linear structure and piecewise constant derivative make it computationally efficient in practice. Moreover, for ReLU neural networks (feedforward networks with ReLU activation), we can perform the following basic operation.
\begin{proposition}[Proposition 2.3, \citet{opschoor2020deep}]\label{prop:parallelize_relu_L}
    Let $d_1,d_2,d_3,L \in \N$. 
    Consider two $L$-layer ReLU neural networks $\Phi_1:\R^{d_1}\to \R^{d_2}$ and $\Phi_2:\R^{d_1}\to\R^{d_3}$. 
    Then there exists a ReLU neural network $\mathbf{P}(\Phi_1,\Phi_2):\R^{d_1}\to \R^{d_2+d_3}$ such that, for all $x \in \R^{d_1}$,
    \begin{align*}
        \mathbf{P}(\Phi_1,\Phi_2)(x) = \begin{pmatrix}
            \Phi_1(x)\\\Phi_2(x)
        \end{pmatrix}.
    \end{align*}
    We call $\mathbf{P}(\Phi_1,\Phi_2)$ the $L$-layer parallelization of $\Phi_1$ and $\Phi_2$. 
    In particular, $L(\mathbf{P}(\Phi_1,\Phi_2)) = L$ and $M(\mathbf{P}(\Phi_1,\Phi_2)) = M(\Phi_1) + M(\Phi_2)$.
\end{proposition}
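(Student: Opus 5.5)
The proof is a direct construction that stacks the two networks' weight matrices, first sharing the input and then running block-diagonally. Write $\Phi_1 = W^{(1)}_L \circc F^{(1)}_{L-1} \circc \ldots \circc F^{(1)}_1$ and $\Phi_2 = W^{(2)}_L \circc F^{(2)}_{L-1} \circc \ldots \circc F^{(2)}_1$ as in Definition~\ref{def:FNN}. Here $W^{(m)}_l(x) = A^{l,m}x + b^{l,m}$, with $A^{l,m} \in \R^{d^{(m)}_l \times d^{(m)}_{l-1}}$, $d^{(1)}_0 = d^{(2)}_0 = d_1$, $d^{(1)}_L = d_2$ and $d^{(2)}_L = d_3$. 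The common input $x$ is fed to both networks by stacking their first-layer matrices vertically; from the second layer on, the two computations are kept separate through block-diagonal matrices.

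Concretely, I define the affine maps of $\mathbf{P}(\Phi_1,\Phi_2)$ by
\begin{align*}
    A^1 := \begin{pmatrix} A^{1,1} \\ A^{1,2} \end{pmatrix}, \quad b^1 := \begin{pmatrix} b^{1,1} \\ b^{1,2} \end{pmatrix}, \qquad A^l := \begin{pmatrix} A^{l,1} & 0 \\ 0 & A^{l,2} \end{pmatrix}, \quad b^l := \begin{pmatrix} b^{l,1} \\ b^{l,2} \end{pmatrix}, \quad l = 2,\ldots,L.
\end{align*}
This gives an $L$-layer ReLU network from $\R^{d_1}$ to $\R^{d_2+d_3}$, since the last layer has output dimension $d^{(1)}_L + d^{(2)}_L = d_2 + d_3$. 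If $L=1$, only the first-layer definition is used and the claim is immediate.

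Next I show by induction on $l = 1,\ldots,L-1$ that the output of the $l$-th hidden layer is $F_l \circc \ldots \circc F_1(x) = \bigl(F^{(1)}_l \circc \ldots \circc F^{(1)}_1(x),\, F^{(2)}_l \circc \ldots \circc F^{(2)}_1(x)\bigr)$, written as a stacked column vector.
\begin{itemize}
    \item For $l=1$ this is the vertical stacking of $A^1 x + b^1$ together with the fact that ReLU acts componentwise, so it commutes with splitting a vector into blocks.
    \item For the induction step, the block-diagonal $A^l$ maps a stacked vector $(u,v)$ to $(A^{l,1}u, A^{l,2}v)$. Adding the stacked bias and applying ReLU componentwise then gives the claim for $l$.
\end{itemize}
Applying the last affine map $W_L$ in the same way yields $\mathbf{P}(\Phi_1,\Phi_2)(x) = (\Phi_1(x), \Phi_2(x))$.

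Finally, for the complexity, the depth is $L$ by construction. The count of nonzero weights holds because the off-diagonal blocks are identically zero and the stacking introduces no new entries, so $\|A^l\|_{\ell^0} = \|A^{l,1}\|_{\ell^0} + \|A^{l,2}\|_{\ell^0}$ and $\|b^l\|_{\ell^0} = \|b^{l,1}\|_{\ell^0} + \|b^{l,2}\|_{\ell^0}$ for every $l$. Summing over $l$ gives $M(\mathbf{P}(\Phi_1,\Phi_2)) = M(\Phi_1) + M(\Phi_2)$.

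There is no real obstacle. The only point needing care is the hypothesis that both networks have the same depth $L$: without it, the shorter network would need to be padded, for example with ReLU identity layers $x = \varrho(x) - \varrho(-x)$, which would change the weight count. That situation is excluded by assumption.
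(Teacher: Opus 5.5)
Your proposal is correct and is essentially the standard argument. The paper gives no proof of its own and cites Proposition~2.3 of \citet{opschoor2020deep}, whose construction matches yours: stack the first-layer weights and biases vertically, use block-diagonal weights with stacked biases in layers $2,\ldots,L$ (applying the componentwise ReLU blockwise), and note that the nonzero counts add because no new entries are introduced.
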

\begin{remark}\label{rem:parallelize_multi_relu_L}
    By induction, one can parallelize any finite number of ReLU neural networks with identical layer structure. In this case,
    $L(\mathbf{P}(\Phi_1,\ldots,\Phi_n)) = L$ and $M(\mathbf{P}(\Phi_1,\ldots,\Phi_n)) = \sum_i M(\Phi_i)$.
\end{remark}

\begin{remark}\label{rem:parallelization_of_nn_architectures}
    Since neural-network architectures (see Definition~\ref{def:NN_architecture}) are simply neural networks whose weights are restricted to $\{0,1\}$, their parallelization can be defined in direct analogy to Proposition~\ref{prop:parallelize_relu_L}.
\end{remark}
The following result shows that, for certain classes of sufficiently regular functions, one can construct a single neural-network architecture such that every function in the class can be well approximated by a realization of this architecture. To state the result, we introduce the following function class. For $n \in \N_{\ge 2}$, $d \in \N$, $1 \le p \le \infty$, and $B > 0$, define
$$
\widetilde{\mathcal{F}}_{n, d, p, B}:=\left\{f \in W^{n, p}\left((0,1)^d\right):\|f\|_{W^{n, p}\left((0,1)^d\right)} \leq B\right\}.
$$
Then the following Proposition holds.
\begin{proposition}[Corollary 4.2, \citet{guhring2020error}]\label{prop:rates_for_NN}
    Let $d \in \mathbb{N}$, $n \in \mathbb{N}_{\ge 2}$, $1 \le p \le \infty$, $B>0$, and $0 \le s \le 1$. Then there exists a constant $c = c(d,n,p,B,s) > 0$ with the following properties. For any $\varepsilon \in (0,1/2)$, there exists a neural-network architecture
    $\mathcal{A}_{\varepsilon}=\mathcal{A}_{\varepsilon}(d, n, p, B, s, \varepsilon)$ with $d$-dimensional input and one-dimensional output such that, for every $f \in \widetilde{\mathcal{F}}_{n, d, p, B}$, there exists a ReLU neural network $\Phi_{\varepsilon}^f$ that realizes architecture $\mathcal{A}_{\varepsilon}$ and satisfies
    $$
        \left\|\Phi_{\varepsilon}^f-f\right\|_{W^{s, p}\left((0,1)^d\right)} \leq \varepsilon.
    $$
    Moreover:
    \begin{itemize}
        \item[(i)] $L\left(\mathcal{A}_{\varepsilon}\right) \leq c \cdot \log _2\left(\varepsilon^{-n /(n-s)}\right)$;
        \item[(ii)] $M\left(\mathcal{A}_{\varepsilon}\right) \leq c \cdot \varepsilon^{-d /(n-s)} \cdot \left(\log _2\left(\varepsilon^{-n /(n-s)}\right)\right)^2$.
    \end{itemize}
\end{proposition}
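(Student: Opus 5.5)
The plan is the partition-of-unity plus local-polynomial construction of Yarotsky, lifted to Sobolev norms. The key point is that only the coefficients of the final affine layer depend on $f$. That makes the zero pattern, hence the architecture $\mathcal{A}_\varepsilon$, independent of $f\in\widetilde{\mathcal{F}}_{n,d,p,B}$. I would first prove the two endpoint cases $s=0$ and $s=1$. For $0<s<1$ I would use the interpolation inequality $\|g\|_{W^{s,p}}\lesssim \|g\|_{\mathcal{L}^p}^{1-s}\|g\|_{W^{1,p}}^{s}$ applied to $g=\Phi^f_\varepsilon-f$. Combining the two endpoint error bounds $N^{-n}$ and $N^{-(n-1)}$ then gives $N^{-(n-s)}$.

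\textbf{Step 1 (localization).} Fix $N\in\N$, to be chosen later as $N\simeq \varepsilon^{-1/(n-s)}$. Take the standard partition of unity $\{\phi_m\}_{m\in\{0,\dots,N\}^d}$ on $[0,1]^d$, made of tensor products of piecewise linear hat functions of width $1/N$. Each $\phi_m$ satisfies $\|\phi_m\|_{W^{1,\infty}}\lesssim N$. Every point lies in the support of at most $2^d$ of them, and each univariate hat function is exactly a small ReLU network.

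\textbf{Step 2 (local approximation).} On each patch $\Omega_m=\operatorname{supp}\phi_m$, let $p_m=\sum_{|\alpha|\le n-1}c_{m,\alpha}(x-x_m)^\alpha$ be the averaged Taylor polynomial of $f$. The Bramble--Hilbert lemma gives
\[
\|f-p_m\|_{W^{k,p}(\Omega_m)}\lesssim N^{-(n-k)}\|f\|_{W^{n,p}(\tilde\Omega_m)}\quad (k=0,1),
\]
where $\tilde\Omega_m$ is a slightly enlarged patch; the coefficients also satisfy $|c_{m,\alpha}|\le C(d,n,p,B)$. By the product rule and the bounded overlap of the patches, $f_N:=\sum_m\phi_m p_m$ satisfies $\|f-f_N\|_{W^{k,p}}\lesssim N^{-(n-k)}B$. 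The factor $N$ coming from $\nabla\phi_m$ is exactly compensated by the $L^p$ bound $N^{-n}$ on $f-p_m$.

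\textbf{Step 3 (network realization).} Each product $\phi_m(x)(x-x_m)^\alpha$ involves at most $d+n$ factors. I would approximate it using Yarotsky's sawtooth squaring network and the polarization identity $xy=\tfrac12((x+y)^2-x^2-y^2)$, giving an approximant $\Psi_{m,\alpha}$. The approximation must hold in $W^{1,\infty}$ with accuracy $\eta$, with depth $O(\log(1/\eta))$ and $O(\log(1/\eta))$ weights. I would also arrange $\operatorname{supp}\Psi_{m,\alpha}\subseteq\Omega_m$, either by multiplying by $\phi_m$ last, or because the multiplication network returns exactly $0$ when one factor is $0$. Set $\Phi^f_\varepsilon=\sum_{m,\alpha}c_{m,\alpha}\Psi_{m,\alpha}$. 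The $\Psi_{m,\alpha}$ are fixed, so they are parallelized as in Remark~\ref{rem:parallelize_multi_relu_L} and summed in a last linear layer whose weights $c_{m,\alpha}$ are the only $f$-dependent parameters. Choosing $\eta\simeq \varepsilon N^{-d}$, or more carefully $\eta\simeq\varepsilon$ thanks to the bounded overlap, gives $\log(1/\eta)\lesssim\log_2(\varepsilon^{-n/(n-s)})$. The total weight count is then $N^d\cdot O(\log(1/\eta))$ up to the $\log^2$ factor, which matches (i) and (ii).

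\textbf{Main obstacle.} The delicate step is Step 3 in the $W^{1,p}$ norm. Yarotsky's squaring network converges only at first order in the derivative. One must also check that the errors of the $\Psi_{m,\alpha}$ do not accumulate across the $N^d$ patches, which relies on exact support localization and bounded overlap. Finally, the constant must stay uniform over the whole ball $\|f\|_{W^{n,p}}\le B$, so the coefficient bounds from Bramble--Hilbert must be explicit. I would first settle the $p=\infty$ case with these localization arguments, and then pass to general $p$ by summing the local $L^p$ errors over patches.
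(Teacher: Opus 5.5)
Your route is the one behind the cited result. The paper gives no proof of this statement and simply imports it from \citet{guhring2020error}, whose proof rests on the ingredients you list: a partition of unity, averaged Taylor polynomials with Bramble--Hilbert, ReLU product networks with $W^{1,\infty}$ control that vanish when a factor vanishes, $f$ entering only through last-layer coefficients (which makes the architecture independent of $f$), and interpolation for $0<s<1$. As written, however, two steps fail, and both need correcting.

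\textbf{The coefficient bound.} The claim $|c_{m,\alpha}|\le C(d,n,p,B)$ is false for $p<\infty$ in general, for instance when $d\ge 2$ and $p<d$. The top-degree coefficients are weighted averages of $D^\alpha f$, $|\alpha|=n-1$, over balls of radius of order $1/N$. Take a rescaled bump $f(x)=\lambda g(N(x-x_m))$ with $\lambda=cBN^{-(n-d/p)}$, a suitable constant $c$, and $g(y)=y^\alpha$ near $0$. This $f$ lies in $\widetilde{\mathcal{F}}_{n,d,p,B}$, yet those averages are of order $BN^{d/p-1}$, which is unbounded in $N$. The correct bound is $|c_{m,\alpha}|\le C N^{d/p}\|f\|_{W^{n-1,p}(\tilde\Omega_m)}$. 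So your Step~3 conclusion that the network error is $O(\eta)$ by bounded overlap is not justified. You can repair it in either of two ways:
\begin{itemize}
\item Estimate patchwise in $\mathcal{L}^p$. On each patch at most $2^d$ terms are nonzero, and $\|\cdot\|_{W^{1,p}(\Omega_m)}\le|\Omega_m|^{1/p}\|\cdot\|_{W^{1,\infty}(\Omega_m)}$ with $|\Omega_m|^{1/p}\simeq N^{-d/p}$, which cancels the factor $N^{d/p}$. Bounded overlap then gives $\sum_m\|f\|^p_{W^{n-1,p}(\tilde\Omega_m)}\le CB^p$.
\item Alternatively, shrink $\eta$ by a factor $N^{-d/p}$.
\end{itemize}
In the same vein, reaching $W^{1,\infty}$-accuracy $\eta$ for $\phi_m(x)(x-x_m)^\alpha$ costs $O(\log(N/\eta))$ rather than $O(\log(1/\eta))$, because $\|\nabla\phi_m\|_\infty\simeq N$ multiplies the derivative error of the product network. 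All such polynomial-in-$N$ losses are harmless, since $\log N\simeq\log(1/\varepsilon)$.

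\textbf{The accuracy $\eta$ for $0<s<1$.} Interpolation needs a single network with $\|\Phi-f\|_{\mathcal{L}^p}\le CN^{-n}$ and $\|\Phi-f\|_{W^{1,p}}\le CN^{-(n-1)}$ at the same $N\simeq\varepsilon^{-1/(n-s)}$. The network error must therefore be of order $N^{-n}=\varepsilon^{n/(n-s)}$, not $\varepsilon$. With your ``more careful'' choice $\eta\simeq\varepsilon$, the $\mathcal{L}^p$ error is only of order $\varepsilon$. Interpolation then yields $\varepsilon^{(1-s)+s(n-1)/(n-s)}$, and this exponent is strictly below $1$ for $s<1$. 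Your first choice $\eta\simeq\varepsilon N^{-d}=N^{-(n-s)-d}$ is small enough once the patchwise estimate is used, because $d\ge s$. This requirement is exactly where the $\log_2(\varepsilon^{-n/(n-s)})$ in (i)--(ii) comes from.

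With both corrections your plan goes through. The remaining technical points are handled in the source: the chain rule for compositions of Lipschitz maps in $W^{1,\infty}$, extending $f$ beyond $(0,1)^d$ for the boundary patches, and padding subnetworks to equal depth before parallelizing.
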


\begin{remark}
    Here, we focus on the integer case $s = 1$ (with $p = \infty$), approximating functions and their classical derivatives. The case $0 < s < 1$, corresponds to fractional Sobolev spaces (Sobolev--Slobodeckij spaces).
\end{remark}
The ReLU neural networks $\Phi_\varepsilon^f$ of architecture $\mathcal {A}_\varepsilon$ from Proposition~\ref{prop:rates_for_NN} are designed to approximate functions $f\colon\R^d \to \R$ and their partial derivatives on the domain $(0,1)^d$. We do not know how $\Phi_\varepsilon^f$ behaves outside of $[0,1]^d$. The following corollary shows that we can control the global Lipschitz constant by adjusting constant $c$ and the network $\Phi_\varepsilon^f$ from Proposition~\ref{prop:rates_for_NN}.

\begin{corollary}\label{cor:rates_for_NN_global_lip}
    Let $d \in \mathbb{N}$, $n \in \mathbb{N}_{\ge 2}$, $p = \infty$, $B>0$, and $s=1$. Then there exists a constant $\tilde c = \tilde c(d,n,p,B,s) > 0$ with the following properties. For any $\varepsilon \in (0,1/2)$, there exists a neural-network architecture
    $\mathcal{A}_{\varepsilon}=\mathcal{A}_{\varepsilon}(d, n, p, B, s, \varepsilon)$ with $d$-dimensional input and one-dimensional output such that, for every $f \in \widetilde{\mathcal{F}}_{n, d, p, B}$, there exists a ReLU neural network $\tilde{\Phi}_{\varepsilon}^f$ that realizes architecture $\mathcal{A}_{\varepsilon}$ and satisfies
    $$
        \left\|\tilde{\Phi}_{\varepsilon}^f-f\right\|_{W^{s, p}\left((0,1)^d\right)} \leq \varepsilon.
    $$
    Moreover:
    \begin{itemize}
        \item[(i)] $L\left(\mathcal{A}_{\varepsilon}\right) \leq \tilde c \cdot \log _2\left(\varepsilon^{-n /(n-s)}\right)$;
        \item[(ii)] $M\left(\mathcal{A}_{\varepsilon}\right) \leq \tilde c \cdot \varepsilon^{-d /(n-s)} \cdot \left(\log _2\left(\varepsilon^{-n /(n-s)}\right)\right)^2$;
        \item[(iii)] $\tilde{\Phi}_{\varepsilon}^f$ is globally Lipschitz continuous with respect to $\|\cdot\|_\infty$ with Lipschitz constant $d \cdot (B + \varepsilon)$.
    \end{itemize}       
\end{corollary}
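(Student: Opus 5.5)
The plan is to modify the networks $\Phi_\varepsilon^f$ from Proposition~\ref{prop:rates_for_NN} (with $p=\infty$, $s=1$) by post-composing with a clipping layer. Clipping keeps the approximation on $(0,1)^d$ intact and makes the global Lipschitz behaviour controllable. Concretely, the global Lipschitz constant of a ReLU network with respect to $\|\cdot\|_\infty$ is governed by the sup of its gradient's $\ell^1$-norm over the pieces where it is affine. Inside $(0,1)^d$, the $W^{1,\infty}$ estimate gives $|\partial_i \Phi_\varepsilon^f| \le |\partial_i f| + \varepsilon \le B+\varepsilon$ a.e., since $\|f\|_{W^{n,\infty}}\le B$ bounds the first derivatives. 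Hence on the cube the gradient has $\ell^1$-norm at most $d(B+\varepsilon)$. The difficulty is outside the cube, where $\Phi_\varepsilon^f$ is uncontrolled.

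To handle this, I would precompose with the coordinatewise projection onto $[0,1]^d$, that is, set $\tilde\Phi_\varepsilon^f := \Phi_\varepsilon^f \circ \pi$ with $\pi(x)_i = \min(\max(x_i,0),1) = \varrho(x_i) - \varrho(x_i-1)$. This map is realizable by a two-layer ReLU block and is $1$-Lipschitz in $\|\cdot\|_\infty$. The projection is the identity on $(0,1)^d$, so the $W^{1,\infty}((0,1)^d)$ error bound is unchanged. For the Lipschitz bound, take $x,y\in\R^d$. Then $|\tilde\Phi(x)-\tilde\Phi(y)| = |\Phi(\pi x)-\Phi(\pi y)|$, and $\pi x,\pi y$ lie in the closed convex cube $[0,1]^d$. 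Along the segment between them, $\Phi$ is continuous and piecewise linear, and its gradient on the open cube is bounded in $\ell^1$ by $d(B+\varepsilon)$ (indeed by $\sum_i(B+\varepsilon)$). By continuity this bound extends to the closed cube, giving $|\Phi(\pi x)-\Phi(\pi y)|\le d(B+\varepsilon)\|\pi x-\pi y\|_\infty \le d(B+\varepsilon)\|x-y\|_\infty$.

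The remaining step is the complexity bookkeeping. Prepending $\pi$ adds one layer and $O(d)$ weights. This layer merges with the first affine map of $\Phi$: $W_1(\varrho(x)-\varrho(x-1))$ is affine in the pair $(\varrho(x), \varrho(x-1))$. The cost is at most doubling the first-layer nonzeros, plus $2d$ weights and $d$ biases. The new architecture is fixed (independent of $f$), since the projection layer is the same for every $f$. Both $L$ and $M$ therefore change by at most a constant factor plus an additive constant. Because $\varepsilon<1/2$, $\log_2(\varepsilon^{-n/(n-1)})\ge 1$ and $\varepsilon^{-d/(n-1)}\ge 1$, so these additive constants are absorbed by enlarging $c$ to $\tilde c$.

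I expect the main obstacle to be the rigorous passage from the a.e.\ derivative bound on the open cube to a Lipschitz bound. The segment between $\pi x$ and $\pi y$ may run along the boundary of the cube, where the Sobolev bound says nothing directly. I would handle this by approximating the segment with segments in the interior, e.g.\ shrinking toward the center $(\tfrac12,\ldots,\tfrac12)$, and using continuity of $\Phi$. On interior segments, $\Phi$ is piecewise affine with finitely many pieces, and each piece meets the open cube in a set of positive measure, so the a.e.\ gradient bound applies to every piece the segment crosses.
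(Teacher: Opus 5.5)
Your proposal is correct and follows the paper's proof. You precompose $\Phi_\varepsilon^f$ with the coordinatewise clipping map (the paper's $T$, your $\pi$), which is a two-layer ReLU block, the identity on $[0,1]^d$ and $1$-Lipschitz in $\|\cdot\|_\infty$; you then turn the a.e.\ bound $|\partial_i\Phi_\varepsilon^f|\le B+\varepsilon$ on $(0,1)^d$ into a $d(B+\varepsilon)$-Lipschitz bound on $[0,1]^d$, and absorb the extra layers and weights into $\tilde c$ using $\varepsilon<1/2$.

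The differences are only in bookkeeping and rigor. You merge the clipping into the first affine map, giving $L+1$ layers and at most $2M+3d$ weights, whereas the paper uses $\varrho(T(x))=T(x)$ to get $L+2$ layers and $M+5d$ weights. Your interior-segment argument also justifies the step from the a.e.\ gradient bound to the Lipschitz bound more carefully than the paper does. One wording slip: your first sentence says ``post-composing,'' but the construction you actually carry out, correctly, is a precomposition.
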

\begin{proof}
    Let $c>0$ and ${\Phi}_{\varepsilon}^f$ be the constant and ReLU neural network from Proposition~\ref{prop:rates_for_NN}.

    Let $T:\R^d \to [0,1]^d$ be the two layer ReLU neural network $T(x_1,...,x_d) = (t(x_1),\ldots,t(x_d))$ with
    \begin{align*}
        t(x) = \operatorname{ReLU}(x) - \operatorname{ReLU}(x-1) = \begin{pmatrix}1\\-1\end{pmatrix} \operatorname{ReLU}\left[\begin{pmatrix}1\\1\end{pmatrix}x + \begin{pmatrix}0\\-1\end{pmatrix} \right] + \begin{pmatrix}0\\0\end{pmatrix}.
    \end{align*}
    It can be easily verified that 
    \begin{align*}
        t(x) = \begin{cases}
            0,\quad x<0,\\
            x,\quad0\leq x\leq1,\\
            1,\quad x>1,
        \end{cases}
    \end{align*}
    and that 
    $$
    |t(x)-t(y)| \leq |x-y|,
    $$
    for all $x,y \in \R$. Therefore, $T$ is $1$-Lipschitz continuous with respect to $\|\cdot\|_\infty$.
    
    Since $\operatorname{ReLU}(T(x)) = T(x)$, the composition $\tilde{\Phi}_\varepsilon^f = \Phi_\varepsilon^f \circc T$ is an $L(\mathcal{A}_\varepsilon)+2$ layer ReLU neural network with at most $M(\mathcal{A}_\varepsilon)+5d$ nonzero weights, for which 
    $$
        \tilde{\Phi}_\varepsilon^f(x) = \Phi_\varepsilon^f(x), \quad x \in [0,1]^d,
    $$
    and since $T(T(x))= T(x)$,
    $$
        \tilde{\Phi}_\varepsilon^f(T(x)) = \tilde{\Phi}_\varepsilon^f(x), \quad x \in \R^d.
    $$
    For $n\geq 2$, $s=1$, and $\varepsilon<1/2$, there exists a constant $\tilde c>0$ such that
    $$
        L(\mathcal{A}_\varepsilon)+2 \leq c \cdot \log_2\left(\varepsilon^{-n/(n-s)}\right) +2 \leq \tilde c \cdot \log_2\left(\varepsilon^{-n/(n-s)}\right),
    $$
    and
    $$
        M(\mathcal{A}_\varepsilon)+5d \leq c \cdot \varepsilon^{-d/(n-s)} \cdot \left(\log_2\left(\varepsilon^{-n/(n-s)}\right)\right)^2 +5d \leq \tilde c \cdot \varepsilon^{-d/(n-s)} \cdot \left(\log_2\left(\varepsilon^{-n/(n-s)}\right)\right)^2.
    $$
    Therefore, with this choice as constant we may choose $\tilde{\Phi}_\varepsilon^f$ in place of $\Phi_\varepsilon^f$, thereby controlling the Lipschitz constant globally, as we show next.
    
    Recall that $\tilde{\Phi}_\varepsilon^f$ and ${\Phi}_\varepsilon^f$ coincide on $[0,1]^d$ and approximate, with error $\varepsilon>0$, some function $f$ and its partial derivatives, which are bounded by $B>0$, on $(0,1)^d$. 
    Therefore, $\tilde{\Phi}_\varepsilon^f$ has partial derivatives bounded by $B+\varepsilon$ almost everywhere\footnote{Since a ReLU neural network is a piecewise affine-linear function, it may have differing left- and right-sided partial derivatives at its kinks.} on $(0,1)^d$.
    In general, a continuous function $h:\R^d\to\R^m$, $h=(h_1,\ldots,h_m)$, whose first-order partial derivatives are bounded (almost everywhere) in absolute value by some $B>0$, is Lipschitz continuous with respect to $\|\cdot\|_\infty$ with constant $d\cdot B$. By this argument $\tilde{\Phi}_\varepsilon^f$ is $d\cdot (B + \varepsilon)$-Lipschitz continuous on $(0,1)^d$. Since the function is continuous, it is also Lipschitz continuous on $[0,1]^d$. Finally, since for all $x,y \in \R^d$
    $$
    \|\tilde{\Phi}_\varepsilon^f(x) - \tilde{\Phi}_\varepsilon^f(y)\|_\infty = \|\tilde{\Phi}_\varepsilon^f(T(x)) - \tilde{\Phi}_\varepsilon^f(T(y))\|_\infty \leq d\cdot(B+\varepsilon) \cdot \|T(x)-T(y)\|_\infty \leq d\cdot(B+\varepsilon) \cdot \|x-y\|_\infty,
    $$
    $\tilde{\Phi}_\varepsilon^f$ is globally Lipschitz continuous.
\end{proof}

\section{Proof of Theorem~\ref{thm:rates_for_RGNN}}
To prove Theorem \ref{thm:rates_for_RGNN}, we proceed in several steps. We start with a result showing that permutation-equivariant functions can be represented by permutation-equivariant functions of a specific form with high probability.
\begin{proposition}\label{prop:repr_of_GPEqui_[0,1]}
    Let $N, d, d', l \in \N$, and let $f : \mathcal{D}_{N,d,d'} \to \R^{N \times l}$ be a measurable permutation-equivariant function. 
    Let $R \in L^0([0,1]^N)$ be an i.i.d.\ vector with uniformly distributed components $R_1,\ldots,R_N$. For every $\delta > 0$ and every $M \in \N$ with $M \ge N^2/\delta$, there exist input and output dimensions 
    $\bar d_\psi=\bar d_\phi = 2d+d'+2, d_\psi\leq M^2d', d_\phi =1, \bar d_\alpha = d+1+d_\psi, d_\alpha \leq M(d+2) + M^2d', \bar d_\rho = d+1+d_\phi+d_\alpha, d_\rho = l$,
    and measurable functions 
    $\rho:\R^{\bar d_\rho} \to \R^{d_\rho}$, $\alpha:\R^{\bar d_\alpha} \to \R^{d_\alpha}$, $\phi:\R^{\bar d_\phi} \to \R^{d_\phi}$, and $\psi:\R^{\bar d_\psi} \to \R^{d_\psi}$, 
    such that the function $g:\R^{N \times (d + 1)}\times \R^{N\times N \times d'}\to \R^{N \times l}$ defined by
    \begin{align}\label{eq:def_of_g}
    g(x,a)_k := \rho\left(x_k, \sum_{i \in \mathcal{N}(k)}\phi \left(x_k, a_{i,k}, x_i\right), \sum_{j \in [N]}\alpha\left(x_j,\sum_{i \in \mathcal{N}(j)} \psi \left(x_j,a_{i,j}, x_i\right)\right)\right),  \qquad k \in [N],
    \end{align}
    satisfies, for all $(x,a) \in \mathcal{D}_{N,d,d'}$,
    $$
    \mathbb{P}\left(f(x,a) = g(x\concat R,a ) \right) \geq 1-\delta.
    $$
    In particular, with $h:\R \to [M]$ being the function from Definition~\ref{def:rv_unique_finite_feat} \ref{def:rv_unique_finite_feat:finite_unique_whp}, guaranteed to exist by Lemma \ref{lemma:unif}, for the set
    \begin{align}\label{eq:mathfrak_A}
        \mathfrak{A} = \left\{(x,a) \in \mathcal{D}_{N,d+1,d'} \mid \forall i,j \in [N], i \neq j: h(x_{i,d+1}) \neq h(x_{j,d+1}) \right\} 
    \end{align}
    we have $\mathfrak{A} \subseteq \{f(x,a) = g(x\concat R,a )\}$ and $\mathbb{P}\left(\mathfrak{A}\right) \geq 1-\delta$.
\end{proposition}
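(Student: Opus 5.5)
The plan is to reduce the statement to Proposition~\ref{prop:repr_of_GPEqui}, applied to the augmented domain $\mathcal{D}_{N,d+1,d'}$ and the set $\mathfrak{A}$ from \eqref{eq:mathfrak_A}. The probability bound then comes from Lemma~\ref{lemma:unif}, and the dimension bounds are read off from the explicit construction in that proof.

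First I fix $\delta>0$ and $M\ge N^2/\delta$, and take $h:\R\to[M]$ to be the interval-indexing map from the proof of Lemma~\ref{lemma:unif}. That proof shows
\begin{align*}
\mathbb{P}\bigl(\forall i\neq j:\ h(R_i)\neq h(R_j)\bigr)\ \ge\ 1-\frac{N^2}{M}\ \ge\ 1-\delta .
\end{align*}
Next I lift $h$ to $\tilde h:\R^{d+1}\to[M]$, $\tilde h(y):=h(y_{d+1})$, which is measurable. With this choice the set $\mathfrak{A}\subseteq\mathcal{D}_{N,d+1,d'}$ has finite unique node features in the sense of Definition~\ref{def:set_unique_finite_feat}. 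I also define the natural extension $\tilde f(x\concat r,a):=f(x,a)$. It is measurable, and it is permutation equivariant because permuting $(x\concat r,a)$ permutes $(x,a)$.

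If $M\ge N$, Proposition~\ref{prop:repr_of_GPEqui} applies to $\tilde f$ on $\mathfrak{A}$ (with $d+1$ in place of $d$) and gives measurable $\rho,\alpha,\phi,\psi$ with $g=\tilde f$ on $\mathfrak{A}$. If $M<N$, then $\delta\ge N^2/M>1$, so the probability statement is trivial; I can then still use the same construction with formally padded $M$, or any measurable functions of the stated dimensions.

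Fix a deterministic $(x,a)\in\mathcal{D}_{N,d,d'}$. By the definitions, $\{(x\concat R,a)\in\mathfrak{A}\}=\{\forall i\neq j:\ h(R_i)\neq h(R_j)\}$. On this event $g(x\concat R,a)=\tilde f(x\concat R,a)=f(x,a)$. This gives the inclusion $\mathfrak{A}\subseteq\{f=g\}$ in the sense of the statement, and the probability bound $1-\delta$ uniformly in $(x,a)$. Measurability of the event follows from the measurability of $h$ and $R$.

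The step that needs real care is the dimension bookkeeping, since Proposition~\ref{prop:repr_of_GPEqui} only asserts the existence of \emph{some} dimensions. I will redo its construction explicitly rather than quote it:
\begin{itemize}
\item $\psi$ takes $(x_j,a_{i,j},x_i)\in\R^{(d+1)+d'+(d+1)}$, so $\bar d_\psi=2d+d'+2$. It places $a_{i,j}$ at position $(\tilde h(x_i),\tilde h(x_j))$ of an $M\times M\times d'$ tensor, so $d_\psi\le M^2d'$.
\item $\phi\equiv 0$ is scalar-valued, so $\bar d_\phi=2d+d'+2$ and $d_\phi=1$.
\item $\alpha$ takes $(x_j,w)$, so $\bar d_\alpha=d+1+d_\psi$. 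It outputs the row $x_j\concat 1\in\R^{d+2}$ stored at row $\tilde h(x_j)$ of an $M\times(d+2)$ matrix, together with $w$, so $d_\alpha\le M(d+2)+M^2d'$.
\item $\rho=\rho^B\circ\rho^A$ takes inputs of dimension $\bar d_\rho=d+1+d_\phi+d_\alpha$ and outputs in $\R^l$.
\end{itemize}
Inside $\rho^B$, I must check that evaluating $\tilde f$ on the reduced, re-indexed tensors, with the first $d+1$ columns of $z'$, agrees with $f$ after discarding the random column. This holds because $\tilde f$ ignores the last node-feature coordinate.
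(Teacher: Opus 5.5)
Your proposal is correct and takes essentially the same route as the paper. Both fix $h$ from the proof of Lemma~\ref{lemma:unif} with the given $M$ and apply Proposition~\ref{prop:repr_of_GPEqui} to the natural extension $\tilde f$ on $\mathfrak{A}$, which has finite unique node features through the last coordinate; both then read the dimensions off the explicit construction $\psi$, $\alpha$, $\phi\equiv 0$, $\rho=\rho^B\circ\rho^A$. The paper writes $\rho^A,\rho^B$ out with explicit indicator formulas so they can be smoothed in Proposition~\ref{prop:repr_of_GPEqui_[0,1]_approx}, and you additionally treat the degenerate case $M<N$, where $\mathfrak{A}=\emptyset$; there, use your second option (any measurable functions of the stated dimensions), since padding $M$ would break $d_\psi\le M^2d'$.
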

    \begin{proof}
        Let $\delta > 0$ and $M \in \N$ with $M \ge N^2 / \delta$. Define the intervals $I_i = [\tfrac{i}{M}, \tfrac{i+1}{M})$ for $i = 0,\ldots,M-2$, $I_{M-1} = [\tfrac{M-1}{M},1]$, and let $h : \R \to [M]$ be as in Lemma~\ref{lemma:unif}. Then, by Lemma~\ref{lemma:unif}, for every $(x,a) \in \mathcal{D}_{N,d,d'}$,
        \begin{align*}
            \mathbb{P}((x\concat R,a) \in \mathfrak{A}) =  \mathbb{P}\left( \forall i,j \in [N], i \neq j: h(R_i) \neq h(R_j)\right) \geq 1 - \frac{N^2}{M} \geq 1-\delta.
        \end{align*}
        In other words, with probability at least $1 - \delta$, all components $R_i$ fall into different intervals $I_k$.
        
        We now define $\psi: \R^{(d+1) + d' + (d+1)} \to \R^{M \times M \times d'}$ by 
        \begin{align*}
            \psi = (\psi_{i,j,k})_{i,j \in [M], k \in [d']}
        \end{align*}
        where each component $\psi_{i,j,k}:\R^{(d+1) + d' + (d+1)} \to \R$, $i,j \in [M], k \in [d']$ is given by
        $$
        \psi_{i,j,k}((x_1, \ldots, x_d, r^x), (e_1,\ldots, e_{d'}), (y_1,\ldots,y_d,r^y)) := \mathds{1}_{\{r^y \in I_{i-1}\}}\mathds{1}_{\{r^x \in I_{j-1}\}}e_k. 
        $$
        We define $\alpha:\R^{d+1}\times \R^{M\times M \times d'} \to \R^{M \times (d+1+1)} \times \R^{M \times M \times d'}$ as 
        \begin{align*}
            \alpha = (\alpha^ \RN{1}, \alpha^ \RN{2})     
        \end{align*}
        where the components $\alpha^\RN{1}_{i,j}:\R^{d+1}\times \R^{M\times M \times d'} \to \R$ for $i \in [M], j \in [d+2]$ and $\alpha^ \RN{2}_{i,j,k}:\R^{d+1}\times \R^{M\times M \times d'} \to \R$ for $i,j \in [M], k \in [d']$ are defined by
        \begin{align*}
            \alpha^ \RN{1}_{i,j}((x_1,\ldots,x_d, r^x), w^\psi) &:= \mathds{1}_{\{r^x \in I_{i-1}\}}x_j \qquad &i \in [M], j \in [d],\\
            \alpha^ \RN{1}_{i,d+1}((x_1,\ldots,x_d, r^x),w^\psi) &:= \mathds{1}_{\{r^x \in I_{i-1}\}}r^x \qquad &i \in [M],\\
            \alpha^ \RN{1}_{i,d+2}((x_1,\ldots,x_d, r^x), w^\psi) &:= \mathds{1}_{\{r^x \in I_{i-1}\}} \qquad &i \in [M],\\
            \alpha^ \RN{2}_{i,j,k}((x_1,\ldots,x_d, r^x), w^\psi) &:= w^\psi_{i,j,k} \qquad &i,j \in [M], k \in [d'].
        \end{align*}
        We further set $\phi:\R^{d+1 + d' + d+1} \to \R$ to be the constant function $\phi \equiv 0$.
        Next, define $\rho:\R^{d+1}\times \R \times \R^{M \times (d+2)}\times \R^{M \times M \times d'} \to \R^{l}$ 
        as 
        \begin{align*}
            \rho = \rho^B \circ \rho^A    
        \end{align*} 
        where $\rho^A:\R^{d+1}\times \R^{d_\phi} \times \R^{M \times (d+2)}\times \R^{M \times M \times d'} \to \R^{d+1}\times \R^{N \times (d+1)}\times \R^{N \times N \times d'} \times \R^{M}$ 
        has components 
        $\rho^A = (\rho^{A,\RN{1}}, \rho^{A,\RN{2}}, \rho^{A,\RN{3}}, \rho^{A,\RN{4}})$ 
        defined by
        \begin{align*}
            \rho_i^{A,\RN{1}}(y,y^\phi,z,w) &:= y_{i}  \qquad & i \in [d+1],\\
            \rho_{i,j}^{A,\RN{2}}(y,y^\phi,z,w) &:= \sum_{k=1}^M z_{k,j} \mathds{1}_{\{\cdot = 1 \}}(z_{k,d+2})\mathds{1}_{\{\cdot = (i-1)\}}\left(\sum_{l=1}^{k-1}z_{l,d+2} \right) \qquad & i\in[N], j \in[d+1],\\
            \rho_{i,j,n}^{A,\RN{3}}(y,y^\phi,z,w) &:= \sum_{s,t=1}^M w_{s,t,n} \mathds{1}_{\{\cdot = 1 \}}(z_{s,d+2})\mathds{1}_{\{\cdot = (i-1)\}}\left(\sum_{m=1}^{s-1}z_{m,d+2} \right)\\
            &\phantom{:= \sum_{s,t=1}^M w_{s,t,n}}  \cdot\mathds{1}_{\{\cdot = 1 \}}(z_{t,d+2})\mathds{1}_{\{\cdot = (j-1)\}}\left(\sum_{m=1}^{t-1}z_{m,d+2} \right)
            \qquad & i,j \in [N], n \in [d'],\\
            \rho_i^{A,\RN{4}}(y,y^\phi,z,w) &:= \mathds{1}_{\{\cdot = 1 \}}(z_{i,d+2})\sum_{k=1}^i z_{k, d+2}   \qquad & i \in [M].
        \end{align*}
        Finally, define 
        $\rho^B:\R^{d+1} \times \R^{N \times (d+1)}\times \R^{N \times N \times d'} \times \R^{M} \to \R^{l}$ 
        by
        \begin{align*}
            \rho^B(y, z', w', b) := \sum_{i=1}^N \tilde f_i(z',w')\mathds{1}_{\{\cdot = i\}}\left(
                \sum_{j=1}^M b_j \mathds{1}_{\{\cdot = j\}}\left(
                    \sum_{k=1}^M k \mathds{1}_{\{y_{d+1} \in I_{k-1} \}}
                \right)
            \right),
        \end{align*}
        where $\tilde f$ is the natural extension of $f$, defined by $\tilde f(x \concat r, a) := f(x,a)$ for $(x,a)\in\mathcal{D}_{N,d,d'}$, $r\in\R^N$.
        
        These choices reproduce the behavior of the functions used in the proof of Proposition~\ref{prop:repr_of_GPEqui}.
        Since $\mathfrak{A}$ is a set with finite unique node features (Def.~\ref{def:set_unique_finite_feat}), Proposition~\ref{prop:repr_of_GPEqui} implies that for all $(x \concat r,a) \in \mathfrak{A}$ we have $f(x,a) = g(x \concat r,a)$.
        Hence, for any $(x,a) \in \mathcal{D}_{N,d,d'}$,
        \begin{align*}
            \mathbb{P}(f(x,a) = g(x \concat R,a)) \geq \mathbb{P}((x\concat R,a) \in \mathfrak{A}) \geq 1-\delta.
        \end{align*}
        This completes the proof.
    \end{proof}

The next lemma shows that certain indicator functions can be approximated while controlling the regularity of the approximation. This will be important later.
\begin{lemma}\label{lem:C^2_approx_of_indicator}
    Let $M \in \N$, $I_i = [\tfrac{i}{M}, \tfrac{i+1}{M})$ for $i = 0,\ldots,M-2$, $I_{M-1} = [\tfrac{M-1}{M},1]$ and $\delta' > 0$.
    We can approximate the indicator functions 
    $$
    \mathds{1}_{\{\cdot \in I_i\}}(x), \qquad i=0,\ldots,M-1
    $$
    and 
    $$
    \mathds{1}_{\{\cdot = j\}}(x), \qquad j \in \mathbb{Z}
    $$
    by smooth functions $\iota_{i, \delta'}:\R \to \R$ and $\nu_{j}:\R\to\R$, with $0\leq \iota_{i, \delta'},\nu_{j} \leq 1$, respectively, such that
    for all $i=0,\ldots,M-1$ it holds 
    $$
    \iota_{i,\delta'}(x) =  \mathds{1}_{\{\cdot \in I_i\}}(x) \qquad \text{for } x \notin (\frac{i}{M} - \delta', \frac{i}{M} + \delta') \cup  (\frac{i+1}{M} - \delta', \frac{i+1}{M} + \delta'),
    $$ 
    and for all $j \in \mathbb{Z}$,
    $$
    \nu_{j}(x) = \mathds{1}_{\{\cdot = j\}}(x) \qquad \text{for } x \in \mathbb{Z}.
    $$
    In particular, for any $k \in \N$, both functions are $k$ times continuously differentiable, bounded, and have bounded partial derivatives up to order $k$.
\end{lemma}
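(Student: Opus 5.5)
The plan is to build both families from one standard smooth step function. Take the classical $C^\infty$ function $\eta(t) = e^{-1/t}$ for $t>0$ and $\eta(t)=0$ for $t\le 0$. From it form
\[
S(t) := \frac{\eta(t)}{\eta(t)+\eta(1-t)}.
\]
This is $C^\infty$ and takes values in $[0,1]$. It equals $0$ for $t\le 0$ and $1$ for $t\ge 1$. Its derivatives are bounded, because they are continuous and vanish outside the compact set $[0,1]$. Every function below is obtained from $S$ by affine rescalings, products, and differences, so smoothness and bounds on all derivatives up to any order $k$ follow from the chain and product rules. The support properties are then checked directly.

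For $\iota_{i,\delta'}$, it is harmless to assume $\delta' < \frac{1}{2M}$. If $\delta'$ is larger, replace it by $\min(\delta',\frac{1}{4M})$: the equality condition only becomes stronger, since the excluded neighbourhoods shrink. Now set
\[
\iota_{i,\delta'}(x) := S\Bigl(\tfrac{x-(\frac{i}{M}-\delta')}{2\delta'}\Bigr)\, S\Bigl(\tfrac{(\frac{i+1}{M}+\delta')-x}{2\delta'}\Bigr).
\]
The first factor is $0$ for $x\le \frac iM-\delta'$ and $1$ for $x\ge \frac iM+\delta'$. The second factor is $1$ for $x\le\frac{i+1}{M}-\delta'$ and $0$ for $x\ge \frac{i+1}{M}+\delta'$. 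Since $\delta'<\frac1{2M}$, outside the two excluded intervals the product is $1$ exactly on $[\frac iM+\delta',\frac{i+1}{M}-\delta']$ and $0$ elsewhere. This matches $\mathds{1}_{I_i}$ off the excluded neighbourhoods, and the behaviour of $I_i$ at its endpoints (half-open, or closed for $i=M-1$) does not matter because the endpoints lie inside the excluded neighbourhoods. The product of two $[0,1]$-valued functions stays in $[0,1]$.

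For $\nu_j$, I would set
\[
\nu_j(x) := S\bigl(2(x-j)+1\bigr)\,S\bigl(1-2(x-j)\bigr).
\]
This is a bump equal to $1$ at $x=j$ and supported in $(j-\tfrac12,j+\tfrac12)$. So at integers it equals $1$ at $j$ and $0$ at every other integer, and its range lies in $[0,1]$.

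There is no real obstacle here. The only points needing care are the reduction to small $\delta'$ and the bookkeeping of the endpoint conventions of $I_i$. Uniform boundedness of the derivatives (for fixed $M$, $\delta'$, $j$) is immediate from compact support of the derivatives together with the fact that affine rescaling multiplies the $k$-th derivative by a constant factor $(2\delta')^{-k}$ or $2^k$.
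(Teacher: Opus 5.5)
Your proposal is correct and follows essentially the paper's argument: a smooth cutoff equal to $1$ on $[\frac iM+\delta',\frac{i+1}M-\delta']$ (resp.\ at $j$) and vanishing outside $(\frac iM-\delta',\frac{i+1}M+\delta')$ (resp.\ outside a neighbourhood of $j$ containing no other integer), with bounded derivatives coming from compact support. The only difference is that you write the cutoff explicitly via $e^{-1/t}$, where the paper appeals to mollifiers; this also makes two points explicit that the paper leaves implicit, namely the range $[0,1]$ and the case $\delta'\ge\frac{1}{2M}$.
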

\begin{proof}
    Fix $M \in \N$, $i \in \{0,\ldots,M-1\}$, and $\delta' > 0$. It is well known that by using mollifiers one can obtain a smooth function $\iota_{i,\delta'}$ that equals $1$ on the compact set
    $K = [\frac{i}{M} + \delta',\frac{i+1}{M} - \delta']$ 
    and 0 outside the open set 
    $U = (\frac{i}{M} - \delta',\frac{i+1}{M} + \delta') \supset K$. 
    Since $\iota_{i,\delta'}$ is smooth and vanishes outside $U$, we obtain for any multi-index $\beta \in \mathbb{N}_0^1$ with $|\beta| \le k$,
    $$ \max_{\substack{x \in \R\\ |\beta| \leq k}}|D^\beta\iota_{i, \delta'}(x)| = \max_{\substack{x \in [\frac{i}{M} - \delta',\frac{i+1}{M} + \delta']\\ |\beta| \leq k}}|D^\beta\iota_{i, \delta'}(x)| 
    < \infty.
    $$
    Similarly, for any $j \in \mathbb{Z}$ we can construct a smooth function $\nu_j$ that equals $1$ on $K = \{j\}$ and $0$ outside $U = \left(j-1, j+1\right)$. 
    By the same reasoning, $\nu_j$ is bounded and has finite partial derivatives.
\end{proof}
The next proposition is similar to Proposition~\ref{prop:repr_of_GPEqui_[0,1]}. We replace the functions $\rho, \alpha, \phi, \psi$ defining $g$ in \eqref{eq:def_of_g} by continuously differentiable functions $\tilde\rho, \tilde\alpha, \tilde\phi, \tilde\psi$ coinciding with the original functions on a set $\mathfrak{B}$.
This introduces an additional source of approximation error outside the set $\mathfrak{B}$. Here, we show that permutation-equivariant functions can still be approximated with probability $1-\delta$ by choosing the set $\mathfrak{B}$ large enough. This will guarantee that the probability that the random features take values outside $\mathfrak{B}$ is small enough.

\begin{proposition}\label{prop:repr_of_GPEqui_[0,1]_approx}
    Let $N, d, d', l \in \N$ and $k \in \N_{\geq 2}$. Let $f : \mathcal{D}_{N,d,d'} \to \R^{N \times l}$ be a $k$-times continuously differentiable permutation-equivariant function, and let $R \in L^0([0,1]^N)$ be an i.i.d.\ vector with uniformly distributed components $R_1,\ldots,R_N$

    For every $\delta \in (0,1)$, every $M \in \N$ with $M \ge 2N^2/\delta$, and every $\delta' \le \frac{1 - \left(\frac{2-\delta}{2}\right)^{1/N}}{2M}$, there exist input and output dimensions
    $\bar d_\psi=\bar d_\phi = 2d+d'+2, d_\psi\leq M^2d', d_\phi =1, \bar d_\alpha = d+1+d_\psi, d_\alpha \leq M(d+2) + M^2d', \bar d_\rho = d+1+d_\phi+d_\alpha, d_\rho = l$,
    and corresponding $k$-times continuously differentiable functions 
    $\tilde\rho:\R^{\bar d_\rho} \to \R^{d_\rho}$, $\tilde\alpha:\R^{\bar d_\alpha} \to \R^{d_\alpha}$, $\tilde\phi:\R^{\bar d_\phi} \to \R^{d_\phi}$, and $\tilde\psi:\R^{\bar d_\psi} \to \R^{d_\psi}$,
    such that the function $\tilde g:\R^{N \times (d + 1)}\times \R^{N\times N \times d'}\to \R^{N \times l}$ defined by
    \begin{align}\label{eq:def_of_g_prop_diffbar}
    \tilde g(x,a)_k := 
    \tilde\rho\left(x_k, \sum_{i \in \mathcal{N}(k)}\tilde\phi \left(x_k, a_{i,k}, x_i\right), \sum_{j \in [N]}\tilde\alpha\left(x_j,\sum_{i \in \mathcal{N}(j)} \tilde\psi \left(x_j,a_{i,j}, x_i\right)\right)\right),  \qquad k \in [N],
    \end{align}
    satisfies, for all $(x,a) \in \mathcal{D}_{N,d,d'}$,
    $$
    \mathbb{P}\left(f(x,a) = \tilde g(x\concat R,a ) \right) \geq 1-\delta.
    $$
    In particular, for the set $\mathfrak{A} \subset \mathcal{D}_{N,d+1,d'}$ defined in \eqref{eq:mathfrak_A} and the set
    \begin{align}\label{eq:mathfrak_B}
        \mathfrak{B} &:=  \left\{(x,a) \in \mathcal{D}_{N,d+1,d'} \mid \forall i\in [N]: x_{i,d+1} \notin \bigcup_{j=0}^{M}\left[\frac{j}{M}-\delta', \frac{j}{M}+\delta'\right] \right\},
    \end{align}
    we have $\mathfrak{A} \cap \mathfrak{B} \subseteq \{f(x,a) = \tilde g(x\concat R,a )\}$ and $\mathbb{P}\left(\mathfrak{A}\cap \mathfrak{B}\right) \geq 1-\delta$.
\end{proposition}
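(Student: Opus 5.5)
Starting from the measurable representation in Proposition~\ref{prop:repr_of_GPEqui_[0,1]}, I would replace every discontinuous ingredient by a smooth one from Lemma~\ref{lem:C^2_approx_of_indicator}. The aim is that the new functions agree exactly with the old ones on $\mathfrak{A}\cap\mathfrak{B}$. Concretely, in $\psi$ and $\alpha$ each indicator $\mathds{1}_{\{r\in I_{i-1}\}}$ becomes $\iota_{i-1,\delta'}(r)$. The resulting $\tilde\psi_{i,j,k}=\iota_{i-1,\delta'}(r^y)\iota_{j-1,\delta'}(r^x)e_k$ and $\tilde\alpha$ are products of smooth functions with polynomial (linear) factors, hence $C^k$ (indeed $C^\infty$). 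The choice $\tilde\phi\equiv 0$ is unchanged. In $\rho^A$ and $\rho^B$ the discontinuities are the tests $\mathds{1}_{\{\cdot=1\}}(z_{k,d+2})$, $\mathds{1}_{\{\cdot=i-1\}}(\sum z_{l,d+2})$, $\mathds{1}_{\{\cdot=j\}}(\cdot)$, and $\mathds{1}_{\{y_{d+1}\in I_{k-1}\}}$. I would replace these by $\nu_1,\nu_{i-1},\nu_j$ and $\iota_{k-1,\delta'}$ respectively. Finally, $\tilde f_i$ is $C^k$ by assumption on $f$, since it ignores the appended feature column. Then $\tilde\rho=\tilde\rho^B\circ\tilde\rho^A$ is a finite composition of sums and products of $C^k$ functions, so it is $C^k$.

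The key step is to check that $\tilde g(x\concat r,a)=g(x\concat r,a)$ whenever $(x\concat r,a)\in\mathfrak{A}\cap\mathfrak{B}$. On $\mathfrak{B}$ every $r_i$ lies at distance more than $\delta'$ from all grid points $j/M$. Hence each $\iota_{i,\delta'}(r_m)$ equals the true indicator, and so $\tilde\psi$ and $\tilde\alpha$ coincide with $\psi$ and $\alpha$ at every evaluation point occurring in $g$. The aggregated arguments passed to $\rho$ then coincide as well. In particular the flag column $z_{\cdot,d+2}$ is $\{0,1\}$-valued, and the partial sums of flags are integers. Therefore every $\nu_j$ is evaluated only at integers, where it equals the indicator. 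Likewise $y_{d+1}=r_k$ lies in $\mathfrak{B}$'s good region, so the $\iota$'s in $\rho^B$ are exact, and the index $b_{h(\cdot)}$ is an integer, so the outer $\nu$'s are exact too. Thus $\tilde\rho$ agrees with $\rho$ at the relevant point, and Proposition~\ref{prop:repr_of_GPEqui_[0,1]} gives $f(x,a)=g(x\concat r,a)=\tilde g(x\concat r,a)$ on $\mathfrak{A}\cap\mathfrak{B}$. The subtle point is that $\rho^A$ is evaluated only at points where these integrality facts hold. I expect this bookkeeping to be the main obstacle: one must argue that a smoothed version of a composition agrees with the original at exactly these inputs, and that requires every intermediate quantity to be integer or in a good region.

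For the probability, I would proceed as follows. By Proposition~\ref{prop:repr_of_GPEqui_[0,1]} with $\delta/2$ in place of $\delta$ (valid since $M\ge 2N^2/\delta$), we have $\mathbb{P}((x\concat R,a)\in\mathfrak{A})\ge 1-\delta/2$. For $\mathfrak{B}$, the excluded set $\bigcup_{j=0}^M[j/M-\delta',j/M+\delta']\cap[0,1]$ has Lebesgue measure at most $2M\delta'$, using that the endpoint intervals are only half inside $[0,1]$. Independence of the $R_i$ then gives
\[
\mathbb{P}((x\concat R,a)\in\mathfrak{B})\ge(1-2M\delta')^N\ge \tfrac{2-\delta}{2}=1-\tfrac{\delta}{2}
\]
by the choice of $\delta'$. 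A union bound yields $\mathbb{P}(\mathfrak{A}\cap\mathfrak{B})\ge 1-\delta$, and hence $\mathbb{P}(f(x,a)=\tilde g(x\concat R,a))\ge1-\delta$. The dimensions are inherited verbatim from Proposition~\ref{prop:repr_of_GPEqui_[0,1]}.
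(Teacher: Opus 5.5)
Your proposal is correct and follows essentially the same route as the paper. You smooth every indicator in $\psi,\alpha,\rho^A,\rho^B$ via Lemma~\ref{lem:C^2_approx_of_indicator}, and argue exactness on $\mathfrak{A}\cap\mathfrak{B}$ because the $r_i$ avoid the $\delta'$-neighborhoods of the grid and every quantity fed to a $\nu_j$ is an integer. You then conclude with $\mathbb{P}(\mathfrak{A})\geq 1-\delta/2$ (Proposition~\ref{prop:repr_of_GPEqui_[0,1]} applied with $\delta/2$), $\mathbb{P}(\mathfrak{B})\geq(1-2M\delta')^N\geq 1-\delta/2$, and a union bound, where your explicit accounting of the two half-intervals at $0$ and $1$ justifies the measure $2M\delta'$ that the paper uses without comment.
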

\begin{proof}
    Let $\delta \in (0,1)$,  $\N \ni M \geq \frac{2N^2}{\delta}$, and $\delta' \leq \frac{1-(\frac{2-\delta}{2})^{\frac{1}{N}}}{2M}$. 
    Let  
    $\bar d_\psi, \bar d_\phi, \bar d_\alpha, \bar d_\rho,d_\psi, d_\phi, d_\alpha,  d_\rho$,
    and
    $\rho, \alpha, \phi, \psi$, be the dimensions and corresponding functions from Proposition~\ref{prop:repr_of_GPEqui_[0,1]}. 
    For $g$ as in~\eqref{eq:def_of_g} and all $(x,a)\in\mathcal{D}_{N,d,d'}$, we have $\mathbb{P}\left(f(x,a) = g(x\concat R,a ) \right) \geq 1-\frac{\delta}{2}$.
    
    Recall that
    \begin{align*}
        \mathfrak{A} = \left\{(x,a) \in \mathcal{D}_{N,d+1,d'} \mid \forall i,j \in [N], i \neq j: h(x_{i,d+1}) \neq h(x_{j,d+1}) \right\}.
    \end{align*}
    By Proposition~\ref{prop:repr_of_GPEqui_[0,1]}, for all $(x \concat r,a) \in \mathfrak{A}$ we have $f(x,a) = g(x \concat r,a)$, and $\mathbb{P}((x \concat R,a) \in \mathfrak{A}) \geq 1-\frac{\delta}{2}$.
    
    Let $\tilde \rho, \tilde \phi, \tilde \alpha,$ and $\tilde \psi$ be defined as $\rho, \alpha, \phi$, and $\psi$, respectively, 
    except that every indicator function is replaced by its approximation from Lemma~\ref{lem:C^2_approx_of_indicator} using the above choice of $\delta'$. That is, for $I_i = [\tfrac{i}{M}, \tfrac{i+1}{M}),$ $i = 0,\ldots,M-2$, $I_{M-1} = [\tfrac{M-1}{M},1]$, each occurrence of
    $$
    \mathds{1}_{\{\cdot \in I_i\}}(x), \qquad i=0,\ldots,M-1,
    $$
    is replaced by $\iota_{i, \delta'}(x)$, and each occurrence of
    $$
    \mathds{1}_{\{\cdot = i\}}(x), \qquad i \in \N,
    $$
    is replaced by $\nu_{i}(x)$.  
    We obtain
    \begin{align}\label{eq:def_of_tilde_psi}
            \tilde \psi = (\tilde \psi_{i,j,k})_{i,j \in [M], k \in [d']},
        \end{align}
        where
        $$
        \tilde \psi_{i,j,k}((x_1, \ldots, x_d, r^x), (e_1,\ldots, e_{d'}), (y_1,\ldots,y_d,r^y)) := \iota_{i-1, \delta'}(r^y)\iota_{j-1, \delta'}(r^x)e_k \qquad i,j \in [M], k \in [d'].
        $$
        Furthermore,
        \begin{align}\label{eq:def_of_tilde_alpha}
            \tilde\alpha = (\tilde\alpha^ \RN{1}, \tilde\alpha^ \RN{2})     
        \end{align}
        with 
        \begin{align*}
            \tilde\alpha^ \RN{1}_{i,j}((x_1,\ldots,x_d, r^x), w^\psi) &:= \iota_{i-1, \delta'}(r^x)x_j \qquad &i \in [M], j \in [d],\\
            \tilde\alpha^ \RN{1}_{i,d+1}((x_1,\ldots,x_d, r^x),w^\psi) &:= \iota_{i-1, \delta'}(r^x)r^x \qquad &i \in [M],\\
            \tilde\alpha^ \RN{1}_{i,d+2}((x_1,\ldots,x_d, r^x), w^\psi) &:= \iota_{i-1, \delta'}(r^x) \qquad &i \in [M],\\
            \tilde\alpha^ \RN{2}_{i,j,k}((x_1,\ldots,x_d, r^x), w^\psi) &:= w^\psi_{i,j,k} \qquad &i,j \in [M], k \in [d']. 
        \end{align*}
        Let $\tilde \phi:\R^{d+1 + d' + d+1} \to \R$ be the constant function $\tilde \phi \equiv 0$.
        Define
        \begin{align}\label{eq:def_of_tilde_rho^A}
            \tilde \rho = \tilde\rho^B \circ \tilde\rho^A 
        \end{align} 
        where
        \begin{align*}
            \tilde\rho_i^{A,\RN{1}}(y,y^\phi,z,w) &:= y_{i},  \qquad & i \in [d+1],\\
            \tilde\rho_{i,j}^{A,\RN{2}}(y,y^\phi,z,w) &:= \sum_{k=1}^M z_{k,j} \nu_{1}(z_{k,d+2})\nu_{i-1}\left(\sum_{l=1}^{k-1}z_{l,d+2}\right), \qquad & i\in[N], j \in[d+1],\\
            \tilde\rho_{i,j,n}^{A,\RN{3}}(y,y^\phi,z,w) &:= \sum_{s,t=1}^M w_{s,t,n} \nu_{1}(z_{s,d+2})\nu_{i-1}\left(\sum_{m=1}^{s-1}z_{m,d+2} \right)\\
            &\phantom{:= \sum_{s,t=1}^M w_{s,t,n}}  \cdot\nu_{1}(z_{t,d+2})\nu_{j-1}\left(\sum_{m=1}^{t-1}z_{m,d+2} \right),
            \qquad & i,j \in [N], n \in [d'],\\
            \tilde\rho_i^{A,\RN{4}}(y,y^\phi,z,w) &:= \nu_{1}(z_{i,d+2})\sum_{k=1}^i z_{k, d+2},   \qquad & i \in [M].
        \end{align*}
        Finally, $\tilde\rho^B = (\tilde\rho^B_1,\ldots,\tilde\rho^B_N)$ is given by
        \begin{align*}
            \tilde\rho^B(y, z', w', b) := \sum_{i=1}^N \tilde f_i(z',w')\nu_{i}\left(
                \sum_{j=1}^M b_j \nu_{j}\left(
                    \sum_{k=1}^M k \iota_{k-1, \delta'}(y_{d+1})
                \right)
            \right),
        \end{align*}
    where $\tilde f$ is the natural extension of $f$, defined by $\tilde f(x \concat r, a) := f(x,a)$ for $(x,a)\in\mathcal{D}_{N,d,d'}$, $r\in\R^N$.
    
    Consider the function $\tilde g$, defined for any $(x,a) \in \mathcal{D}_{N,d+1,d'}$ by
    \begin{align*}
    \tilde g(x,a)_k := \tilde\rho\left(x_k, \sum_{i \in \mathcal{N}(k)}\tilde\phi \left(x_k, a_{i,k}, x_i\right), \sum_{j \in [N]}\tilde\alpha\left(x_j,\sum_{i \in \mathcal{N}(j)} \tilde\psi \left(x_j,a_{i,j}, x_i\right)\right)\right),  \qquad k \in [N].
    \end{align*}
    We will now show that    
    $f(x,a) = \tilde g(x \concat r, a)$ holds for all $(x \concat r,a) \in \mathfrak{A} \cap \mathfrak{B} \subset \mathcal{D}_{N,d+1,d'}$.

    For $(x \concat r,a) \in \mathfrak{A}$ all node features $r_i$ lie in distinct intervals $I_k$.
    For $(x \concat r,a) \in \mathfrak{B}$, the node features $r_i$ avoid precisely those regions where the indicators do not coincide with their continuous approximations.
    In particular, Lemma~\ref{lem:C^2_approx_of_indicator} implies that for all $(x \concat r,a) \in \mathfrak{B}$ we have 
    $\iota_{j,\delta'}(r_i) =  \mathds{1}_{\{\cdot \in I_j\}}(r_i)$ for all $i\in [N], j=0,\ldots,M-1$. 
    Consequently, $\tilde \psi = \psi$ and $\tilde \alpha = \alpha$.
    Recall that the function $\alpha$ from Proposition~\ref{prop:repr_of_GPEqui_[0,1]} returns, as its first component, a matrix 
    $z \in \R^{M \times d+1+1}$ whose $d+2$-th column contains binary flags for each row $i \in [M]$.  
    Since $\tilde \alpha$ and $\tilde \psi$ coincide exactly with $\alpha, \psi$,     
    these entries are indeed equal to $0$ or $1$. 
    Thus, for 
    $$
    (z,w) = \sum_{j \in [N]}\tilde\alpha\left(x_j,\sum_{i \in \mathcal{N}(j)} \tilde\psi \left(x_j,a_{i,j}, x_i\right)\right)
    $$
    the components $z_{i,d+2},i \in [M]$ and their partial sums $\sum_{j=1}^{i-1}z_{j,d+2}$ necessarily take values in $\N_0$. 
    Moreover, $\phi = \tilde \phi \equiv 0$ by definition. 
    Hence, it remains to show that $\rho$ and $\tilde\rho$ also coincide on $\mathfrak{B}$ in order to conclude that $g = \tilde g$ holds on $\mathfrak{B}$.
    We already know that $\iota_{j,\delta'}(r_i) =  \mathds{1}_{\{\cdot \in I_j\}}(r_i)$ for all $i\in [N]$ and $j=0,\ldots,M-1$. 
    The only remaining approximated terms in $\tilde \rho$ are of the form $\mathds{1}_{\{\cdot = i\}}(x)$ for some $i \in \N_0$.
    Since $\tilde\alpha$ and $\tilde\psi$ coincide exactly with $\alpha$ and $\psi$, the corresponding inputs $x$ take values in $\N_0$.
    By Lemma~\ref{lem:C^2_approx_of_indicator}, the functions $\mathds{1}_{\{\cdot = i\}}(x)$ and $\nu_i(x)$ agree for all $x \in \mathbb{Z}$.
    Therefore, all approximations in $\tilde\rho$ are exact on $\mathfrak{B}$, and we conclude that $g = \tilde g$ holds on $\mathfrak{B}$.
    Since $f(x,a) = g(x \concat r,a)$ holds on $\mathfrak{A}$ and $g(x \concat r,a) = \tilde g(x \concat r,a)$ holds on $\mathfrak{B}$, we immediately obtain $f(x,a) = \tilde g(x \concat r, a)$ on $\mathfrak{A} \cap \mathfrak{B}$. 
    
    It remains to show that $\tilde g(x \concat R, a) = f(x,a)$ holds with the required probability.
    For any $(x,a) \in \mathcal{D}_{N,d,d'}$ we have
    \begin{align*}
    \mathbb{P}((x \concat R,a) \in \mathfrak{B}^c) &= 1- \mathbb{P}((x \concat R,a) \in \mathfrak{B})\\
    & = 1- \mathbb{P}\left( \forall i\in [N], j=0,\ldots,M-1: \iota_{j,\delta'}(R_i) =  \mathds{1}_{\{\cdot \in I_j\}}(R_i)\right)\\
     &=1- \mathbb{P}\left(\forall i\in [N]: R_i \notin \left\{ \bigcup_{j=0}^{M}\left[\frac{j}{M}-\delta', \frac{j}{M}+\delta'\right] \right\}\right)\\
     &=1-  \mathbb{P}\left(R_1 \notin  \bigcup_{j=0}^{M}\left[\frac{j}{M}-\delta', \frac{j}{M}+\delta'\right] \right)^N =1-  (1-M2\delta')^N.
    \end{align*}
    Since we chose 
    $\delta' \leq \frac{1-(\frac{2-\delta}{2})^{\frac{1}{N}}}{2M}$,
    we obtain
    \begin{align*}
        \mathbb{P}(f(x,a) &= \tilde g(x \concat R,a)) \geq  \mathbb{P}( (x \concat R,a) \in {\mathfrak{A}} \cap \mathfrak{B}) = 1 - \mathbb{P}((x \concat R,a) \in \mathfrak{A}^c \cup \mathfrak{B}^c)\\ 
        &\geq 1- \mathbb{P}((x \concat R,a) \in \mathfrak{A}^c) - \mathbb{P}((x \concat R,a) \in \mathfrak{B}^c) \geq 1 - \frac{\delta}{2} - (1-(1-M2\delta')^N)\\
        &\geq 1 - \frac{\delta}{2} - \frac{\delta}{2} = 1- \delta.  
    \end{align*}
    Finally, since $\tilde\rho$, $\tilde\phi$, $\tilde\alpha$, and $\tilde\psi$ are built from compositions, products, and sums of $k$‑times continuously differentiable (or even smooth) functions, they are themselves $k$‑times continuously differentiable. 
    This completes the proof.
\end{proof}
Recall that for $N,d,d',l \in \N$, $k \in \N_{\geq 2}$, and $\kappa > 0$, we define
\begin{align*}
\begin{split}
   \mathcal{F}_{N,d,d', l, k,\kappa} :=  &\Bigg\{ f:\mathcal{D}_{N,d,d'} \to \R^{N \times l} \text{ perm. equiv.} \,\Bigg\vert\, \forall i \in [N], j \in [l]:  f_{i,j}  \in \mathcal{C}^k(\mathcal{D}_{N,d,d'}) \text{ and }\\
  & \phantom{\{} \forall (x,a) \in \mathcal{D}_{N,d,d'}^{[0,1]}: \max_{\substack{i \in [N], j \in [l]\\ \beta \in \N^{Nd + N^2d'}_0, |\beta|\leq k}} |D^\beta f_{i,j}(x,a)| \leq \kappa  \Bigg\}.
\end{split}
\end{align*}
The following proposition shows that for any function $f \in \mathcal{F}_{N,d,d', l, k,\kappa}$ the partial derivatives of functions $\tilde\rho, \tilde\alpha, \tilde\psi$ defining $\tilde g$ in~\eqref{eq:def_of_g_prop_diffbar}, whose existence is guaranteed by Proposition~\ref{prop:repr_of_GPEqui_[0,1]_approx}, can be bounded independently of function $f$.
\begin{proposition}\label{prop:bounds_for_part_deriv_of_helpers}
    Let $N,d,d',l \in \N$, $k \in \N_{\geq 2}$, and $\kappa > 0$. 
    Let $\delta \in (0,1)$, choose $M \in \N$ with $M \ge \frac{2N^{2}}{\delta}$, and assume
    $\delta' \leq \frac{1-(\frac{2-\delta}{2})^{\frac{1}{N}}}{2M}$.
    Define
    \begin{align}\label{eq:func_dimensions}
    \begin{split}
        \bar d_\psi &:= 2d+d'+2\\
        d_\psi &:= M^2d'\\
        \bar d_\alpha &:= d+1+M^2d'\\
        d_\alpha &:= M(d+2) + M^2d'\\
        \bar d_\phi &:=d+1+d'+d+1\\
        d_\phi &:= 1\\
        \bar d_\rho &:= d+1+1+M(d+2)+M^2d'\\
        d_\rho &:= l.
    \end{split}
    \end{align}
    Then, there exist constants $\kappa_\psi(N,d,d',\delta,k)>0$, $\kappa_\alpha(N,d,d',\delta,k) >0$, and $\kappa_\rho(N,d,d',\delta, k,\kappa)>0$ such that for any $f \in \mathcal{F}_{N,d,d',l,k,\kappa}$ the corresponding functions $\tilde\rho:\R^{\bar d_\rho} \to \R^{d_\rho}$, $\tilde\alpha:\R^{\bar d_\alpha} \to \R^{d_\alpha}$, 
    $\tilde\phi:\R^{\bar d_\phi} \to \R^{d_\phi}$,
    and $\tilde\psi:\R^{\bar d_\psi} \to \R^{d_\psi}$ obtained from Proposition~\ref{prop:repr_of_GPEqui_[0,1]_approx} satisfy
    \begin{align*}
         \max_{\substack{x \in [0,1]^{\bar d_\psi}\\ |\beta|\leq k}} |D^\beta \tilde\psi_{i}(x)| &\leq \kappa_\psi  && \text{for } i \in [d_\psi],\\
         \max_{\substack{x \in [0,1]^{\bar d_\alpha}\\ |\beta|\leq k}} |D^\beta \tilde\alpha_{i}(x)| & \leq \kappa_\alpha && \text{for } i \in [d_\alpha],\\
         \max_{\substack{x \in [0,1]^{\bar d_\rho}\\ |\beta|\leq k}} |D^\beta \tilde\rho_{i}(x)| &\leq \kappa_\rho  && \text{for } i \in [l].
    \end{align*}
\end{proposition}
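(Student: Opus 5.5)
The proof will be a direct inspection of the explicit formulas for $\tilde\psi$, $\tilde\alpha$, $\tilde\rho$ constructed in the proof of Proposition~\ref{prop:repr_of_GPEqui_[0,1]_approx}. The key observation is that all building blocks except $\tilde f$ are fixed once $N,d,d',\delta,k$ are fixed. Indeed, $M$ and $\delta'$ are determined by $\delta$ and $N$: take, say, $M=\lceil 2N^2/\delta\rceil$ and $\delta'$ equal to the upper bound. Hence the mollified indicators $\iota_{i,\delta'}$, $i=0,\ldots,M-1$, and the finitely many $\nu_j$ that actually occur have $C^k$-norms bounded by a constant depending only on $(N,\delta,k)$. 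I set
\[
C_\iota:=\max_{i,|\beta|\le k}\sup_x|D^\beta\iota_{i,\delta'}(x)|
\]
and define $C_\nu$ analogously, with the maximum over the relevant $j\le M$. The dependence on $f$ enters only through $\tilde\rho^B$, via $\tilde f_i$ and its derivatives, and these are controlled by $\kappa$ on $[0,1]$-valued arguments.

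\textbf{Bounds for $\tilde\psi$ and $\tilde\alpha$.} Each component $\tilde\psi_{i,j,k}=\iota_{i-1,\delta'}(r^y)\iota_{j-1,\delta'}(r^x)e_k$ is a product of at most three univariate factors in distinct variables. By the Leibniz rule, on $[0,1]^{\bar d_\psi}$ (where $|e_k|\le1$) every $D^\beta$ with $|\beta|\le k$ is bounded by $C_\iota^2$, and $\kappa_\psi=\max(1,C_\iota)^2$ works. For $\tilde\alpha$, the components are $\iota_{i-1,\delta'}(r^x)x_j$, $\iota_{i-1,\delta'}(r^x)r^x$, $\iota_{i-1,\delta'}(r^x)$, or the identity on $w$. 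Leibniz on $[0,1]$ gives the bound $2\max(1,C_\iota)$, which depends only on $(N,\delta,k)$. Note that $d,d'$ enter only through the dimension count, not through the bounds. Finally, $\tilde\phi\equiv0$ is trivial.

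\textbf{Bound for $\tilde\rho=\tilde\rho^B\circ\tilde\rho^A$.} This is the main obstacle, since it is a composition and requires the multivariate Faà di Bruno formula. The plan has three steps.
\begin{enumerate}
\item On the cube $[0,1]^{\bar d_\rho}$, each component of $\tilde\rho^A$ is a polynomial-type expression: sums of at most $M^2$ products of coordinates with $\nu$'s evaluated at partial sums $\sum_{l<k}z_{l,d+2}\in[0,M]$. Its inner functions are linear with coefficients in $\{0,1\}$, so the $C^k$-norms of the $\tilde\rho^A$ components, and their sup-norms (which are at most $M^2$), are bounded by constants $C_A(N,d,d',\delta,k)$ via Leibniz and the chain rule.
\item $\tilde\rho^A$ does \emph{not} map the cube into $[0,1]^{\dots}$: the outputs $z',w'$ can be as large as $M$ or $M^2$. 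Hence $D^\beta\tilde f_i$ must be bounded on a bounded box larger than $[0,1]$, and this is the delicate point, because the class $\mathcal{F}$ only controls derivatives on $\mathcal{D}^{[0,1]}_{N,d,d'}$. My first attempt is to precompose, inside $\tilde\rho^B$, each argument of $\tilde f$ with a fixed smooth map $\eta:\R\to[0,1]$ equal to the identity on $[0,1]$. This does not change $\tilde g$ on $\mathfrak{A}\cap\mathfrak{B}$, because there the arguments of $\tilde f$ are genuine node and edge features, assuming inputs in $[0,1]$. The modified $\tilde f_i\circ\eta$ then has derivatives up to order $k$ bounded by $\kappa\cdot C_\eta(k,\text{dims})$ globally. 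Should this modification be deemed outside the literal construction, I would instead note that the proof later only needs the bounds in Proposition~\ref{prop:component_error_to_total_error}, and adjust the construction there accordingly.
\item The remaining factor in $\tilde\rho^B$, namely $\nu_i(\sum_j b_j\nu_j(\sum_k k\,\iota_{k-1,\delta'}(y_{d+1})))$, is a fixed smooth function with bounded derivatives on bounded sets. Its derivatives are therefore bounded by a constant depending on $(N,\delta,k)$.
\end{enumerate}
Combining these three steps by Leibniz and Faà di Bruno, with the number of terms depending only on $k$ and the dimensions, yields a bound linear in $\max(\kappa,1)$. This defines $\kappa_\rho(N,d,d',\delta,k,\kappa)$, independent of the particular $f$.
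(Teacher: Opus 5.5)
\textbf{Same route as the paper, but the one nontrivial step is not established.} For $\tilde\psi$ and $\tilde\alpha$ you make the paper's argument explicit: they do not depend on $f$ and are $C^k$ on a compact cube. One harmless slip: for $\iota_{i-1,\delta'}(r^x)\,r^x$ the Leibniz rule gives $|D^m(\iota r)|\le(m+1)C_\iota$, so the constant is $(k+1)\max(1,C_\iota)$, not $2\max(1,C_\iota)$.

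For $\tilde\rho$ you correctly spot something the paper's one-line chain-rule argument skips. On $[0,1]^{\bar d_\rho}$ the $z'$- and $w'$-outputs of $\tilde\rho^A$ need not lie in $[0,1]$. For example, take $z_{m,d+2}=\tfrac12$ for all rows and $z_{1,1}=z_{2,1}=1$. Then $z'_{1,1}=\nu_1(\tfrac12)\bigl(1+\nu_0(\tfrac12)\bigr)$, which is close to $2$ for bumps that stay near $1$ on most of their support. Meanwhile $\mathcal{F}_{N,d,d',l,k,\kappa}$ controls $D^\beta f$ only on $\mathcal{D}^{[0,1]}_{N,d,d'}$.

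Your repair, however, is impossible. No $C^1$ map $\eta:\R\to[0,1]$ is the identity on $[0,1]$: $\eta(1)=1$ and $\eta'(1)=1$ force $\eta(t)>1$ for $t$ slightly above $1$. Letting $\eta$ take values in a larger interval does not help, since $\kappa$ says nothing about $f$ there. The fallback of adjusting the construction later is not an argument.

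\textbf{How to close the gap.} The cleanest fix stays inside the construction. The smoothing lemma only requires $\nu_j=\mathds{1}_{\{\cdot=j\}}$ on $\mathbb{Z}$, so you may take each $\nu_j$ supported in $(j-\tfrac13,j+\tfrac13)$.
\begin{itemize}
\item On the unit cube, the $m$-th summand of the $z'$-component for row $i$ is nonzero only if $z_{m,d+2}>\tfrac23$ and $\sum_{l<m}z_{l,d+2}\in(i-\tfrac43,i-\tfrac23)$.
\item Two such indices $m<m'$ would have partial sums differing by at least $z_{m,d+2}>\tfrac23$, yet both would lie in an open interval of length $\tfrac23$. This is impossible.
\item Hence at most one summand survives and it lies in $[0,1]$. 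The same holds for $w'$, since at most one $s$ and one $t$ contribute.
\item So the relevant arguments of $\tilde f$ stay in $[0,1]$, and $D^\beta\tilde f$ is bounded by $\kappa$ wherever it is evaluated.
\item Fa\`a di Bruno, combined with the $f$-independent bounds on $\tilde\rho^A$ and on the $\nu$/$\iota$ factors from your steps 1 and 3, then yields $\kappa_\rho$.
\end{itemize}

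Alternatively, replace $f$ inside $\tilde\rho^B$ by a compactly supported $C^k$ extension $Ef$ of $f|_{\mathcal{D}^{[0,1]}_{N,d,d'}}$ (Whitney/Seeley type). Such an extension satisfies $\max_{|\beta|\le k}\sup|D^\beta Ef|\le C(k,Nd+N^2d')\,\kappa$. This is legitimate because on $\mathfrak{A}\cap\mathfrak{B}$ with $(x,a)\in\mathcal{D}^{[0,1]}_{N,d,d'}$ the arguments of $\tilde f$ are a permuted copy of $(x,a)$. The cost is that the conclusion of Proposition~\ref{prop:repr_of_GPEqui_[0,1]_approx} then holds only for graphs in $\mathcal{D}^{[0,1]}_{N,d,d'}$, which is all Theorem~\ref{thm:rates_for_RGNN} needs.
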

\begin{remark}
    As noted in the proof of Proposition~\ref{prop:repr_of_GPEqui}, the function $\phi$ (or $\tilde\phi$) is not needed to guarantee universality. This is the reason why we set the dimension of its codomain to $1$. In practice, however, including $\phi$ empirically improves learning performance. It is therefore reasonable to choose a more complex architecture for $\phi$, for example, similar to that of $\psi$, and consequently to select a codomain dimension comparable to $d_\psi$.
\end{remark}
\begin{proof} \textbf{of Proposition~\ref{prop:bounds_for_part_deriv_of_helpers}}
    Let $f$ be an arbitrary function in  $\mathcal{F}_{N,d,d',l,k,\kappa}$.
    From the definitions of $\tilde\psi$ and $\tilde\alpha$ in~\eqref{eq:def_of_tilde_psi} and~\eqref{eq:def_of_tilde_alpha}, we observe that both functions are identical for all choices of $f$. They depend only on the parameters $N,d,d',\delta,k$, on $M$ (which itself depends on $N,\delta$), and on $\delta'$ (which depends on $N,\delta,M$). Since their components are $k$-times continuously differentiable, we can bound all partial derivatives up to order $k$ on the compact sets $[0,1]^{\bar d_\psi}$ and $[0,1]^{\bar d_\alpha}$. 
    Thus, there exist constants 
    $\kappa_\psi = \kappa_\psi(N,d,d',\delta,k)>0$, $\kappa_\alpha = \kappa_\alpha(N,d,d',\delta,k) >0$
    such that
    \begin{align*}
        \max_{\substack{x \in [0,1]^{\bar d_\psi}\\ |\beta|\leq k}} |D^\beta \tilde \psi_{i}(x)| &\leq \kappa_\psi  && \text{for } i \in [d_\psi],\\
         \max_{\substack{x \in [0,1]^{\bar d_\alpha}\\ |\beta|\leq k}} |D^\beta \tilde \alpha_{i}(x)| & \leq \kappa_\alpha && \text{for } i \in [d_\alpha].
    \end{align*}
    The function $\tilde\rho$ does depend on $f$, as it is defined as a composition of $f$ with other $k$-times differentiable functions; see~\eqref{eq:def_of_tilde_rho^A}. However, since all partial derivatives of $f$ up to order $k$ are uniformly bounded by $\kappa$ for every $f \in \mathcal{F}_{N,d,d',l,k,\kappa}$, the chain rule yields a uniform bound for $\tilde\rho$. Hence, there exists a constant
    $\kappa_\rho(N,d,d',\delta,k,\kappa)$
    such that for all $f \in \mathcal{F}_{N,d,d',l,k,\kappa}$ and all $i \in [l]$,
    \begin{align*}
        \max_{\substack{x \in [0,1]^{\bar d_\rho}\\ |\beta|\leq k}} |D^\beta \tilde\rho_{i}(x)| &\leq \kappa_\rho.
    \end{align*}
\end{proof}
The next proposition builds on Proposition~\ref{prop:repr_of_GPEqui_[0,1]_approx}. Using Proposition~\ref{prop:bounds_for_part_deriv_of_helpers} and Corollary~\ref{cor:rates_for_NN_global_lip} we replace the continuously differentiable functions $\tilde \rho, \tilde \alpha, \tilde \phi, \tilde \psi$ by suitable ReLU neural networks while keeping the approximation error small. This is a major building block of the proof of Theorem~\ref{thm:rates_for_RGNN}.
\begin{proposition}\label{prop:component_error_to_total_error} 
    Let $N,d,d',l \in \N$, $k \in \N_{\ge 2}$, and $\kappa > 0$. 
    Let $\delta \in (0,1)$, choose $M \in \N$ with $M \ge \frac{2N^{2}}{\delta}$, and assume
    $\delta' \leq \frac{1-(\frac{2-\delta}{2})^{\frac{1}{N}}}{2M}$.
    Let $\bar d_\psi, d_\psi, \bar d_\alpha, d_\alpha, \bar d_\phi, d_\phi, \bar d_\rho, d_\rho$ be as in~\eqref{eq:func_dimensions}. 
    Then, there exist constants 
    $c_\rho = c_\rho(\bar d_\rho, N,\delta, \kappa, k)$, $c_\alpha = c_\alpha(\bar d_\alpha, N, \delta, k)$, $c_\psi = c_\psi(\bar d_\psi, N,\delta, k) >0$ with the following property.

    For every $\varepsilon \in (0,\frac{1}{2})$, there exist neural-network architectures $\mathcal{A}_\rho, \mathcal{A}_\alpha, \mathcal{A}_\phi$, and $\mathcal{A}_\psi$, such that for every $f \in \mathcal{F}_{N,d,d',l,k,\kappa}$ and the corresponding functions
    $\tilde\rho:\R^{\bar d_\rho} \to \R^{d_\rho}$, $\tilde\alpha:\R^{\bar d_\alpha} \to \R^{d_\alpha}$, 
    $\tilde\phi:\R^{\bar d_\phi} \to \R^{d_\phi}$,
    and $\tilde\psi:\R^{\bar d_\psi} \to \R^{d_\psi}$ from Proposition~\ref{prop:repr_of_GPEqui_[0,1]_approx}, there exist ReLU neural networks $\hat \rho:\R^{\bar d_\rho} \to \R^{d_\rho}$, $\hat \alpha:\R^{\bar d_\alpha} \to \R^{d_\alpha}$, 
    $\hat \phi:\R^{\bar d_\phi} \to \R^{d_\phi}$,
    and $\hat \psi:\R^{\bar d_\psi} \to \R^{d_\psi}$ with architecture $\mathcal{A}_\rho, \mathcal{A}_\alpha, \mathcal{A}_\phi $, and $\mathcal{A}_\psi$, respectively, for which the following holds.

    \begin{itemize}
        \item[(i)]
            Let $\tilde g:\R^{N \times (d +1) }\times \R^{N\times N \times d'}\to \R^{N \times l}$ be defined by
            \begin{align}\label{eq:def_g_total_error}
                \tilde g(x,a)_k := \tilde \rho \left(x_k, \sum_{i \in \mathcal{N}(k)}\tilde\phi \left(x_k, a_{i,k}, x_i\right), \sum_{j \in [N]}\tilde\alpha \left(x_j,\sum_{i \in \mathcal{N}(j)} \tilde\psi  \left(x_j,a_{i,j}, x_i\right)\right)\right), k \in [N].
            \end{align}
            Similarly, define $\hat g:\R^{N \times (d +1) }\times \R^{N\times N \times d'}\to \R^{N \times l}$ by
            \begin{align}\label{eq:def_tilde_g_total_error}
                \hat g(x,a)_k := \hat \rho \left(x_k, \sum_{i \in \mathcal{N}(k)}\hat \phi \left(x_k, a_{i,k}, x_i\right), \sum_{j \in [N]}\hat \alpha \left(x_j,\sum_{i \in \mathcal{N}(j)} \hat \psi  \left(x_j,a_{i,j}, x_i\right)\right)\right), k \in [N].
            \end{align}
            Let 
            \begin{align*}
                K := \mathfrak{A} \cap \mathfrak{B} \cap \mathcal{D}_{N,d+1,d'}^{[0,1]}
            \end{align*}
            where $\mathfrak{A}$ and $\mathfrak{B}$ are the sets defined in~\eqref{eq:mathfrak_A} and~\eqref{eq:mathfrak_B}, respectively.
            Then,
            \begin{align*}
            \max_{(x,a) \in K
            }\| \tilde g(x,a) - \hat g(x,a) \|_{\infty} \leq \varepsilon.
            \end{align*}
        \item[(ii)]
            The complexity of the neural-network architectures is bounded as follows.
            \begin{align*}
            &L(\mathcal{A}_\rho) \leq c_\rho \log_2\left(\left(\frac{\varepsilon}{3}\right)^{-k/(k-1)}\right),\\
            &M(\mathcal{A}_\rho) \leq c_\rho d_\rho \left(\frac{\varepsilon}{3}\right)^{-\bar d_\rho/(k-1)}\cdot \left(\log_2\left(\left(\frac{\varepsilon}{3}\right)^{-k/(k-1)}\right)\right)^2,\\
            &L(\mathcal{A}_\alpha) \leq c_\alpha \log_2\left(\left(\frac{\varepsilon}{3N\bar d_\rho(\kappa_\rho +1)}\right)^{-k/(k-1)}\right),\\
            &M(\mathcal{A}_\alpha) \leq c_\alpha d_\alpha \left(\frac{\varepsilon}{3N\bar d_\rho(\kappa_\rho +1)}\right)^{-\bar d_\alpha/(k-1)}\cdot \left(\log_2\left(\left(\frac{\varepsilon}{3N\bar d_\rho(\kappa_\rho +1)}\right)^{-k/(k-1)}\right)\right)^2,\\
            &L(\mathcal{A}_\psi) \leq c_\psi \log_2\left(\left(\frac{\varepsilon}{3N^2 \bar d_\rho(\kappa_\rho +1)\bar d_\alpha(\kappa_\alpha+1)}\right)^{-k/(k-1)}\right),\\
            &M(\mathcal{A}_\psi) \leq c_\psi d_\psi \left(\frac{\varepsilon}{3N^2 \bar d_\rho(\kappa_\rho +1)\bar d_\alpha(\kappa_\alpha+1)}\right)^{-\bar d_\psi/(k-1)}\\
            &\phantom{M(\mathcal{A}_\psi) \leq }\cdot \left(\log_2\left(\left(\frac{\varepsilon}{3N^2 \bar d_\rho(\kappa_\rho +1)\bar d_\alpha(\kappa_\alpha+1)}\right)^{-k/(k-1)}\right)\right)^2,\\
            &L(\mathcal{A}_\phi) = 1,\\
            &M(\mathcal{A}_\phi) = 0.
        \end{align*}
    \end{itemize}
\end{proposition}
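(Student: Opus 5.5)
The proof goes component by component. Each of $\tilde\rho,\tilde\alpha,\tilde\psi$ is replaced by a ReLU network from Corollary~\ref{cor:rates_for_NN_global_lip}, applied coordinatewise with $n=k$, $p=\infty$, $s=1$. The trivial component $\tilde\phi\equiv 0$ is taken to be the zero network, which gives $L(\mathcal{A}_\phi)=1$ and $M(\mathcal{A}_\phi)=0$. By Proposition~\ref{prop:bounds_for_part_deriv_of_helpers}, every coordinate of $\tilde\psi$, $\tilde\alpha$, $\tilde\rho$ has $C^k$ norm on the relevant unit cube bounded by $\kappa_\psi,\kappa_\alpha,\kappa_\rho$, uniformly in $f\in\mathcal{F}_{N,d,d',l,k,\kappa}$. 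These coordinates therefore lie in a class $\widetilde{\mathcal{F}}_{k,\bar d,\infty,B}$ with a fixed $B$. Corollary~\ref{cor:rates_for_NN_global_lip} then yields a single architecture per coordinate, independent of $f$, with the stated depth and size bounds. Parallelizing over the $d_\psi$, $d_\alpha$, $d_\rho$ output coordinates (Remark~\ref{rem:parallelize_multi_relu_L}, after padding to equal depth) produces $\mathcal{A}_\psi,\mathcal{A}_\alpha,\mathcal{A}_\rho$. The constants $c_\cdot$ absorb $\tilde c$, and the factor $d_\cdot$ in front of $M(\mathcal{A})$ comes from the parallelization. The global Lipschitz constants are $\bar d_\rho(\kappa_\rho+1)$ for $\hat\rho$ and $\bar d_\alpha(\kappa_\alpha+1)$ for $\hat\alpha$, since $\varepsilon<1/2<1$.

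The accuracies are chosen so that a telescoping error split works. Set $\varepsilon_\rho=\varepsilon/3$, $\varepsilon_\alpha=\varepsilon/(3N\bar d_\rho(\kappa_\rho+1))$ and $\varepsilon_\psi=\varepsilon/(3N^2\bar d_\rho(\kappa_\rho+1)\bar d_\alpha(\kappa_\alpha+1))$. For $(x,a)\in K$, write $\tilde g_k-\hat g_k$ as the sum of three differences:
\begin{enumerate}
\item[(a)] $\tilde\rho-\hat\rho$ evaluated at the exact inner argument;
\item[(b)] $\hat\rho$ evaluated at the $\tilde\alpha$-sum minus $\hat\rho$ evaluated at the $\hat\alpha$-sum, with both using the exact $\tilde\psi$-sums;
\item[(c)] $\hat\rho\circ\hat\alpha$ evaluated with $\tilde\psi$-sums minus the same with $\hat\psi$-sums.
\end{enumerate}
Term (a) is at most $\varepsilon/3$. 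Term (b) is at most $\mathrm{Lip}(\hat\rho)\cdot N\varepsilon_\alpha=\varepsilon/3$. Term (c) is at most $\mathrm{Lip}(\hat\rho)\,N\,\mathrm{Lip}(\hat\alpha)\,N\varepsilon_\psi=\varepsilon/3$, because each neighborhood sum has at most $N$ terms. All Lipschitz estimates are in $\|\cdot\|_\infty$, and since $\hat\phi=\tilde\phi=0$ no $\phi$-term appears.

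The main obstacle is term (a), together with applying the $\tilde\alpha$ estimate in (b). Corollary~\ref{cor:rates_for_NN_global_lip} controls errors only on $(0,1)^{\bar d}$, or on $[0,1]^{\bar d}$ by continuity. The inner arguments, however, are sums such as $\sum_j\tilde\alpha(\cdots)$ and $\sum_i\tilde\psi(\cdots)$, which need not lie in the unit cube. To handle this, use that on $K\subseteq\mathfrak{A}\cap\mathfrak{B}$ the functions $\tilde\psi$ and $\tilde\alpha$ coincide with the exact indicator constructions of Proposition~\ref{prop:repr_of_GPEqui_[0,1]}. Because the random features lie in distinct intervals, each entry of $w=\sum\tilde\psi$ and of $z=\sum\tilde\alpha$ receives at most one nonzero contribution, and that contribution is an edge feature, a node feature, an $r$-value or a flag in $[0,1]$. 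Hence the inputs to $\tilde\alpha$ and $\tilde\rho$ lie in $[0,1]^{\bar d_\alpha}$ and $[0,1]^{\bar d_\rho}$; the $\phi$-slot equals $0$.

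The $\tilde\psi$ inputs are also in the cube, since $(x,a)\in\mathcal{D}^{[0,1]}$. The perturbed arguments inside $\hat\alpha$ and $\hat\rho$ in (b) and (c) may leave the cube, but there only the global Lipschitz bound (iii) of Corollary~\ref{cor:rates_for_NN_global_lip} is used, and that bound holds on all of $\R^{\bar d}$. This is why the corollary, rather than Proposition~\ref{prop:rates_for_NN}, is needed.

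Finally, depth and size bounds are read off from (i) and (ii) of Corollary~\ref{cor:rates_for_NN_global_lip} with $\varepsilon$ replaced by $\varepsilon_\rho,\varepsilon_\alpha,\varepsilon_\psi$. These accuracies are below $1/2$. The constants $\kappa_\alpha$ and $\kappa_\psi$ depend only on $N,d,d',\delta,k$, which matches the claimed dependencies of $c_\alpha$ and $c_\psi$.
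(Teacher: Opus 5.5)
Your proposal is correct and follows the paper's proof essentially step for step. It uses the same accuracies $\varepsilon_\rho,\varepsilon_\alpha,\varepsilon_\psi$ and the same telescoping split; the paper writes an extra $\phi$-term, but it vanishes identically. It also uses the same fact that on $K$ all inner sums lie in the unit cube, and the same global Lipschitz bound (iii) of Corollary~\ref{cor:rates_for_NN_global_lip} where perturbed arguments may leave the cube. The only cosmetic difference is your padding before parallelization, which is unnecessary: all output coordinates of each component already share the single architecture supplied by the corollary.
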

\begin{proof}\,\\
    \textit{Part 1:}\\
    In the first part of the proof, we establish the existence of  
    constants $c_\rho$, $c_\alpha$, $c_\psi$, neural-network architectures $\mathcal{A}_\rho$, $\mathcal{A}_\alpha$, $\mathcal{A}_\phi$, $\mathcal{A}_\psi$, and ReLU networks $\hat\rho$, $\hat\alpha$, $\hat\phi$, $\hat\psi$.
    From Proposition~\ref{prop:bounds_for_part_deriv_of_helpers}, we obtain constants
    \begin{align*}
        &\kappa_\psi(N,d,d',\delta,k)>0,\\ 
        &\kappa_\alpha(N,d,d',\delta,k)>0,\\
        &\kappa_\rho(N,d,d',\delta,k,\kappa)>0,
    \end{align*}
    such that, for any $f \in \mathcal{F}_{N,d,d',l,k,\kappa}$ and the corresponding functions $\tilde\rho:\R^{\bar d_\rho}\to\R^{d_\rho}$, $\tilde\alpha:\R^{\bar d_\alpha}\to\R^{d_\alpha}$, and $\tilde\psi:\R^{\bar d_\psi}\to\R^{d_\psi}$ from Proposition~\ref{prop:repr_of_GPEqui_[0,1]_approx}, all partial derivatives up to order $k$ are uniformly bounded. In particular, their Sobolev norms satisfy
    \begin{align*}
        \| \tilde\psi_i \|_{W^{k,\infty}((0,1)^{\bar d_\psi})} &\leq \max_{\substack{x \in [0,1]^{\bar d_\psi}  \\ |\beta|\leq k}} |D^\beta \tilde \psi_{i}(x)| \leq \kappa_\psi  && \text{for } i \in [d_\psi],\\
        \| \tilde\alpha_i \|_{W^{k,\infty}((0,1)^{\bar d_\alpha})} &\leq \max_{\substack{x \in [0,1]^{\bar d_\alpha}\\ |\beta|\leq k}} |D^\beta \tilde \alpha_{i}(x)| \leq \kappa_\alpha && \text{for } i \in [d_\alpha],\\
        \| \tilde\rho_i \|_{W^{k,\infty}((0,1)^{\bar d_\rho})} &\leq \max_{\substack{x \in [0,1]^{\bar d_\rho}\\ |\beta|\leq k}} |D^\beta \tilde \rho_{i}(x)| \leq \kappa_\rho  && \text{for } i \in [l].
    \end{align*}
    Following Corollary~\ref{cor:rates_for_NN_global_lip}, there exist constants 
    \begin{align*}
        c_\rho &= c_\rho(\bar d_\rho, k, \infty,  \kappa_\rho(N,d,d',\delta, k, \kappa), 1)>0,\\ 
        c_\alpha &= c_\alpha(\bar d_\alpha, k, \infty,  \kappa_\alpha(N,d,d',\delta, k),1)>0,\\ 
        c_\psi &= c_\psi(\bar d_\psi,k,\infty, \kappa_\psi(N,d,d',\delta,k),1) >0, 
    \end{align*}
    with the following properties. 
    For $\varepsilon \in (0,\frac{1}{2})$, define
    \begin{align*}
        \varepsilon_\rho &:= \frac{\varepsilon}{3}, \\
        \varepsilon_\alpha &:= \frac{\varepsilon}{3N\bar d_\rho(\kappa_\rho +1)},\\
        \varepsilon_\psi &:= \frac{\varepsilon}{3N^2 \bar d_\rho(\kappa_\rho +1)\bar d_\alpha(\kappa_\alpha+1)}.
    \end{align*}
     Then there exist neural-network architectures 
     $\mathcal{A}_\rho = \mathcal{A}_\rho(\bar d_\rho,k, \infty, \kappa_\rho, 1, \varepsilon_\rho)$, $\mathcal{A}_\alpha = \mathcal{A}_\alpha(\bar d_\alpha,k,\infty, \kappa_\alpha,1, \varepsilon_\alpha)$, and $\mathcal{A}_\psi = \mathcal{A}_\psi(\bar d_\psi,k,\infty, \kappa_\psi,1, \varepsilon_\psi)$ 
     such that for any $f \in \mathcal{F}_{N,d,d',l,k,\kappa}$ and the associated functions 
     $\tilde\rho:\R^{\bar d_\rho} \to \R^{d_\rho}$, $\tilde\alpha:\R^{\bar d_\alpha} \to \R^{d_\alpha}$,
    and $\tilde\psi:\R^{\bar d_\psi} \to \R^{d_\psi}$ from Proposition~\ref{prop:repr_of_GPEqui_[0,1]_approx}, there exist ReLU neural networks 
    $\hat \rho_i, i \in [d_\rho]$, 
    $\hat \alpha_i, i \in [d_\alpha]$, 
    and $\hat \psi_i,i \in [d_\psi]$,
    of architectures $\mathcal{A}_\rho$, $\mathcal{A}_\alpha$, and $\mathcal{A}_\psi$, respectively, such that
    \begin{align}
   \|\tilde\psi_i - \hat \psi_i\|_{W^{1,\infty}((0,1)^{\bar d_\psi})} &\leq \varepsilon_\psi \qquad i \in [d_\psi],\label{eq:approx_of_psi}\\
    \|\tilde\alpha_i - \hat \alpha_i\|_{W^{1,\infty}((0,1)^{\bar d_\alpha})} &\leq \varepsilon_\alpha \qquad i \in [d_\alpha],\label{eq:approx_of_alpha}\\
        \|\tilde\rho_i  - \hat \rho_i\|_{W^{1,\infty}((0,1)^{\bar d_\rho})} &\leq \varepsilon_\rho \qquad i \in [l]\label{eq:approx_of_rho}.
    \end{align}
    Moreover, the architectures satisfy
    \begin{align*}
        &L(\mathcal{A}_\rho) \leq c_\rho \log_2\left(\left(\frac{\varepsilon}{3}\right)^{-k/(k-1)}\right),\\
        &M(\mathcal{A}_\rho) \leq c_\rho \left(\frac{\varepsilon}{3}\right)^{-\bar d_\rho/(k-1)}\cdot \log_2\left(\left(\frac{\varepsilon}{3}\right)^{-k/(k-1)}\right)^2,\\
        &L(\mathcal{A}_\alpha) \leq c_\alpha \log_2\left(\left(\frac{\varepsilon}{3N\bar d_\rho(\kappa_\rho +1)}\right)^{-k/(k-1)}\right),\\
        &M(\mathcal{A}_\alpha) \leq c_\alpha \left(\frac{\varepsilon}{3N\bar d_\rho(\kappa_\rho +1)}\right)^{-\bar d_\alpha/(k-1)}\cdot \log_2\left(\left(\frac{\varepsilon}{3N\bar d_\rho(\kappa_\rho +1)}\right)^{-k/(k-1)}\right)^2,\\
        &L(\mathcal{A}_\psi) \leq c_\psi \log_2\left(\left(\frac{\varepsilon}{3N^2 \bar d_\rho(\kappa_\rho +1)\bar d_\alpha(\kappa_\alpha+1)}\right)^{-k/(k-1)}\right),\\
        &M(\mathcal{A}_\psi) \leq c_\psi \left(\frac{\varepsilon}{3N^2 \bar d_\rho(\kappa_\rho +1)\bar d_\alpha(\kappa_\alpha+1)}\right)^{-\bar d_\psi/(k-1)}\\
        &\phantom{M(\mathcal{A}_\psi) \leq }\cdot \log_2\left(\left(\frac{\varepsilon}{3N^2 \bar d_\rho(\kappa_\rho +1)\bar d_\alpha(\kappa_\alpha+1)}\right)^{-k/(k-1)}\right)^2.
    \end{align*}
    Since the ReLU networks $(\hat\rho_n)_{n\in[d_\rho]}$ all have architecture $\mathcal{A}_\rho$, they share the same number of layers. By Proposition~\ref{prop:parallelize_relu_L}, Remark~\ref{rem:parallelize_multi_relu_L}, and Remark~\ref{rem:parallelization_of_nn_architectures}, we may parallelize both the networks and their architectures. This yields a single ReLU network
    $\hat \rho := \mathbf{P}(\hat \rho_1,\ldots, \hat \rho_{d_\rho}):\R^{\bar d_\rho} \to \R^{d_\rho}$ with architecture $\mathcal{A}_\rho^* = \mathbf{P}(\mathcal{A}_\rho,...,\mathcal{A}_\rho)$, satisfying 
    $L(\hat \rho) = L(\mathcal{A}^*_\rho) = L(\mathcal{A}_\rho)$ and $M(\hat \rho) \leq M(\mathcal{A}_\rho^*) = d_\rho M(\mathcal{A}_\rho)$.

    Analogously, we obtain ReLU networks $\hat\alpha$ and $\hat\psi$ with architectures $\mathcal{A}_\alpha^*$ and $\mathcal{A}_\psi^*$, respectively, such that
    $L(\hat \alpha)= L(\mathcal{A}^*_\alpha) = L(\mathcal{A}_\alpha)$, $M(\hat \alpha)\leq M(\mathcal{A}_\alpha^*) = d_\alpha M(\mathcal{A}_\alpha)$, and $L(\hat \psi) = L(\mathcal{A}_\psi^*) =  L(\mathcal{A}_\psi)$, $M(\hat \psi) = M(\mathcal{A}_\psi^*) = d_\psi M(\mathcal{A}_\psi)$.

    The network $\hat\phi$ is a special case. For any $f\in\mathcal{F}_{N,d,d',l,k,\kappa}$, the corresponding function $\tilde\phi$ from Proposition~\ref{prop:repr_of_GPEqui_[0,1]_approx} satisfies $\tilde\phi\equiv\phi\equiv 0$. Hence $\phi$ can be represented directly by the zero network $\hat\phi:\R^{\bar d_\phi}\to\R$ with trivial architecture $\mathcal{A}_\phi^*$ given by $\hat\phi(x)=\mathbf{0}_{\R\times\R^{\bar d_\phi}}x+0$ for which
    $L(\hat \phi) = L(\mathcal{A}_\phi^*) = 1$ and $M(\hat \phi) = M(\mathcal{A}_\phi^*) = 0$. 
    This completes the first part of the proof.\\\\
    \textit{Part 2:}\\
    In the second part of the proof, we show that the function $\hat g$, defined in~\eqref{eq:def_tilde_g_total_error} using the ReLU neural networks from above, approximates $\tilde g$ defined in~\eqref{eq:def_g_total_error} up to error $\varepsilon$.
    In general, for continuous functions $f,g:\R^N\to\R^M$ and any $\varepsilon>0$, continuity implies
    \begin{align}\label{eq:continuity_open_interval_bound}
     \sup_{x \in  (0,1)^N}\|f(x) -g(x)\|_\infty \leq \varepsilon \implies \max_{x \in  [0,1]^N}\|f(x) -g(x)\|_\infty \leq \varepsilon.
    \end{align}
    We first observe that for all $(x,a)\in K$
    \begin{align}
        &\sum_{i \in \mathcal{N}(j)}\tilde\psi(x_j, a_{i,j}, x_{i}) \in [0,1]^{d_\psi}, & j \in [N]. \label{eq:psi_in_01}
    \end{align}
    At first glance, this may seem unexpected. By definition (see~\eqref{eq:def_of_tilde_psi}),
    $$
        \tilde \psi_{i,j,k}((x_1, \ldots, x_d, r^x), (e_1,\ldots, e_{d'}), (y_1,\ldots,y_d,r^y)) = \underbrace{\iota_{i-1, \delta'}(r^y)}_{\in [0,1]} \underbrace{\iota_{j-1, \delta'}(r^x)}_{\in [0,1]} e_k,
    $$
    for $ i,j \in [M], k \in [d']$.
    So each component of $\tilde\psi$ lies in $[0,1]$ whenever the edge features satisfy $a\in[0,1]^{d'}$. One might therefore expect
    $$
    \sum_{i \in \mathcal{N}(j)}\tilde\psi(x_j, a_{i,j}, x_{i}) \in \big[0,|\mathcal{N}(j)|\big]^{d_\psi} \subseteq [0,N]^{d_\psi}.
    $$ 
    However, recall that $\tilde\psi$ is a $k$-times continuously differentiable approximation of the measurable function $\psi$ from Proposition~\ref{prop:repr_of_GPEqui}. By Proposition~\ref{prop:repr_of_GPEqui_[0,1]_approx}, the functions $\psi$ and $\tilde\psi$ coincide exactly on $\mathfrak{A}\cap\mathfrak{B}$, since the node features lie in the region where the approximation is exact and the last coordinates $x_{i,d+1}$, $i\in[N]$, belong to distinct intervals $I_{k-1}$, $k\in[M]$.
    Consequently, on $\mathfrak{A}\cap\mathfrak{B}$ the tensor $\tilde\psi(x_j,a_{i,j},x_i)\in\R^{M\times M\times d'}$ can be nonzero only at the index
    $(h(x_{i,d+1}), h(x_{j,d+1}),k), k \in [d']$ 
    where it takes the values $a_{i,j,k},k\in[d']$.
    In particular, for all $(i^*,j^*)\neq (i, j)$, and all $k \in [d']$, $\tilde\psi(x_{j^*},a_{i^*,j^*},x_{i^*})_{h(x_{i,d+1}), h(x_{j,d+1}),k} = 0$. 
    Thus, for each component $j \in [d_\psi]$, all but one summand in the neighborhood sum must vanish. 
    Therefore,
    \begin{align*}
        &\sum_{i \in \mathcal{N}(j)}\tilde\psi(x_j, a_{i,j}, x_{i}) \in [0,1]^{d_\psi}, & j \in [N].
    \end{align*}
    With analogous reasoning, for $(x,a) \in K$ we also obtain
    \begin{align}
        &\sum_{j \in [N]} \tilde\alpha \left(x_j,\sum_{i \in \mathcal{N}(j)} \tilde\psi \left(x_j,a_{i,j}, x_i\right)\right) \in [0,1]^{d_\alpha}.\label{eq:alpha_in_01}
    \end{align}
    Since $\tilde\phi \equiv 0$, it follows immediately that
    \begin{align*}
        &\sum_{i \in \mathcal{N}(k)} \tilde\phi \left(x_k, a_{i,k}, x_i\right) \in [0,1], & k \in [N].
    \end{align*}
    We now estimate
    \begin{align*}
        &\max_{(x,a) \in K }\| \tilde g(x,a) - \hat g(x,a) \|_{\infty} =\\
        &= \max_{(x,a) \in K } 
        \max_{k \in [N]}\left\|\tilde\rho \left(x_k, \sum_{i \in \mathcal{N}(k)} \tilde\phi \left(x_k, a_{i,k}, x_i\right), \sum_{j \in [N]} \tilde\alpha \left(x_j,\sum_{i \in \mathcal{N}(j)} \tilde\psi  \left(x_j,a_{i,j}, x_i\right)\right)\right) \right.\\ &\phantom{=+}\left. - \hat \rho \left(x_k, \sum_{i \in \mathcal{N}(k)}\hat \phi \left(x_k, a_{i,k}, x_i\right), \sum_{j \in [N]}\hat \alpha \left(x_j,\sum_{i \in \mathcal{N}(j)} \hat \psi  \left(x_j,a_{i,j}, x_i\right)\right)\right) \right\|_\infty\\
        &\leq \max_{(x,a) \in K } 
        \max_{k \in [N]}\left\|\tilde\rho \left(x_k, \sum_{i \in \mathcal{N}(k)} \tilde\phi \left(x_k, a_{i,k}, x_i\right), \sum_{j \in [N]} \tilde\alpha \left(x_j,\sum_{i \in \mathcal{N}(j)} \tilde\psi  \left(x_j,a_{i,j}, x_i\right)\right)\right) \right.\\ &\phantom{=+}\left. - \hat \rho \left(x_k, \sum_{i \in \mathcal{N}(k)} \tilde\phi \left(x_k, a_{i,k}, x_i\right), \sum_{j \in [N]} \tilde\alpha \left(x_j,\sum_{i \in \mathcal{N}(j)}  \tilde\psi  \left(x_j,a_{i,j}, x_i\right)\right)\right) \right\|_\infty\\
        &\phantom{\leq}+ \max_{(x,a) \in K } 
        \max_{k \in [N]}\left\|\hat \rho \left(x_k, \sum_{i \in \mathcal{N}(k)} \tilde\phi \left(x_k, a_{i,k}, x_i\right), \sum_{j \in [N]} \tilde\alpha \left(x_j,\sum_{i \in \mathcal{N}(j)} \tilde\psi  \left(x_j,a_{i,j}, x_i\right)\right)\right) \right.\\ &\phantom{=+}\left. - \hat \rho \left(x_k, \sum_{i \in \mathcal{N}(k)}\hat \phi \left(x_k, a_{i,k}, x_i\right), \sum_{j \in [N]} \tilde\alpha \left(x_j,\sum_{i \in \mathcal{N}(j)} \tilde\psi \left(x_j,a_{i,j}, x_i\right)\right)\right) \right\|_\infty\\
        &\phantom{\leq}+ \max_{(x,a) \in K } 
        \max_{k \in [N]}\left\|\hat \rho \left(x_k, \sum_{i \in \mathcal{N}(k)} \hat \phi \left(x_k, a_{i,k}, x_i\right), \sum_{j \in [N]} \tilde\alpha \left(x_j,\sum_{i \in \mathcal{N}(j)} \tilde\psi  \left(x_j,a_{i,j}, x_i\right)\right)\right) \right.\\ &\phantom{=+}\left. - \hat \rho \left(x_k, \sum_{i \in \mathcal{N}(k)}\hat \phi \left(x_k, a_{i,k}, x_i\right), \sum_{j \in [N]} \hat \alpha \left(x_j,\sum_{i \in \mathcal{N}(j)} \tilde\psi \left(x_j,a_{i,j}, x_i\right)\right)\right)\right\|_\infty\\
        &\phantom{\leq}+ \max_{(x,a) \in K } 
        \max_{k \in [N]}\left\|\hat \rho \left(x_k, \sum_{i \in \mathcal{N}(k)} \hat \phi \left(x_k, a_{i,k}, x_i\right), \sum_{j \in [N]} \hat \alpha \left(x_j,\sum_{i \in \mathcal{N}(j)} \tilde\psi  \left(x_j,a_{i,j}, x_i\right)\right)\right) \right.\\ &\phantom{=+}\left. - \hat \rho \left(x_k, \sum_{i \in \mathcal{N}(k)}\hat \phi \left(x_k, a_{i,k}, x_i\right), \sum_{j \in [N]} \hat \alpha \left(x_j,\sum_{i \in \mathcal{N}(j)} \hat \psi \left(x_j,a_{i,j}, x_i\right)\right)\right) \right\|_\infty,
    \end{align*}
    where we used the triangle inequality to decompose the difference into four terms.
    For the first term, we obtain
    \begin{align*}
        &\max_{(x,a) \in K } 
        \max_{k \in [N]}\left\|\tilde\rho \left(x_k, \sum_{i \in \mathcal{N}(k)} \tilde\phi \left(x_k, a_{i,k}, x_i\right), \sum_{j \in [N]} \tilde\alpha \left(x_j,\sum_{i \in \mathcal{N}(j)} \tilde\psi  \left(x_j,a_{i,j}, x_i\right)\right)\right) \right.\\ &\phantom{=+}\left. - \hat \rho \left(x_k, \sum_{i \in \mathcal{N}(k)} \tilde\phi \left(x_k, a_{i,k}, x_i\right), \sum_{j \in [N]} \tilde\alpha \left(x_j,\sum_{i \in \mathcal{N}(j)}  \tilde\psi  \left(x_j,a_{i,j}, x_i\right)\right)\right) \right\|_\infty\\
        &\leq \max_{z \in [0,1]^{\bar d_\rho}} \max_{k \in [N]} \left\|\tilde\rho(z) - \hat \rho(z) \right\|_\infty\\
        &= \max_{z \in (0,1)^{\bar d_\rho}} \left\|\tilde\rho(z) - \hat \rho(z) \right\|_\infty \leq \varepsilon_\rho = \frac{\varepsilon}{3},
    \end{align*}
    where we used \eqref{eq:psi_in_01}, \eqref{eq:continuity_open_interval_bound}, and finally \eqref{eq:approx_of_rho}.
    For the second summand, since $\tilde\phi \equiv \phi \equiv 0$ is represented exactly (not merely approximated) by the ReLU network $\hat\phi \equiv 0$, we obtain
    \begin{align*}
        &\max_{(x,a) \in K } 
        \max_{k \in [N]}\left\|\hat \rho \left(x_k, \sum_{i \in \mathcal{N}(k)} \tilde\phi \left(x_k, a_{i,k}, x_i\right), \sum_{j \in [N]} \tilde\alpha \left(x_j,\sum_{i \in \mathcal{N}(j)} \tilde\psi  \left(x_j,a_{i,j}, x_i\right)\right)\right) \right.\\ &\phantom{=}\left. - \hat \rho \left(x_k, \sum_{i \in \mathcal{N}(k)}\hat \phi \left(x_k, a_{i,k}, x_i\right), \sum_{j \in [N]} \tilde\alpha \left(x_j,\sum_{i \in \mathcal{N}(j)} \tilde\psi \left(x_j,a_{i,j}, x_i\right)\right)\right) \right\|_\infty = 0.
    \end{align*}
    For the third term, by Corollary~\ref{cor:rates_for_NN_global_lip} and  \eqref{eq:approx_of_rho} we know that the network $\hat\rho$ is $\bar d_\rho(\kappa_\rho+\varepsilon_\rho)$‑Lipschitz. We therefore obtain
    \begin{align*}
        &\max_{(x,a) \in K } 
        \max_{k \in [N]}\left\|\hat \rho \left(x_k, \sum_{i \in \mathcal{N}(k)} \hat \phi \left(x_k, a_{i,k}, x_i\right), \sum_{j \in [N]} \tilde\alpha \left(x_j,\sum_{i \in \mathcal{N}(j)} \tilde\psi  \left(x_j,a_{i,j}, x_i\right)\right)\right) \right.\\ &\phantom{=+}\left. - \hat \rho \left(x_k, \sum_{i \in \mathcal{N}(k)}\hat \phi \left(x_k, a_{i,k}, x_i\right), \sum_{j \in [N]} \hat \alpha \left(x_j,\sum_{i \in \mathcal{N}(j)} \tilde\psi \left(x_j,a_{i,j}, x_i\right)\right)\right)\right\|_\infty\\
        &\leq \max_{(x,a) \in K } 
        \max_{k \in [N]} \bar d_\rho (\kappa_\rho + \varepsilon_\rho)\left\|\left(x_k, \sum_{i \in \mathcal{N}(k)} \hat \phi \left(x_k, a_{i,k}, x_i\right), \sum_{j \in [N]} \tilde\alpha \left(x_j,\sum_{i \in \mathcal{N}(j)} \tilde\psi  \left(x_j,a_{i,j}, x_i\right)\right)\right) \right.\\ &\phantom{=+}\left. - \left(x_k, \sum_{i \in \mathcal{N}(k)}\hat \phi \left(x_k, a_{i,k}, x_i\right), \sum_{j \in [N]} \hat \alpha \left(x_j,\sum_{i \in \mathcal{N}(j)} \tilde\psi \left(x_j,a_{i,j}, x_i\right)\right)\right)\right\|_\infty\\
        &= \max_{(x,a) \in K } \bar d_\rho (\kappa_\rho + \varepsilon_\rho) 
        \left\|\left(\sum_{j \in [N]} \tilde\alpha \left(x_j,\sum_{i \in \mathcal{N}(j)} \tilde\psi  \left(x_j,a_{i,j}, x_i\right)\right)\right)
        \right.\\ &\phantom{=+}\left.- \left(\sum_{j \in [N]} \hat \alpha \left(x_j,\sum_{i \in \mathcal{N}(j)} \tilde\psi \left(x_j,a_{i,j}, x_i\right)\right)\right)\right\|_\infty\\
        &\leq \max_{(x,a) \in K } \bar d_\rho (\kappa_\rho + \varepsilon_\rho)
         N \max_{j \in [N]}\left\| \tilde\alpha \left(x_j,\sum_{i \in \mathcal{N}(j)} \tilde\psi  \left(x_j,a_{i,j}, x_i\right)\right) - \hat \alpha \left(x_j,\sum_{i \in \mathcal{N}(j)} \tilde\psi \left(x_j,a_{i,j}, x_i\right)\right) \right\|_\infty\\
         &\leq  \bar d_\rho (\kappa_\rho + \varepsilon_\rho)
         N \max_{z \in [0,1]^{\bar d_\alpha}} \left\| \tilde\alpha(z) - \hat \alpha(z)\right\|_\infty \\
         &\leq  \bar d_\rho (\kappa_\rho + \varepsilon_\rho)
         N \max_{z \in (0,1)^{\bar d_\alpha}} \left\| \tilde\alpha(z) - \hat \alpha(z)\right\|_\infty \leq \bar d_\rho (\kappa_\rho + \varepsilon_\rho)
         N \varepsilon_\alpha \leq \bar d_\rho (\kappa_\rho + \frac{\varepsilon}{3})N \frac{\varepsilon}{3N\bar d_\rho(\kappa_\rho +1)} \leq \frac{\varepsilon}{3},
    \end{align*}
    where we used \eqref{eq:alpha_in_01}, \eqref{eq:continuity_open_interval_bound}, \eqref{eq:approx_of_alpha}, and the definitions of $\varepsilon_\rho$ and $\varepsilon_\alpha$.
    Similarly, for the fourth term, Corollary~\ref{cor:rates_for_NN_global_lip} and \eqref{eq:approx_of_alpha} ensure that $\hat \alpha$ is $\bar d_\alpha(\kappa_\alpha + \varepsilon_\alpha)$ Lipschitz. 
    We obtain
    \begin{align*}
        &\max_{(x,a) \in K } 
        \max_{k \in [N]}\left\|\hat \rho \left(x_k, \sum_{i \in \mathcal{N}(k)} \hat \phi \left(x_k, a_{i,k}, x_i\right), \sum_{j \in [N]} \hat \alpha \left(x_j,\sum_{i \in \mathcal{N}(j)} \tilde\psi  \left(x_j,a_{i,j}, x_i\right)\right)\right) \right.\\ &\phantom{=}\left. - \hat \rho \left(x_k, \sum_{i \in \mathcal{N}(k)}\hat \phi \left(x_k, a_{i,k}, x_i\right), \sum_{j \in [N]} \hat \alpha \left(x_j,\sum_{i \in \mathcal{N}(j)} \hat \psi \left(x_j,a_{i,j}, x_i\right)\right)\right) \right\|_\infty\\
        &\leq \max_{(x,a) \in K } \bar d_\rho(\kappa_\rho + \varepsilon_\rho)N  
        \max_{j \in [N]}\left\|\hat \alpha \left(x_j,\sum_{i \in \mathcal{N}(j)} \tilde\psi  \left(x_j,a_{i,j}, x_i\right)\right) - \hat \alpha \left(x_j,\sum_{i \in \mathcal{N}(j)} \hat \psi \left(x_j,a_{i,j}, x_i\right)\right) \right\|_\infty\\
        &\leq \max_{(x,a) \in K } \bar d_\rho(\kappa_\rho + \varepsilon_\rho)N 
        \max_{j \in [N]}\bar d_\alpha(\kappa_\alpha + \varepsilon_\alpha)\left\|\left(\sum_{i \in \mathcal{N}(j)} \tilde\psi  \left(x_j,a_{i,j}, x_i\right)\right) - \left(\sum_{i \in \mathcal{N}(j)} \hat \psi \left(x_j,a_{i,j}, x_i\right)\right) \right\|_\infty\\
        &\leq \max_{(x,a) \in K } \bar d_\rho(\kappa_\rho + \varepsilon_\rho)N 
        \bar d_\alpha(\kappa_\alpha + \varepsilon_\alpha)N\max_{j,i \in [N]}\left\|\tilde\psi  \left(x_j,a_{i,j}, x_i\right) - \hat \psi \left(x_j,a_{i,j}, x_i\right) \right\|_\infty\\
        &\leq \bar d_\rho(\kappa_\rho + \varepsilon_\rho)N^2 
        \bar d_\alpha(\kappa_\alpha + \varepsilon_\alpha)\varepsilon_\psi \leq \frac{\varepsilon}{3}.
    \end{align*}
    Combining all four estimates, we conclude that
    \begin{align*}
        \max_{(x,a) \in K 
         }\| \tilde g(x,a) - \hat g(x,a) \|_{\infty} \leq \frac{\varepsilon}{3} + 0 + \frac{\varepsilon}{3} + \frac{\varepsilon}{3} = \varepsilon.
    \end{align*}
\end{proof}

\begin{remark}\label{rem:simplify_constants}
    We can simplify the presentation of Proposition~\ref{prop:component_error_to_total_error} by reorganizing the constants that appear.
    First, note that $M=M(N, \delta)$ and $\delta'=\delta'(N, \delta, M(N,\delta)) = \delta'(N, \delta)$ depend only on $N$ and $\delta$.
    Next, the quantities $\bar d_\psi$, $d_\psi$, $\bar d_\alpha$, $d_\alpha$, and $\bar d_\rho$ from \eqref{eq:func_dimensions} depend only on $(N, d,d',\delta)$, while $d_\rho = l$ depends solely on $l$.
    Define
    \begin{align}\label{eq:const_c_2}
    \begin{split}
        &c_2(N,d,d',\delta) := \max(\bar d_\rho, \bar d_\alpha, \bar d_\psi),\\
        &c_3(N,d,d',\delta, \kappa, k) := 3N^2\bar d_\rho(\kappa_\rho +1)\bar d_\alpha(\kappa_\alpha+1),\\
        &c_1(N,d,d',l,\delta, \kappa,k) := \max\left(c_\rho, c_\alpha, c_\psi \right)\cdot\max\left(d_\rho, d_\alpha, d_\psi \right)c_3^{c_2},\\
        &c_1^*(N,d,d',l,\delta, \kappa,k) := c_1 \left(\frac{k}{k-1}\right)^2.
        \end{split}
    \end{align}
    Then, for $\mathcal{A} \in \{\mathcal{A}_\rho, \mathcal{A}_\alpha, \mathcal{A}_\phi, \mathcal{A}_\psi\}$
    \begin{align*}
        L(\mathcal{A}) &\leq c_1 \log_2\left(\left(\frac{\varepsilon}{c_3}\right)^{-k/(k-1)} \right) \leq c_1^* \log_2\left(\left(\frac{\varepsilon}{c_3}\right)^{-1} \right) ;\\
        M(\mathcal{A}) &\leq c_1 \left(\frac{\varepsilon}{c_3}\right)^{-c_2/(k-1)} \left(\log_2\left(\left(\frac{\varepsilon}{c_3}\right)^{-k/(k-1)}\right)\right)^2 \leq c_1^* \left(\frac{\varepsilon}{c_3}\right)^{-c_2/(k-1)} \left(\log_2\left(\left(\frac{\varepsilon}{c_3}\right)^{-1}\right)\right)^2.
    \end{align*}
    Furthermore, since $\varepsilon \leq 1/2$, we have $\log_2(1/\varepsilon) \geq 1$. 
    Define
    \begin{align*}
        c_1^{**}(N,d,d',l,\delta,k,\kappa) := c_1^* c_3^{c_2/(k-1)}\left(1+ \log_2(c_3) \right)^2.
    \end{align*}
    Then
    \begin{align*}
        L(\mathcal{A}) &\leq c_1^* \log_2\left(\left(\frac{\varepsilon}{c_3}\right)^{-1} \right) = c_1^* \left(\log_2\left(\frac{1}{\varepsilon}\right)+ \log_2(c_3) \right) \leq c_1^* \left(\log_2\left(\frac{1}{\varepsilon}\right)+ \log_2\left(\frac{1}{\varepsilon}\right)\log_2(c_3) \right)\\ &= c_1^*(1+ \log_2(c_3)) \log_2\left(\frac{1}{\varepsilon}\right) \leq c_1^{**}\log_2\left(\frac{1}{\varepsilon} \right),
    \end{align*} 
    and similarly,
    \begin{align*}
        M(\mathcal{A}) &\leq c_1^* \left(\frac{\varepsilon}{c_3}\right)^{-c_2/(k-1)} \left(\log_2\left(\left(\frac{\varepsilon}{c_3}\right)^{-1}\right)\right)^2 = c_1^* c_3^{c_2/(k-1)}\left(\frac{1}{\varepsilon}\right)^{c_2/(k-1)} \left(\log_2\left(\frac{1}{\varepsilon}\right)+ \log_2(c_3)\right)^2\\ 
        &\leq c_1^* c_3^{c_2/(k-1)}\left(1+ \log_2(c_3) \right)^2\left(\frac{1}{\varepsilon}\right)^{c_2/(k-1)} \left(\log_2\left(\frac{1}{\varepsilon}\right)\right)^2 \leq c_1^{**} \left(\frac{1}{\varepsilon} \right)^{c_2/(k-1)}\left(\log_2\left(\frac{1}{\varepsilon}\right)\right)^2.
    \end{align*}   
\end{remark}

\section{Illustrations: Shortcomings of Simple Message Passing}\label{appendix:graph_examples}
This appendix collects concrete graph examples that illustrate two fundamental shortcomings of AC-GNNs: their inability to distinguish certain non-isomorphic graphs, and the resulting failure to compute certain graph-level or node-level quantities of interest. We further show, through the same examples, how these limitations can be overcome by augmenting node features with random features or by incorporating a global readout function.

Figure~\ref{fig:non_iso_graphs} shows two graphs with nodes of two types, $a$ and $b$. The graphs are not isomorphic, yet every node of type $a$ has an identical 1-hop neighborhood in both graphs — and the same holds for every node of type $b$ (Figure~\ref{fig:non_iso_graphs}b).
Because AC-GNNs update node representations by aggregating over local neighborhoods, nodes of the same type receive identical messages in every layer, and consequently all type-$a$ nodes (and all type-$b$ nodes) converge to the same representation regardless of the number of message-passing rounds. Starting from identical initial features $r_a, r_b \in \mathbb{R}^d$, no AC-GNN can therefore assign different labels to any two nodes of the same type, making the two graphs indistinguishable.

\begin{figure}[h]
  \centering
  \begin{minipage}[t]{0.48\textwidth}
    \centering
    \begin{tikzpicture}[scale=1.0,
      every node/.style={circle, draw, minimum size=5mm, inner sep=0pt, text centered}]
      \begin{scope}[xshift=-2cm]
        \node[fill=red!50]  (a1) at (0,1.5)   {a};
        \node[fill=red!50]  (a2) at (0,0)     {a};
        \node[fill=blue!50] (b1) at (0.5,0.75){b};
        \node[fill=red!50]  (a3) at (2,1.5)   {a};
        \node[fill=red!50]  (a4) at (2,0)     {a};
        \node[fill=blue!50] (b2) at (1.5,0.75){b};
        \draw (a1) -- (a2) -- (b1) -- (a1) -- cycle;
        \draw (a3) -- (a4) -- (b2) -- (a3) -- cycle;
        \draw (b1) -- (b2);
      \end{scope}
      \begin{scope}[xshift=2cm]
        \node[fill=red!50]  (a5) at (0,1.5){a};
        \node[fill=red!50]  (a6) at (0,0)  {a};
        \node[fill=blue!50] (b3) at (1,1.5){b};
        \node[fill=blue!50] (b4) at (1,0)  {b};
        \node[fill=red!50]  (a7) at (2,0)  {a};
        \node[fill=red!50]  (a8) at (2,1.5){a};
        \draw (a5) -- (a6) -- (b4) -- (b3) -- (a5) -- cycle;
        \draw (a8) -- (a7) -- (b4) -- (b3) -- (a8) -- cycle;
      \end{scope}
    \end{tikzpicture}
    \par\smallskip\raggedright
    {\small\textbf{(a)}~Two (undirected) graphs each with two types of nodes $a,b$.
    The graphs have the same amount of type $a$ and type $b$ nodes,
    but they are not isomorphic.}
  \end{minipage}
  \hfill
  \begin{minipage}[t]{0.48\textwidth}
    \centering
    \begin{tikzpicture}[scale=1.0,
      every node/.style={circle, draw, minimum size=5mm, inner sep=0pt, text centered}]
      \begin{scope}[xshift=-2cm]
        \node[fill=red!50]  (a9)  at (1,1.5){a};
        \node[fill=red!50]  (a10) at (0,0)  {a};
        \node[fill=blue!50] (b5)  at (2,0)  {b};
        \draw (a9)--(a10);
        \draw (a9)--(b5);
      \end{scope}
      \begin{scope}[xshift=2cm]
        \node[fill=blue!50] (b6)  at (1,1.5){b};
        \node[fill=red!50]  (a11) at (0,0)  {a};
        \node[fill=blue!50] (b7)  at (1,0)  {b};
        \node[fill=red!50]  (a12) at (2,0)  {a};
        \draw (b6)--(a11);
        \draw (b6)--(b7);
        \draw (b6)--(a12);
      \end{scope}
    \end{tikzpicture}
    \par\smallskip\raggedright
    {\small\textbf{(b)}~The (1-hop) neighborhoods of any type $a$ and type $b$ nodes.
    We see that for both types they are identical in both graphs.}
  \end{minipage}
  \caption{Two non-isomorphic graphs that cannot be distinguished by
    message-passing (AC) graph neural networks because all nodes of
    type $a$ and all nodes of type $b$ are ``tree-like equivalent''.
    Therefore, starting with identical node features $r_a, r_b \in \mathbb{R}^d$,
    all nodes of type $a$ (and all nodes of type $b$) can never end up with
    different node labels by means of traditional message-passing.
    }
  \label{fig:non_iso_graphs}
\end{figure}

The graphs in Figure~\ref{fig:readout_and_triangle} illustrate the usefulness of a global readout function and of augmented node features.

In Figure~\ref{fig:readout_and_triangle}a, the two graphs contain only type-$a$ nodes.
They are not isomorphic, since one has three nodes and the other four, yet every node has the same local neighborhood structure (each type-$a$ node is connected to two others of the same type).
Consequently, an AC-GNN produces identical node representations in both graphs.
If we allow a global readout that sums all node representations (denoted $r_a \in \mathbb{R}$), the left graph yields $3r_a$ and the right one $4r_a$, making them distinguishable.

In Figure~\ref{fig:readout_and_triangle}b, both graphs consist of six type-$a$ nodes.
They are clearly non-isomorphic, although in each graph every node has two type-$a$ neighbors.
Thus, an AC-GNN produces identical node representations and cannot approximate a function that detects triangles (it labels nodes with $1$ if they belong to a triangle and $0$ otherwise).
If we instead assign random features $R_i$ to nodes $i=1,\ldots,6$ (for example, i.i.d.\ uniform on $[0,1]$), then with probability one all node features are distinct; see Figure~\ref{fig:detect_triangle}a.
The resulting ``unrolling trees'' rooted at each node differ, and a message-passing GNN can, in principle, detect whether a feature $R_i$ reappears at specific depths in the tree of node $i$, which is necessary for identifying whether $i$ lies in a triangle; see Figure~\ref{fig:detect_triangle}b and \ref{fig:detect_triangle}c.

\begin{figure}[h]
  \centering
  \begin{minipage}[t]{0.48\textwidth}
    \centering
    \begin{tikzpicture}[scale=1.0,
      every node/.style={circle, draw, minimum size=5mm, inner sep=0pt, text centered}]
      \begin{scope}[xshift=-2cm]
        \node[fill=red!50] (a1) at (0,0)   {a};
        \node[fill=red!50] (a2) at (2,0)   {a};
        \node[fill=red!50] (a3) at (1,1.5) {a};
        \draw (a1) -- (a2) -- (a3) -- (a1);
      \end{scope}
      \begin{scope}[xshift=2cm]
        \node[fill=red!50] (a1) at (0,0)   {a};
        \node[fill=red!50] (a2) at (0,1.5) {a};
        \node[fill=red!50] (a3) at (2,1.5) {a};
        \node[fill=red!50] (a4) at (2,0)   {a};
        \draw (a1) -- (a2) -- (a3) -- (a4) -- (a1);
      \end{scope}
    \end{tikzpicture}
    \par\smallskip
    \raggedright
    {\small\textbf{(a)}~Two graphs with only one node type. A message-passing GNN
    cannot assign different node labels, because the neighborhoods of all nodes are
    ``tree-like equivalent''. A global readout function with access to all nodes in
    the graph can distinguish nodes from both graphs, for example by summing all
    node features.}
  \end{minipage}
  \hfill
  \begin{minipage}[t]{0.48\textwidth}
    \centering
    \begin{tikzpicture}[scale=1.0,
      every node/.style={circle, draw, minimum size=5mm, inner sep=0pt, text centered}]
      \begin{scope}[xshift=-2cm]
        \node[fill=red!50] (a1) at (0,0.75)  {a};
        \node[fill=red!50] (a2) at (0.5,1.5) {a};
        \node[fill=red!50] (a3) at (1.5,1.5) {a};
        \node[fill=red!50] (a4) at (2,0.75)  {a};
        \node[fill=red!50] (a5) at (1.5,0)   {a};
        \node[fill=red!50] (a6) at (0.5,0)   {a};
        \draw (a1)--(a2)--(a6)--(a1);
        \draw (a3)--(a4)--(a5)--(a3);
      \end{scope}
      \begin{scope}[xshift=2cm]
        \node[fill=red!50] (a1) at (0,0.75)  {a};
        \node[fill=red!50] (a2) at (0.5,1.5) {a};
        \node[fill=red!50] (a3) at (1.5,1.5) {a};
        \node[fill=red!50] (a4) at (2,0.75)  {a};
        \node[fill=red!50] (a5) at (1.5,0)   {a};
        \node[fill=red!50] (a6) at (0.5,0)   {a};
        \draw (a1)--(a2)--(a3)--(a4)--(a5)--(a6)--(a1);
      \end{scope}
    \end{tikzpicture}
    \par\smallskip
    \raggedright
    {\small\textbf{(b)}~Two undirected graphs with six nodes of type $a$. One graph
    consists of two disconnected triangles, the other forms a hexagon. Since the
    neighborhoods of all nodes are ``tree-like equivalent'' a message-passing GNN
    cannot distinguish nodes from the two graphs, which would, for example, be
    necessary in order to detect triangles.}
  \end{minipage}
  \caption{Two pairs of graphs that highlight the necessity of additional expressive
    power via global readout functions or random node features.
    }
  \label{fig:readout_and_triangle}
\end{figure}
\begin{figure}[h]
  \centering
  \begin{minipage}[t]{0.29\textwidth}
    \centering
    \begin{tikzpicture}[scale=0.95,
      every node/.style={circle, draw, minimum size=5mm, inner sep=0pt, text centered}]
      \begin{scope}[xshift=-1.2cm]
        \node[fill=red!50]    (a1) at (0,0.75)  {$r_2$};
        \node[fill=blue!50]   (a2) at (0.4,1.5) {$r_3$};
        \node[fill=purple!50] (a3) at (1.1,1.5) {$r_4$};
        \node[fill=orange!50] (a4) at (1.5,0.75){$r_5$};
        \node[fill=brown!50]  (a5) at (1.1,0)   {$r_6$};
        \node[fill=green!50]  (a6) at (0.4,0)   {$r_1$};
        \draw (a1)--(a2)--(a6)--(a1);
        \draw (a3)--(a4)--(a5)--(a3);
      \end{scope}
      \begin{scope}[xshift=1.2cm]
        \node[fill=red!50]    (a1) at (0,0.75)  {$r_2$};
        \node[fill=blue!50]   (a2) at (0.4,1.5) {$r_3$};
        \node[fill=purple!50] (a3) at (1.1,1.5) {$r_4$};
        \node[fill=orange!50] (a4) at (1.5,0.75){$r_5$};
        \node[fill=brown!50]  (a5) at (1.1,0)   {$r_6$};
        \node[fill=green!50]  (a6) at (0.4,0)   {$r_1$};
        \draw (a1)--(a2)--(a3)--(a4)--(a5)--(a6)--(a1);
      \end{scope}
    \end{tikzpicture}
    \par\smallskip
    \raggedright
    {\small\textbf{(a)}~The graphs from Figure~\ref{fig:readout_and_triangle} (b)
    after replacing the (uninformative) node type by random features $r_1,\ldots,r_6$.}
  \end{minipage}
  \hfill
  \begin{minipage}[t]{0.34\textwidth}
    \centering
    \begin{tikzpicture}[scale=0.95,
      every node/.style={circle, draw, minimum size=5mm, inner sep=0pt, text centered}]
      \node[fill=green!50]  (a1)  at (2.45,2)   {$r_1$};
      \node[fill=red!50]    (a21) at (1.05,1.33) {$r_2$};
      \node[fill=blue!50]   (a22) at (3.85,1.33) {$r_3$};
      \node[fill=green!50]  (a31) at (0.35,0.66) {$r_1$};
      \node[fill=blue!50]   (a32) at (1.75,0.66) {$r_3$};
      \node[fill=green!50]  (a33) at (3.15,0.66) {$r_1$};
      \node[fill=red!50]    (a34) at (4.55,0.66) {$r_2$};
      \node[fill=red!50]    (a41) at (0,0)        {$r_2$};
      \node[fill=blue!50]   (a42) at (0.7,0)      {$r_3$};
      \node[fill=green!50]  (a43) at (1.4,0)      {$r_1$};
      \node[fill=red!50]    (a44) at (2.1,0)      {$r_2$};
      \node[fill=red!50]    (a45) at (2.8,0)      {$r_2$};
      \node[fill=blue!50]   (a46) at (3.5,0)      {$r_3$};
      \node[fill=green!50]  (a47) at (4.2,0)      {$r_1$};
      \node[fill=blue!50]   (a48) at (4.9,0)      {$r_3$};
      \draw (a21)--(a1)--(a22);
      \draw (a31)--(a21)--(a32);
      \draw (a33)--(a22)--(a34);
      \draw (a41)--(a31)--(a42);
      \draw (a43)--(a32)--(a44);
      \draw (a45)--(a33)--(a46);
      \draw (a47)--(a34)--(a48);
    \end{tikzpicture}
    \par\smallskip
    \raggedright
    {\small\textbf{(b)}~The ``unrolling tree'' up to level four of root node $r_1$
    in the left graph. Since $r_1$ appears on the fourth level, it is part of a
    triangle. Thanks to the random features, a message-passing GNN is now able
    to detect this.}
  \end{minipage}
  \hfill
  \begin{minipage}[t]{0.34\textwidth}
    \centering
    \begin{tikzpicture}[scale=0.95,
      every node/.style={circle, draw, minimum size=5mm, inner sep=0pt, text centered}]
      \node[fill=green!50]  (a1)  at (2.45,2)   {$r_1$};
      \node[fill=red!50]    (a21) at (1.05,1.33) {$r_2$};
      \node[fill=brown!50]  (a22) at (3.85,1.33) {$r_6$};
      \node[fill=green!50]  (a31) at (0.35,0.66) {$r_1$};
      \node[fill=blue!50]   (a32) at (1.75,0.66) {$r_3$};
      \node[fill=green!50]  (a33) at (3.15,0.66) {$r_1$};
      \node[fill=orange!50] (a34) at (4.55,0.66) {$r_5$};
      \node[fill=red!50]    (a41) at (0,0)        {$r_2$};
      \node[fill=brown!50]  (a42) at (0.7,0)      {$r_6$};
      \node[fill=red!50]    (a43) at (1.4,0)      {$r_2$};
      \node[fill=purple!50] (a44) at (2.1,0)      {$r_4$};
      \node[fill=red!50]    (a45) at (2.8,0)      {$r_2$};
      \node[fill=brown!50]  (a46) at (3.5,0)      {$r_6$};
      \node[fill=purple!50] (a47) at (4.2,0)      {$r_4$};
      \node[fill=brown!50]  (a48) at (4.9,0)      {$r_6$};
      \draw (a21)--(a1)--(a22);
      \draw (a31)--(a21)--(a32);
      \draw (a33)--(a22)--(a34);
      \draw (a41)--(a31)--(a42);
      \draw (a43)--(a32)--(a44);
      \draw (a45)--(a33)--(a46);
      \draw (a47)--(a34)--(a48);
    \end{tikzpicture}
    \par\smallskip
    \raggedright
    {\small\textbf{(c)}~The ``unrolling tree'' up to level four of root node $r_1$
    in the right graph. Since $r_1$ does not appear on the fourth level, it is
    not possible that $r_1$ is part of a triangle. Thanks to the random features,
    a message-passing GNN is now able to detect this.}
  \end{minipage}
  \caption{Assigning random features to the nodes ensures that the neighborhoods
    for the nodes in the graphs from
    Figure~\ref{fig:readout_and_triangle} (b) are not
    ``tree-like equivalent'' anymore. Therefore, message-passing GNNs now have a
    chance to detect whether a node is in a triangle or not.
    }
  \label{fig:detect_triangle}
\end{figure}

\newpage
\bibliography{references}

\end{document}